\def\eqref#1{equation~\ref{#1}}
\def\1{\bm{1}}
\DeclareMathAlphabet{\mathsfit}{\encodingdefault}{\sfdefault}{m}{sl}
\SetMathAlphabet{\mathsfit}{bold}{\encodingdefault}{\sfdefault}{bx}{n}
\def\gB{{\mathcal{B}}}
\def\gC{{\mathcal{C}}}
\def\gD{{\mathcal{D}}}
\def\gO{{\mathcal{O}}}
\def\gS{{\mathcal{S}}}
\def\gX{{\mathcal{X}}}
\def\gY{{\mathcal{Y}}}
\newcommand{\E}{\mathbb{E}}
\newcommand{\R}{\mathbb{R}}
\DeclareMathOperator*{\argmax}{arg\,max}
\DeclareMathOperator*{\argmin}{arg\,min}
\DeclareMathOperator{\sign}{sign}
\setlist{leftmargin=4mm,itemsep=0pt,topsep=0pt}
\def\blfootnote{\gdef\@thefnmark{}\@footnotetext}
\colorlet{alternateRowColor}{magenta!10}
\newcommand{\norm}[1]{\ensuremath{\| #1 \|}}
\newcommand{\twonorm}[1]{\norm{#1}_2}
\newcommand{\normsq}[1]{\ensuremath{\norm{#1}^2}}
\newcommand{\twonormsq}[1]{\ensuremath{\twonorm{#1}^2}}
\newcommand{\matrixnorm}[2]{\ensuremath{\norm{#1}_{#2}}}
\newcommand{\matrixnormsq}[2]{\ensuremath{\normsq{#1}_{#2}}}
\renewcommand{\P}{\mathbb{P}}
\newcommand{\calN}{\mathcal{N}}
\newcommand{\calR}{\mathcal{R}}
\renewcommand{\P}{\mathbb{P}}
\newcommand{\posutil}{u}
\newcommand{\costmatrix}{\Sigma}
\newcommand{\costset}{\gC}
\newcommand{\costfunc}{c}
\newcommand{\zoloss}{\ell_{0-1}}
\newcommand{\hingeloss}{\ell_{\textrm{hinge}}}
\newcommand{\shingeloss}{\ell_{\textrm{s-hinge}}}
\newcommand{\shingeemprisk}{\hat R_{\textrm{s-hinge}}}
\newcommand{\shingepoprisk}{R_{\textrm{s-hinge}}}
\newcommand{\uinv}{\ensuremath{u_{*}}}
\newcommand{\dualnorm}[1]{\|#1\|_{*}}
\newcommand{\psdnorm}[2]{\matrixnorm{#1}{#2}}
\newcommand{\psdnormsq}[2]{\matrixnormsq{#1}{#2}}
\newcommand{\dualpsdnorm}[2]{\psdnorm{#1}{*,#2}}
\newcommand{\mincostscale}[1]{\sigma_{#1\ell}}
\newcommand{\maxcostscale}[1]{\sigma_{#1u}}
\newcommand{\gYhat}{{\hat{y}}}
\newcommand{\cinc}{{\costfunc\in\costset}}
\newtheorem{theorem}{Theorem}[section]
\newtheorem{lemma}[theorem]{Lemma}
\newtheorem{corollary}[theorem]{Corollary}
\newtheorem{proposition}[theorem]{Proposition}
\theoremstyle{definition}
\newtheorem{definition}[theorem]{Definition}
\renewcommand{\paragraph}[1]{{\textbf{#1}\,\,}}
\begin{document}

\twocolumn[
\icmltitle{One-Shot Strategic Classification Under Unknown Costs}

% It is OKAY to include author information, even for blind
% submissions: the style file will automatically remove it for you
% unless you've provided the [accepted] option to the icml2023
% package.

% List of affiliations: The first argument should be a (short)
% identifier you will use later to specify author affiliations
% Academic affiliations should list Department, University, City, Region, Country
% Industry affiliations should list Company, City, Region, Country

% You can specify symbols, otherwise they are numbered in order.
% Ideally, you should not use this facility. Affiliations will be numbered
% in order of appearance and this is the preferred way.
\icmlsetsymbol{equal}{*}

\begin{icmlauthorlist}
\icmlauthor{Elan Rosenfeld}{cmu}
\icmlauthor{Nir Rosenfeld}{tech}
%\icmlauthor{}{sch}
%\icmlauthor{}{sch}
\end{icmlauthorlist}

\icmlaffiliation{cmu}{Carnegie Mellon University}
\icmlaffiliation{tech}{Technion -- Israel Institute of Technology}

\icmlcorrespondingauthor{Elan Rosenfeld}{elan@cmu.edu}

% You may provide any keywords that you
% find helpful for describing your paper; these are used to populate
% the "keywords" metadata in the PDF but will not be shown in the document
\icmlkeywords{Machine Learning, ICML}

\vskip 0.3in
]

% this must go after the closing bracket ] following \twocolumn[ ...

% This command actually creates the footnote in the first column
% listing the affiliations and the copyright notice.
% The command takes one argument, which is text to display at the start of the footnote.
% The \icmlEqualContribution command is standard text for equal contribution.
% Remove it (just {}) if you do not need this facility.

\printAffiliationsAndNotice{}  % leave blank if no need to mention equal contribution
% \printAffiliationsAndNotice{\icmlEqualContribution} % otherwise use the standard text.

\begin{abstract}
The goal of strategic classification is to learn decision rules which are robust to strategic input manipulation. Earlier works assume that these responses are known; while some recent works handle unknown responses, they exclusively study online settings with repeated model deployments. 
% But there are many domains---particularly in public policy, a common motivating use-case---where multiple deployments are unrealistic, or where even a single bad round is undesirable.
But there are many domains---particularly in public policy, a common motivating use case---where multiple deployments are infeasible, or where even one bad round is unacceptable.
To address this gap, we initiate the formal study of \emph{one-shot} strategic classification under unknown responses, which requires committing to a single classifier once. Focusing on uncertainty in the users' cost function, we begin by proving that for a broad class of costs, even a small mis-estimation of the true cost can entail trivial accuracy in the worst case. In light of this, we frame the task as a minimax problem, aiming to minimize worst-case risk over an uncertainty set of costs. We design efficient algorithms for both the full-batch and stochastic settings, which we prove converge (offline) to the minimax solution at the rate of $\tilde{\mathcal{O}}(T^{-\nicefrac{1}{2}})$. Our analysis reveals important structure stemming from strategic responses, particularly the value of \emph{dual norm regularization} with respect to the cost function.\looseness=-1
\end{abstract}

\section{Introduction}

Across a multitude of domains, 
machine learning is increasingly being used to inform decisions about human users.
But when users stand to gain from certain predictive outcomes,
they may act to obtain favorable predictions by modifying their features. 
Since this can harm predictive performance,
learning becomes susceptible to \emph{Goodhart's law},
which states that ``when a measure becomes a target, it ceases to be a good measure'' \citep{goodhart1984problems}.
This natural tension between learning systems and their users applies broadly: loan approvals, admissions, hiring, insurance, and welfare benefits are all examples in which the system seeks predictions that are accurate, whereas users---irrespective of their true label---wish to be classified as positive.
% \looseness=-1

The field of \emph{strategic classification} \citep{bruckner2012static,hardt2016strategic}
studies learning in such settings, with the principal aim of learning classifiers that are robust to strategic user behavior.
However, most works in this field rely on the key assumption that the learner knows precisely how users would respond to any given classifier.
This is typically modeled as knowledge of the underlying \emph{cost function} $\costfunc(x,x')$
which defines the cost users incur for 
modifying features $x$ to become $x'$.
This assumption enables tractable learning, but it is unrealistic and it effectively 
takes the ``sting'' out of Goodhart's law:
%it is straightforward to show that,
%in principle, 
in a statistical sense,
the power to anticipate user responses completely nullifies the effect of users' gaming behavior on predictive performance (for radial basis cost functions; see \cref{apx:in_principle}).
% , though it requires preliminary notation which appears in the next section).\looseness=-1

Indeed, if we survey some well-known examples of Goodhart's law \citep[e.g.][]{chrystal2003goodhart,elton2004goodhart,fire2019over,teney2020value}, it becomes apparent that much of the policy challenge arises precisely from
\emph{not} knowing how users would respond.
This motivates us to instead focus on learning strategically robust classifiers under an \emph{unknown} cost function.
To cope with this uncertainty, we take a conservative approach and model the system as aiming to learn a classifier that is robust to all cost functions $\costfunc$ in some uncertainty set $\costset$,
which we think of as including all cost functions which are believed to be plausible (alternatively, it is a set which we can be confident will contain the true, unknown $\costfunc$).
Our approach is therefore doubly robust, providing guarantees under both the manipulation of inputs by strategic behavior and an adversarial choice of cost function.
We argue this is necessary:
we prove %under this model 
that if one optimizes for strategic classification under a single, fixed cost, then any discrepancy between that cost and the true cost
can result in dramatically reduced accuracy.
% \looseness=-1

While some works have studied strategic learning under unknown responses
\citep{dong2018strategic,ahmadi2021strategic,shao2023strategic,lechner2023strategic,harris2023strategic},
their focus is entirely on sequential learning settings.
This allows them to cope with response uncertainty via exploration:
deploying a series of models over time,
observing how users respond, and adapting accordingly.
Algorithms for such online settings are designed to ensure that regret decays sufficiently fast with the number of rounds---but they provide no guarantees on worst-case outcomes for any single deployment.
We observe that there are many realistic settings
in which multiple deployments are too costly or technically impossible
(e.g. financial regulation);
in which arbitrary exploration is unrealistic or unethical
(e.g. testing in education);
in which there is need for immediately beneficial short-term outcomes
(e.g. epidemic vaccination);
or in which even a single bad round could be very harmful
(e.g. environmental conservation).
% \looseness=-1

Motivated by such examples,
this work studies robust strategic learning
in a ``one-shot'' setting where the learner must commit to one single model at deployment.
In this setting, we recast our objective as optimizing for a classifier that minimizes the worst-case strategic risk over an uncertainty set $\costset$.
% of cost functions which is presumed to contain the true cost.
We devise two efficient algorithms, 
for the full-batch and stochastic settings, which provably converge to the minimax solution at the rate $\tilde{\mathcal{O}}(T^{-\nicefrac{1}{2}})$.
Notably, this rate is dimension independent for the typical cost functions---including $\ell_1$ and $\ell_2$ norms---and it is achieved despite the inner maximization being non-concave.
A key step in our approach is to adapt the \emph{strategic hinge loss} \citep{levanon2022generalized} to properly handle a large set of possible costs beyond the $\ell_2$ norm. As the strategic hinge loss is non-convex, we apply a regularization term that accounts for the unique structure of strategic response: our analysis uncovers that the ``correct'' form of regularization is in fact the \emph{dual norm of $\beta$ with respect to the cost}. We complement this with an updated generalization bound for this loss which corrects an error in that previous work.
% \looseness=-1

More broadly, our approach relies on the observation that strategic responses
induce a shift in the data distribution \citep{perdomo2020performative}.
Because different costs induce different shifts,
uncertainty over cost functions translates to uncertainty over test distributions.
This allows us to formulate our problem as one of \emph{distributionally robust optimization} (DRO) \citep{namkoong2016stochastic,duchi2021learning},
where the goal is to predict well on the worst-case distribution within some given set.
Whereas the typical DRO formulation defines the uncertainty set with respect to a more traditional probability divergence,
in our case the set of distributions is inherited from the structure of strategic responses
and includes all shifts that can result from strategic behavior under any $\costfunc \in \costset$.
% That is, in strategic classification the distribution shifts in response
% in response to the classifier.
% ---an instance of \emph{decision-dependent distribution shift} \citep{drusvyatskiy2023stochastic}.
This means that the entire \emph{set} of possible distributions becomes dependent on the classifier,
which is one of the primary challenges our work addresses.
% \looseness=-1

% \nnote{paper structure? if space...}

\subsection{Related work} \label{sec:related}

% \nnote{add tijana paper}

\paragraph{Strategic classification.}
Introduced in \citet{hardt2016strategic},
and based on earlier works
\citep{bruckner2009nash,bruckner2012static,grosshans2013bayesian},
the literature on strategic classification has since been growing steadily.
Focusing on supervised classification in the batch (offline) setting,
here we list a relevant subset.
Advances have been made on both statistical
\citep{zhang2021incentive,sundaram2021pac}
and algorithmic aspects of learning
\citep{levanon2021strategic},
but the latter lacks guarantees.
In contrast, our work provides efficient algorithms that are provably correct.
Efforts have also been made to extend learning beyond the basic setting of \citet{hardt2016strategic}.
These include:
accounting for users with
noisy estimates \citep{jagadeesan2021alternative},
missing information
\citep{ghalme2021strategic,bechavod2022information},
more general preferences
\citep{sundaram2021pac,levanon2022generalized, eilat2023strategic};
incorporating causal elements into learning
\citep{miller2020strategic,chen2023linear,horowitz2022causal,mendler2022anticipating},
and considering societal implications
\citep{milli2019social,levanon2021strategic,lechner2021learning}.
% Our work extends the conventional setting of strategic classification to account for uncertain costs.
% \looseness=-1

\paragraph{(Initially) unknown user responses.}
Several works have considered the case of inferring unknown user responses under different forms of strategic learning,
but in online or sequential settings.
\citet{dong2018strategic} bound the Stackelberg regret of online learning when both costs and features are adversarial,
but when only negatively-labeled users respond; \citet{harris2023strategic} bound a stronger form of regret when responses only come from positively-labeled users.
\citet{ahmadi2021strategic} propose a strategic variant of the perceptron and provide mistake bounds for $\ell_1$ norm with unknown diagonal scaling or $\ell_2$ norm multiplied by an unknown constant.
\citet{shao2023strategic} study learning under unknown ball manipulations with personalized radii,
and give mistake bounds and (interactive) sample complexity bounds for different informational structures.
\citet{lechner2023strategic} bound the sample and iteration complexity of learning under general, non-best response manipulation sets via repeated model deployments to infer the manipulation graph.
\citet{lin2023plug} learn under a misspecified response model,
inferred in a (nonadaptive) exploration phase from some class of models.
The analyses in these works better reflect reality in that the exact response cannot be known. But %, as we argue in the introduction,
online deployment and long-term regret minimization are not appropriate for many natural use cases of strategic classification, which motivates our investigation of the one-shot setting.
% \looseness=-1

\section{Preliminaries}
\paragraph{Notation.}
We study the problem of linear strategic classification on a distribution over inputs $x \in \gX\subseteq\R^d$ and labels $y\in\gY=\{\pm 1\}$ where the exact response of the test population is unknown.
We consider linear classifiers $\sign(\beta^\top x)$,
% We parameterize the linear classifier as a vector $\beta$,
optimized over some bounded set $\gB\subset \R^{d+1}$
(this includes a bias term which we leave implicit and which is not included in the vector norm).
% , writing the classifier's output simply as $\beta^\top x$).
To model users' responses, we consider the typical setup of a \emph{cost function} $\costfunc(x, x')$ which defines the cost for an agent to change their features from $x$ to $x'$. Together with a utility $\posutil \geq 0$ gained from a positive classification, this cost determines the strategic response of a rational agent to a classifier $\beta$ via
% \looseness=-1
% \vspace{-3pt}
\begin{align}
    \label{eq:best_response}
    x(\beta) := \argmax\nolimits_{x'} \left[ \mathbf{1}\{\beta^\top x' \geq 0\} \cdot u - c(x,x') \right].
\end{align}
% \vspace{-3pt}
Thus, an agent will move only if it would change their classification from negative to positive, and only if $c(x, x') < \posutil$. We handle non-uniqueness of the argmax by breaking ties arbitrarily, but we follow convention by assuming no strategic response if the net utility is exactly 0. We let $\delta(x; \beta) := x(\beta) - x$ denote the movement of an agent, and we suppress dependence of $\delta$ on $x, \beta$ where clear from context.
\looseness=-1

\paragraph{Form of the cost function.} We study cost functions that can be written as $\costfunc(x, x') = \phi(\norm{x' - x})$ for a norm $\norm{\cdot}$ and non-decreasing function $\phi:\R_{\geq 0}\to\R_{\geq 0}$.
This includes the $\ell_2$-norm (by far the most common cost, sometimes squared), but it is also much more general---it allows for different non-linear transformations which may better reflect real-world costs, such as $\phi(r) = \ln (1+r)$, and we allow $\norm{\cdot}$ to denote any differentiable and monotonic norm, which includes all $p$-norms (where $p\in[1,\infty]$).
This significantly expands upon the costs discussed in the literature thus far.
\looseness=-1

\paragraph{Parameterizing the space of costs.}
Recall that we are interested in robust prediction when we cannot know the exact strategic response.
Keeping the assumption of rational behavior, this naturally points to an unknown cost function as a primary source of this uncertainty.
We parameterize this uncertainty via an unknown positive definite (PD) matrix $\costmatrix \succ \mathbf{0}$ in $\R^{d\times d}$ which scales the relative costs per input dimension, denoting the induced norm as $\psdnorm{\cdot}{\costmatrix}$.
For the 2-norm, this is the standard PD norm given by
$\psdnorm{x}{\costmatrix} := \sqrt{x^\top \costmatrix x}$,
but we define it for general norms as $\psdnorm{x}{\costmatrix} := \norm{\costmatrix^{1/2} x}$. Hence, any cost function $\costfunc$ is uniquely determined by its parameterization $\costmatrix$ as $c(x, x') = \phi\left(\|\Sigma^{1/2}(x'-x)\|\right)$; we will use these two variables interchangeably. The other two factors in strategic response are the positive utility $u$ and the monotone transform $\phi$: for reasons that will soon be made clear, we only require knowledge of the maximum value $\uinv$ such that $\phi(\uinv) \leq u$ (this is clearly satisfied for known $\phi$ and $u$, as is typically assumed).
Note this value need not be shared among users, and it suffices to know an interval in which it lies---but for simplicity we treat it as fixed. 
% \looseness=-1

\paragraph{Encoding uncertainty.}
Since the true cost is not known,
and since we cannot estimate it in an online fashion, we instead assume a system-specified \textit{uncertainty set} $\costset$, defined as a compact, convex set of possible costs $\costfunc$ which is expected to contain the true cost. The goal of our analysis will be to derive strategies for efficiently identifying a classifier which ensures optimal (and boundable) worst-case test performance over \emph{all} costs $\costfunc \in \costset$, and therefore also bounded error under the true cost. Notably, this also means that \emph{even if the true cost changes over time}, our error bound will hold so long as the cost remains within $\costset$. In practice we want $\costset$ to be broad enough that we can be confident it contains the 
test-time cost%
% cost we will see at deployment
---but we will also see that our convergence guarantees scale inversely with the diameter of this set, so it should be selected to be no larger than necessary.

\subsection{Strategic Learning Under a Single, Known Cost}
As a first step towards learning robustly under all costs in $\costset$, it will be useful to first consider learning under a single fixed cost.
For a \emph{known} cost $c$, the typical goal would be to minimize the 0-1 loss under strategic response with this cost: $\zoloss^c(\beta^\top x, y) := \mathbf{1}\{\sign(\beta^\top (x+\delta)) \neq y\}$. It is common to instead consider a more easily optimized surrogate such as the hinge loss $\hingeloss(\beta^\top x, y) := \max\{0, 1-y\beta^\top (x+\delta)\}$, but the discontinuous nature of $\delta(x; \beta)$ w.r.t. $\beta$ means that optimization is still intractable. Instead, we make use of the recently proposed \emph{strategic hinge loss} \citep{levanon2022generalized} 
which augments the standard hinge loss with an additional term to account for this discontinuity:
$\shingeloss(\beta^\top x, y) := \max\{0, 1-y(\beta^\top x + 2\twonorm{\beta} )\}$ (note this does not explicitly include $\delta$).
% \looseness=-1

Unfortunately, even this relaxation poses difficulties. Firstly, the objective is non-convex---though \citet{levanon2022generalized} show that for known costs it often learns reasonable classifiers in practice,
once we transition to unknown costs it will become clear that having a guaranteed sub-optimality bound, which requires convexity, is important.
Second, the additional term $2\twonorm{\beta}$ captures the effect of strategic response only under the standard $\ell_2$-norm cost.\footnote{While \citet{levanon2022generalized} do discuss more general costs, they give only a generic description.}
% with no precise formula for a instantiating cost-specific strategic hinge losses.}

\paragraph{Cost-aware strategic hinge.}
For our setting, we derive a more general strategic hinge loss that applies to the broader class of costs. This loss admits a natural form which relies on the \emph{dual norm} of $\beta$ with respect to the cost function:
% \looseness=-1
\begin{definition}
    The \emph{$\costmatrix$-transformed dual norm} of $\beta$ is denoted $\dualpsdnorm{\beta}{\costmatrix} := \sup_{\psdnorm{v}{\costmatrix}=1} \beta^\top v = \dualnorm{\costmatrix^{-\nicefrac{1}{2}} \beta}$. We may leave dependence on $\costmatrix$ implicit, writing simply $\dualnorm{\beta}$.
\end{definition}
\begin{definition}
    The \emph{cost-dependent strategic hinge loss} is: % defined as
    \begin{align} \label{eq:cost-dep_stratgic_hinge}
        \shingeloss^c(\beta^\top x, y) &:= \max\{0, 1 - y(\beta^\top x + \uinv \dualnorm{\beta})\}.
    \end{align}
\end{definition}

Note our proposed $\shingeloss^c$ generalizes the previous $\shingeloss$ since the $\ell_2$-norm is its own dual. The appearance of the dual norm here is not by chance. This quantity captures the dimension-wise sensitivity of our decision rule to changes in $x$, scaled \textit{inversely proportionally} to the cost an agent incurs for moving in that dimension. This should be intuitive: for any given direction, the less it costs a user to modify their features, the more they can afford to move, and thus the greater the importance of reducing our classifier's sensitivity to it. We make this formal with the following lemma
which bounds the maximal strategic change to inputs.
% there is a natural form to the solution of this more general objective:
\begin{lemma}
    \label{lemma:dualnorm-soln}
    Fix $\beta$ and let $c(x, x') = \phi(\psdnorm{x-x'}{\costmatrix})$. The maximum possible change to a user's score 
    due to
    strategic behavior is
    $\uinv \dualpsdnorm{\beta}{\costmatrix}$.
    % , where
    % $\dualnorm{\beta} := \dualpsdnorm{\beta}{\costmatrix} = \sup_{\psdnorm{v}{\costmatrix}=1} \beta^\top v$ is the \emph{$\costmatrix^{-1}$-transformed dual norm} of $\beta$ and
    % $\uinv := \sup_{\R_{\geq 0}} r \textrm{ s.t. } \phi(r) \leq u$.
\end{lemma}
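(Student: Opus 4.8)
I want to bound $\max_{x'} \beta^\top(x' - x)$ over all $x'$ that an agent could rationally move to, which by the best-response characterization in~\eqref{eq:best_response} are exactly those with $c(x,x') < u$, i.e.\ $\phi(\psdnorm{x' - x}{\costmatrix}) < u$. Since $\phi$ is non-decreasing and $\uinv$ is the largest value with $\phi(\uinv) \le u$, this feasible set is (essentially) $\{x' : \psdnorm{x' - x}{\costmatrix} \le \uinv\}$. So the quantity to compute is $\sup\{\beta^\top \delta : \psdnorm{\delta}{\costmatrix} \le \uinv\}$, where $\delta = x' - x$.

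**Key steps.** First, I rewrite the constraint using the definition $\psdnorm{\delta}{\costmatrix} = \norm{\costmatrix^{1/2}\delta}$. Substituting $v = \costmatrix^{1/2}\delta$ (a bijection since $\costmatrix \succ \mathbf{0}$), the objective becomes $\beta^\top \costmatrix^{-1/2} v = (\costmatrix^{-1/2}\beta)^\top v$ subject to $\norm{v} \le \uinv$. By positive homogeneity, $\sup_{\norm{v}\le \uinv} (\costmatrix^{-1/2}\beta)^\top v = \uinv \sup_{\norm{v}=1}(\costmatrix^{-1/2}\beta)^\top v = \uinv \dualnorm{\costmatrix^{-1/2}\beta}$, which is exactly $\uinv \dualpsdnorm{\beta}{\costmatrix}$ by the definition of the $\costmatrix$-transformed dual norm. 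This gives the upper bound, and the supremum is attained (the unit sphere is compact), so it is achieved by some feasible $\delta^\star = \costmatrix^{-1/2} v^\star$; one should note the attaining point lies on the boundary $\phi(\psdnorm{\delta^\star}{\costmatrix}) \le u$, so it is genuinely reachable.

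**The subtle point.** The one place requiring care is the boundary behavior of $\phi$ and the strict-versus-weak inequality. The rational agent moves only when $c(x,x') < u$ strictly (and does not move when net utility is exactly zero, per the stated convention), whereas $\uinv$ is defined by $\phi(\uinv) \le u$. If $\phi$ is continuous and strictly increasing near $\uinv$ this is a non-issue: points with $\psdnorm{\delta}{\costmatrix} < \uinv$ are strictly feasible and their supremum of $\beta^\top\delta$ still equals $\uinv\dualnorm{\beta}$ by continuity. If $\phi$ has a flat segment ending at $\uinv$, the agent can actually reach $\psdnorm{\delta}{\costmatrix} = \uinv$ with cost $\phi(\uinv) \le u$ when the inequality there is strict, or approach it otherwise. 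In all cases the \emph{maximum possible change to the score} is $\uinv\dualpsdnorm{\beta}{\costmatrix}$ — either attained or approached as a supremum — which is what the lemma asserts. I expect this is the main (minor) obstacle: stating the claim at the right level of precision about "possible change" so that the flat-$\phi$ and strict-inequality edge cases are covered without clutter. Everything else is the routine dual-norm computation above.
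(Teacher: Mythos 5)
Your proof is correct and follows essentially the same route as the paper's: both reduce the problem to $\sup\{\beta^\top\delta : \psdnorm{\delta}{\costmatrix}\le \uinv\}$ using the monotonicity of $\phi$ and the definition of $\uinv$, and then recognize the variational formula for the ($\costmatrix$-transformed) dual norm. Your explicit substitution $v=\costmatrix^{1/2}\delta$ and the discussion of the strict-versus-weak inequality at the boundary are slightly more careful than the paper's one-line appeal to the dual norm, but they do not change the argument.
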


Proofs of all lemmas can be found in \cref{appsec:main-lemmas-proofs}. Thus we see how augmenting the hinge loss with the dual norm serves as a natural approach to robust strategic classification, and we use this loss throughout. More generally, it will also be useful to define the \emph{k-shifted strategic hinge loss} as $\shingeloss(\beta; k) := \max\{0, 1 - y(\beta^\top x + k)\}$.

Though $\shingeloss^c$ allows for more general costs, it remains non-convex---we will return to this point in \cref{sec:min-classifier_max-cost}.
Also note that the ``correct'' transformation to use depends on the true cost. In our case this is unknown, but we show that it suffices for the dual norm to be bounded by a constant $B$:
$\forall \cinc,\ \dualnorm{\beta} \leq B$. We also let $X\in\R$ denote the maximum of $\norm{x}, \twonorm{x}$ over the training examples. We treat both $B$ and $X$ as fixed constants. Finally, we denote by $L$ the Lipschitz constant of the loss gradient, which appears in the convergence rates of the algorithms we derive. This can be generically bounded as $L \leq X + \uinv L_*$, where $L_*$ is the Lipschitz constant of the dual norm. For the common setting where $\norm{\cdot}$ is a $p$-norm, we have $L_* = \max\left(1, d^{\nicefrac{1}{2} - \nicefrac{1}{p}}\right)$. Notably, this quantity is independent of the dimension $d$ for $p \leq 2$ and scales no worse than $d^{\nicefrac{1}{2}}$ otherwise.
% Given a training set $\gD$ of $n$ samples from $\gX\times\gY$, we also assume bounded norm of the training features: $\forall (x, y) \in \gD , \, \norm{x} \leq X$.
% \looseness=-1

% With this result, we can now define the generalized strategic hinge loss we use in this paper:
% \begin{definition}
%     The \emph{cost-dependent strategic hinge loss} is defined as
%     \begin{align} \label{eq:cost-dep_stratgic_hinge}
%         \shingeloss^c(\beta^\top x, y) &:= \max\{0, 1 - y(\beta^\top x + \uinv \dualnorm{\beta})\}.
%     \end{align}
% \end{definition}
% Our $\shingeloss^c$ generalizes the previous $\shingeloss$ since the $\ell_2$-norm is its own dual. 

\paragraph{Risk and generalization.}
Denote the overall strategic hinge risk by $\shingepoprisk^c(\beta) := \E[\shingeloss^c(\beta^\top x, y)]$, with the strategic 0-1 risk defined analogously as $R_{0-1}^{\costfunc}$.
\begin{lemma}
    \label{cor-zo-shinge}
    For any cost $c\in\costset$, $R_{0-1}^{\costfunc}(\beta) \leq \shingepoprisk^c(\beta)$.
\end{lemma}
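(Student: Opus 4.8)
The plan is to prove the inequality \emph{pointwise} — that $\zoloss^{c}(\beta^\top x, y) \le \shingeloss^{c}(\beta^\top x, y)$ for every $(x,y)$ in the support — and then integrate, using monotonicity of the expectation together with the definitions $R_{0-1}^{c}(\beta) = \E[\zoloss^{c}(\beta^\top x, y)]$ and $\shingepoprisk^{c}(\beta) = \E[\shingeloss^{c}(\beta^\top x, y)]$. Since $\shingeloss^{c}$ is a closed form and $\zoloss^{c}\in\{0,1\}$, the whole content is the pointwise bound, which I would establish by casing on the label $y$; in each case the only interesting situation is when $\zoloss^{c}=1$, as $\shingeloss^{c}\ge 0$ handles $\zoloss^{c}=0$ automatically. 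Throughout I use \cref{lemma:dualnorm-soln}, which gives that any strategic move $\delta=\delta(x;\beta)$ satisfies $\beta^\top\delta \le \uinv\dualnorm{\beta}$, and I treat ``classified positive'' as $\beta^\top(x+\delta)\ge 0$, consistent with \eqref{eq:best_response}.

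For $y=-1$: here $\zoloss^{c}=1$ only if the agent ends up (mis)classified positive, i.e. $\beta^\top(x+\delta)\ge 0$. Either the agent did not move because it was already accepted, in which case $\beta^\top x \ge 0$; or it moved and crossed the boundary, in which case $\beta^\top x + \uinv\dualnorm{\beta} \ge \beta^\top x + \beta^\top\delta = \beta^\top(x+\delta) \ge 0$. Since also $\uinv\dualnorm{\beta}\ge 0$, in both cases $\beta^\top x + \uinv\dualnorm{\beta}\ge 0$, hence $\shingeloss^{c}(\beta^\top x,-1)=\max\{0,\,1+\beta^\top x+\uinv\dualnorm{\beta}\}\ge 1 = \zoloss^{c}$.

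For $y=+1$: if $\beta^\top x + \uinv\dualnorm{\beta}\le 0$ then $\shingeloss^{c}(\beta^\top x,1)=\max\{0,\,1-(\beta^\top x+\uinv\dualnorm{\beta})\}\ge 1\ge\zoloss^{c}$, so assume $\beta^\top x + \uinv\dualnorm{\beta} > 0$. I claim the agent is then classified correctly, so $\zoloss^{c}=0$ and we are done. Indeed, if $\beta^\top x\ge 0$ the agent is already accepted and does not move; otherwise, by \cref{lemma:dualnorm-soln} a positive classification is attainable at affordable cost (the best achievable score is $\beta^\top x + \uinv\dualnorm{\beta} > 0$), and since obtaining it is strictly beneficial it is the agent's best response, so $\beta^\top(x+\delta)\ge 0$. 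Combining the two label cases gives $\zoloss^{c}\le\shingeloss^{c}$ pointwise, and taking expectations yields the lemma.

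The one place to be careful is this last case for $y=+1$, specifically the interaction of \cref{lemma:dualnorm-soln} with the tie-breaking conventions from the preliminaries: that a post-move score of exactly $0$ still counts as accepted, that no strategic response occurs when the net utility is exactly $0$, and that the attainable score increase promised by \cref{lemma:dualnorm-soln} is genuinely realizable given $\phi(\uinv)\le u$, so that ``$\beta^\top x+\uinv\dualnorm{\beta}>0$'' really does imply the agent can cross. All of this is routine once those conventions are invoked, but it is where the only subtlety lies; the remainder is the standard ``surrogate dominates $0$-$1$ loss'' argument adapted to strategic response.
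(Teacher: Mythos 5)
Your proof is correct and follows essentially the same route as the paper's: a pointwise case analysis on the label, using \cref{lemma:dualnorm-soln} to bound the achievable change in score from strategic response, followed by taking expectations. The only cosmetic difference is that the paper passes through the ordinary hinge loss $\hingeloss^{\costfunc}$ as an intermediate bound in the $y=-1$ case, whereas you bound $\zoloss^{\costfunc}\le\shingeloss^{\costfunc}$ directly; the tie-breaking edge cases you flag at the end are treated at the same (informal) level of rigor in the paper's own proof.
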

Thus, our cost-dependent strategic hinge loss is an effective proxy for the 0-1 loss.
% Let $\hat R$ denote the empirical risk over our training set.
We next establish its generalization.
% We give a generalization bound for our cost-dependent strategic hinge loss:
\begin{theorem} %[Generalization Bound]
\label{thm:generalization-main-body}
With probability $\geq 1-\delta$, for all $\beta \in\gB$ and all cost functions $\cinc$,
\begin{align*}
    R_{0-1}^{\costfunc}(\beta) &\leq \hat R_{\textrm{s-hinge}}^c(\beta) + \frac{B(4X + \uinv) + 3\sqrt{\ln \nicefrac{1}{\delta}}}{\sqrt{n}},
\end{align*}
where $\hat R$ is the empirical risk over a training set of size $n$.
\end{theorem}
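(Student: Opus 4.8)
The plan is to combine \cref{cor-zo-shinge}, which already gives $R_{0-1}^{c}(\beta) \leq \shingepoprisk^c(\beta)$ pointwise, with a uniform convergence bound relating the population strategic hinge risk $\shingepoprisk^c(\beta)$ to its empirical counterpart $\hat R_{\textrm{s-hinge}}^c(\beta)$, uniformly over both $\beta \in \gB$ and $\cinc$. The key observation that makes this tractable is that $\shingeloss^c(\beta^\top x, y) = \max\{0, 1 - y(\beta^\top x + \uinv\dualnorm{\beta})\}$ depends on the cost $c$ \emph{only} through the scalar $\uinv\dualnorm{\beta} = \uinv\dualnorm{\costmatrix^{-1/2}\beta}$. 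So rather than taking a union over the infinite family $\{\shingeloss^c : \cinc\}$, I would absorb this term and view the loss as a function of the pair $(\beta^\top x,\ \dualnorm{\beta})$, or more directly bound everything in terms of the single parameter $B$ that uniformly dominates $\dualnorm{\beta}$ over $\costset$.

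Concretely, I would proceed as follows. First, fix the hypothesis class: since $\shingeloss^c(\beta^\top x, y)$ with $y\in\{\pm1\}$ and $\dualnorm{\beta}\le B$, $\twonorm{x}\le X$, $\twonorm{\beta}$ bounded on $\gB$, the loss takes values in a bounded interval — I'd record that $\shingeloss^c(\beta^\top x, y) \in [0,\ 1 + X\cdot\sup_{\gB}\twonorm{\beta} + \uinv B]$, and more usefully that it is $1$-Lipschitz in its first argument. Second, I would bound the Rademacher complexity of the relevant function class. Because $\uinv\dualnorm{\beta}$ enters additively, write $\shingeloss^c(\beta^\top x, y) = \hingeloss(y\beta^\top x) $ composed with a shift; using the standard contraction/Talagrand lemma for the $1$-Lipschitz hinge, the Rademacher complexity reduces to that of $\{x \mapsto \beta^\top x : \beta\in\gB\}$ plus a term controlling the constant shift $y\uinv\dualnorm{\beta}$. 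The linear class contributes $\mathcal{O}(X \sup_\gB\twonorm{\beta}/\sqrt n)$, and the shift term, being a constant per $\beta$ bounded by $\uinv B$, contributes at most $\mathcal{O}(\uinv B/\sqrt n)$. Collecting constants should reproduce the $B(4X+\uinv)$ numerator (the factor $4$ and the precise grouping coming from the standard Rademacher-to-generalization conversion, e.g.\ a factor of $2$ from symmetrization and a factor of $2$ from the two-sided bound). Third, I would invoke the standard bounded-differences (McDiarmid) concentration to pass from the expected sup-deviation to a high-probability bound, which produces the $3\sqrt{\ln(1/\delta)/n}$ additive term (the constant $3$ absorbing the range of the loss together with the $\sqrt{2}$ from McDiarmid). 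Finally, chain: $R_{0-1}^{c}(\beta) \le \shingepoprisk^c(\beta) \le \hat R_{\textrm{s-hinge}}^c(\beta) + [\text{uniform deviation}]$.

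The main obstacle is handling the simultaneous quantification over $\cinc$ cleanly. A naive union bound over the (uncountable, and in general non-finitely-parameterized) set $\costset$ would fail; the crux is the structural reduction above — recognizing that the cost only shows up via the one-dimensional quantity $\dualnorm{\beta}\in[0,B]$, so the effective function class is no richer than $\{(\beta, s) : \beta\in\gB,\ s\in[0,B]\}$ acting through $\max\{0, 1-y(\beta^\top x + \uinv s)\}$. One then either covers the extra $[0,B]$ coordinate (a $1$-dimensional cover costs only a $\log$ factor, absorbable) or, more elegantly, notes that for fixed $\beta$ the worst-case $s$ is attained at an endpoint, so it suffices to bound complexity at $s=B$. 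A secondary technical point is that this theorem is flagged as \emph{correcting an error} in \citet{levanon2022generalized}, so I would be careful that the contraction step is applied to a genuinely Lipschitz function of $\beta^\top x$ and that the shift $\uinv\dualnorm{\beta}$ — which is itself a (seminorm, hence convex, but crucially bounded) function of $\beta$ — is not mishandled; I suspect the original error was in either the Lipschitz constant or in double-counting this term, so I'd keep the shift contribution explicit rather than folding it prematurely into the linear part.
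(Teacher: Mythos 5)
Your proposal matches the paper's proof: both chain \cref{cor-zo-shinge} with a uniform Rademacher bound, decompose the relevant class into the linear part $x\mapsto\beta^\top x$ (contributing $2BX/\sqrt{n}$) and the additive shift $z(y)\,\uinv\dualnorm{\beta}$ (contributing $\uinv B/(2\sqrt{n})$), apply Talagrand contraction to the $1$-Lipschitz hinge, and finish with standard concentration; you also correctly pinpoint that the corrected error was the dropped shift term. The one small imprecision is your remark that ``it suffices to bound complexity at $s=B$'': the supremizing dual norm is $0$ or $B$ depending on $\sign\bigl(\sum_i\sigma_i\bigr)$, which is exactly how the paper arrives at the bound $\tfrac{\uinv B}{2n}\,\E\bigl[\bigl|\sum_i\sigma_i\bigr|\bigr]\le \uinv B/(2\sqrt{n})$ that you anticipate.
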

This result extends (and fixes an error in) the bound for $\ell_2$-norm cost from \citet{levanon2022generalized}. The proof, found in \cref{appsec:rademacher-proof}, applies standard Rademacher bounds by decomposing the strategic hinge loss and bounding the terms separately while accounting for general strategic responses. The fact that this bound holds uniformly for \emph{all} cost functions is critical, as it allows us to apply it to the worst case cost even when that cost is unknown.

\section{The Perils of Using a Wrong Cost Function}
\label{sec:wrong-cost-difficulty}

As the agents' movement depends intricately on the precise cost function, correctly anticipating strategic response requires a good estimate of that cost. In the one-shot setting, this is further complicated by the fact that we have no access to a mechanism by which to \emph{infer} the cost (e.g., via online interaction):
we must pick a single classifier and commit to it,
without knowing the true cost function a priori.
A natural approach would be to make use of existing machinery for single-cost strategic learning, as described above, 
using some reasonable choice for the cost.
For example, one idea would be to simply pick a cost which we believe is reasonably ``close'' to the true cost, in the hope that predictive performance degrades gracefully with our error.
Certainly, this is better than blindly proceeding with the default $\ell_2$-norm. Unfortunately we find that the task of cost-robust strategic learning is much more difficult (learning theoretically) than is first apparent---it turns out that without more explicit assumptions on our guess's distance to the true cost \emph{and} on the data distribution itself, providing any sort of robustness guarantee is impossible.

\paragraph{Hardness results.}
Our first result proves that if we must to commit to a single fixed cost, unless that cost is exactly correct, minimizing the empirical risk can never provide a non-trivial data-independent guarantee:
% \looseness=-1
\begin{theorem}
    \label{thm:cost-error-half-lower-bound}
    Consider the task of learning a norm-bounded binary linear classifier. Fix any two costs $c_1, c_2$ with non-equal cost matrices, and let $0 \leq \epsilon \leq \frac{1}{2}$. There exists a distribution $q$ over $\gX\times\gY$ such that:
    \begin{enumerate}
    \item For each of $c_1$ and $c_2$, there is a (different) classifier which achieves 0 error on $q$ when facing strategic response under that cost; and
    \item Any classifier which achieves 0 error on $q$ under $c_1$ suffers error $\epsilon$ under $c_2$, and any classifier which achieves 0 error on $q$ under $c_2$ suffers error $1-\epsilon$ under $c_2$.
    \end{enumerate}
\end{theorem}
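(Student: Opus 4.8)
\emph{Proof strategy.} The plan is to realize $q$ as a small set of point masses, use \cref{lemma:dualnorm-soln} to read off exactly which classifiers achieve $0$ error under each cost, and exploit that $\costmatrix_1$ and $\costmatrix_2$ produce different maximal score‑shifts to force a ``see‑saw'' between the two optimal classifiers. First I would record what $0$ error means: combining \cref{lemma:dualnorm-soln} with \eqref{eq:best_response}, a point $z$ ends up (post--best‑response) on the positive side of a classifier $\beta$ with implicit bias $b$ iff $\beta^\top z + b > -\uinv\dualpsdnorm{\beta}{\costmatrix}$, so $\beta$ has $0$ error on $q$ under cost $\costfunc$ (matrix $\costmatrix$) iff the \emph{effective threshold} $\theta_\costmatrix(\beta,b) := -b - \uinv\dualpsdnorm{\beta}{\costmatrix}$ satisfies $\beta^\top z > \theta_\costmatrix$ at every positive point and $\beta^\top z \le \theta_\costmatrix$ at every negative point. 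This is scale‑invariant in $(\beta,b)$, so a $0$‑error classifier is pinned down only by its direction and by $\theta_\costmatrix$, and I will place every point strictly interior so the tie‑breaking conventions of \eqref{eq:best_response} are never active.

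For the see‑saw, since $\costmatrix_1 \ne \costmatrix_2$ the induced dual norms are distinct, so fix a unit vector $\beta^\star$ with $\dualpsdnorm{\beta^\star}{\costmatrix_1} \ne \dualpsdnorm{\beta^\star}{\costmatrix_2}$; assume WLOG ``$>$'' (otherwise swap the weights of the two masses below) and set $\br := \uinv(\dualpsdnorm{\beta^\star}{\costmatrix_1} - \dualpsdnorm{\beta^\star}{\costmatrix_2}) > 0$. Put a positive mass at $z_+$ and a negative mass at $z_-$ differing only along $\beta^\star$, with $\beta^{\star\top}(z_+ - z_-) = \eta$ for a tiny $\eta \ll \br$. \emph{If} every $0$‑error classifier had direction near $\beta^\star$ the rest would be immediate: such a classifier that is $0$‑error under $\costfunc_1$ must place $\theta_{\costmatrix_1}$ in the width‑$\eta$ window between the two points, but for the \emph{same} $(\beta,b)$ one has $\theta_{\costmatrix_2} = \theta_{\costmatrix_1} + \uinv(\dualpsdnorm{\beta}{\costmatrix_1} - \dualpsdnorm{\beta}{\costmatrix_2}) \approx \theta_{\costmatrix_1} + \br$, which sits above \emph{both} points, so under $\costfunc_2$ the positive point $z_+$ is misclassified (adding its mass to the error) while $z_-$ is not; symmetrically a classifier $0$‑error under $\costfunc_2$ has $\theta_{\costmatrix_1} \approx \theta_{\costmatrix_2} - \br$ below the window, so $z_-$ is misclassified while $z_+$ is not. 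Giving $z_+$ weight $\epsilon$ and $z_-$ weight $1 - \epsilon$ then yields the two claimed error levels, and item~1 holds because $\beta = \beta^\star$ with bias chosen to land $\theta_{\costmatrix_i}$ in the window is $0$‑error for each $i$.

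The crux is forcing \emph{every} $0$‑error classifier to point near $\beta^\star$: otherwise a classifier in a direction where $\dualpsdnorm{\cdot}{\costmatrix_1} = \dualpsdnorm{\cdot}{\costmatrix_2}$ would be $0$‑error for \emph{both} costs and spoil item~2. For $d \ge 2$ I would add a gadget of negligible mass --- for each $v_j$ in a basis of $(\beta^\star)^\perp$, two closely‑spaced $\pm$ pairs with displacements $\approx +\delta v_j$ and $\approx -\delta v_j$ together with a minuscule $\beta^\star$‑component so they straddle the window. Every $0$‑error classifier, under \emph{either} cost, must satisfy $\beta^\top(x_+ - x_-) > 0$ on each $\pm$ pair (the $\costmatrix$‑dependent terms cancel in the characterization above), and these inequalities force each $|\beta^\top v_j|$ to be an arbitrarily small multiple of $\beta^\top\beta^\star$, i.e.\ the direction of $\beta$ to within any prescribed tolerance of $\beta^\star$, \emph{uniformly} over all $0$‑error classifiers and both costs. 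The parameters must be chosen in order: first fix $\beta^\star,\br$; then shrink the gadget until $\uinv(\dualpsdnorm{\beta}{\costmatrix_1} - \dualpsdnorm{\beta}{\costmatrix_2}) > \br/2$ and all direction‑induced perturbations of the scores above are $\ll \eta$ for every $0$‑error $\beta$; then take $\eta \ll \br$; finally give each gadget point equal mass $\mu$ and reset the weights of $z_+, z_-$ to $\epsilon - 2(d-1)\mu$ and $1 - \epsilon - 2(d-1)\mu$ (nonnegative for $\mu$ small, when $\epsilon > 0$). One checks the gadget contributes its positive mass $2(d-1)\mu$ to the $\costfunc_2$‑error of $\costfunc_1$‑optimal classifiers and its negative mass $2(d-1)\mu$ to the $\costfunc_1$‑error of $\costfunc_2$‑optimal classifiers (and nothing else), so the totals remain exactly $\epsilon$ and $1-\epsilon$; and since everything lives in an $\mathcal{O}(\eta)$‑ball about the origin, rescaling $\beta^\star$ (harmless by scale‑invariance) places all relevant classifiers and the support of $q$ inside any bounded $\gB$ and $\gX$. (For $d = 1$ the direction is automatic and no gadget is needed.) I expect this uniform direction‑pinning --- getting it tight enough for \emph{all} $0$‑error classifiers while keeping the gadget's own mass from disturbing the two target error levels --- to be the main obstacle; the rest is bookkeeping on top of \cref{lemma:dualnorm-soln}.
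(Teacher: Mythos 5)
Your proposal is correct in its essentials and rests on the same mechanism as the paper's proof: fix a direction $\beta^\star$ along which the two costs induce different maximal score shifts $\uinv\dualpsdnorm{\beta^\star}{\costmatrix_1}\neq\uinv\dualpsdnorm{\beta^\star}{\costmatrix_2}$, squeeze the two classes into a window along $\beta^\star$ that is much narrower than this gap so that the feasible bias intervals for $0$ error under $c_1$ and under $c_2$ are disjoint, and weight the classes $\epsilon$ and $1-\epsilon$ to get the two stated error levels. Where you differ is in how the classifier's \emph{direction} is pinned to $\beta^\star$. The paper places each class as a full-support distribution on one of two parallel hyperplanes $\beta^{\star\top}x=\pm r$; any linear classifier with $0$ error on such a pair must be a positive multiple of $\beta^\star$ exactly, so the entire problem collapses to a one-dimensional analysis of the bias, and no gadget is needed. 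You instead use point masses and an orthogonal-direction gadget that pins the direction only approximately, which is workable but is (as you anticipate) the delicate part: note that your stated requirement that the direction-induced score perturbations be $\ll\eta$ is neither achievable nor necessary --- a pair whose $\beta^\star$-gap is $h\gtrsim\eta$ with orthogonal offset $\delta$ only forces $\delta\,|\beta^\top v_j|<h\,\beta^\top\beta^\star$, so the perturbations are merely $O(h)$; the argument still closes because the dual-norm gap $\br$ dominates every quantity of order $h$ and $\eta$, but the bookkeeping should be phrased that way. Two further points of comparison: your scale-invariant formulation via the effective threshold $\theta_{\costmatrix}$ cleanly sidesteps the norm bound $B$ that the paper must thread through its definition of $r$, and your construction is supported on finitely many points (the paper needs a final smoothing step to obtain an absolutely continuous $q$); on the other hand your weights $\epsilon-2(d-1)\mu$ require $\epsilon>0$, whereas the paper's construction degrades more gracefully there. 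Net: same theorem, same engine, but the paper's hyperplane device eliminates the one genuinely fiddly component of your plan.
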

\begin{figure*}[ht!]
    \centering
    \begin{subfigure}{.48\linewidth}
        \includegraphics[width=\linewidth]{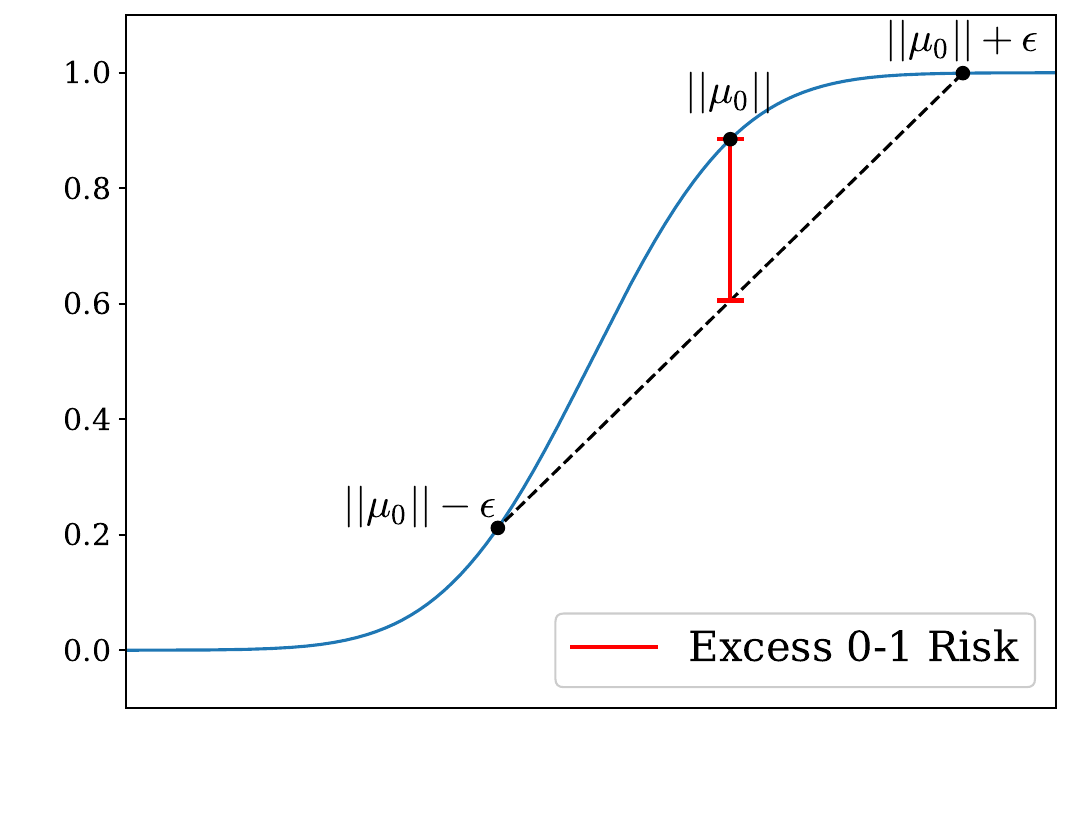}
    \end{subfigure}
    \begin{subfigure}{.495\linewidth}
        \includegraphics[width=\linewidth]{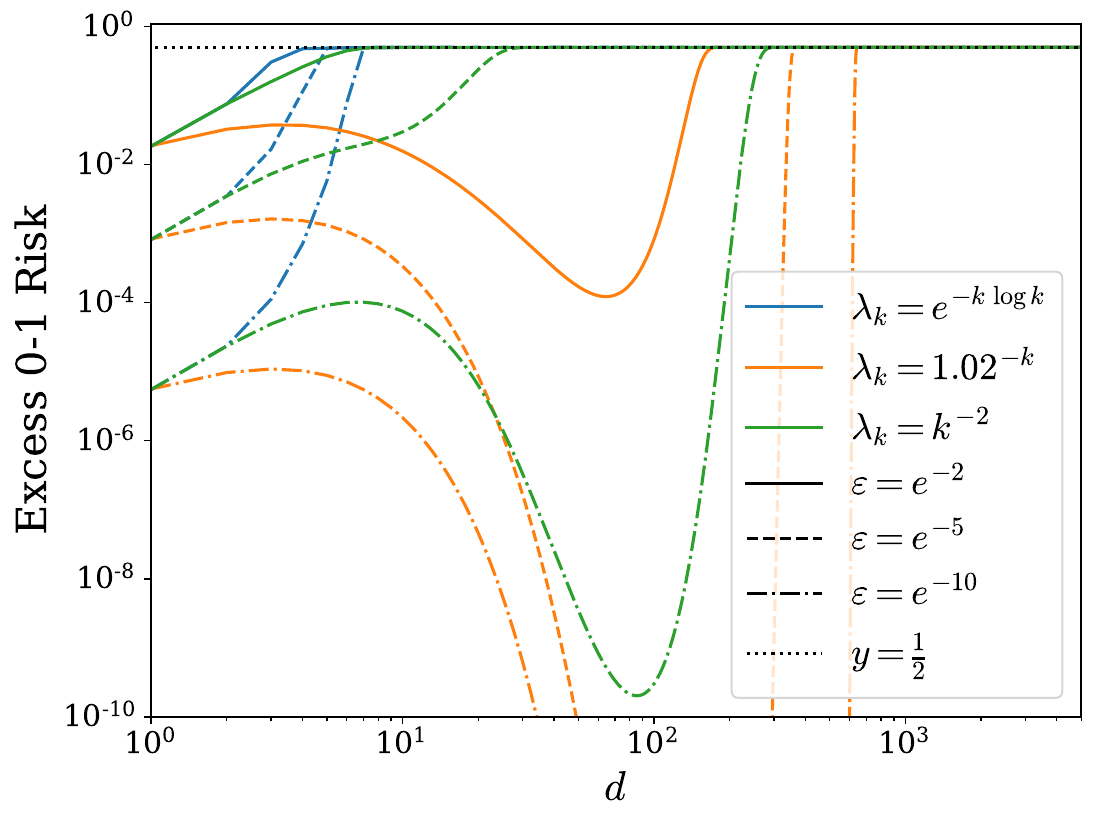}
    \end{subfigure}
    \caption{\textbf{Visualizing the excess 0-1 risk from \cref{thm:cost-error-gauss-lower-bound}. Left:} A toy illustration of the excess risk as a function of $\norm{\mu_0}$ and $\epsilon$ and where they lie on the Gaussian CDF. \textbf{Right:} Excess risk curves as a function of the dimension $d$, where $\mu_0 = \nicefrac{1}{\sqrt{d}}\; \mathbf{1}$ and $\sigma^2 = \nicefrac{1}{d}$. Color indicates the function which generates the spectrum of $\costmatrix$, where $\lambda_k$ is the $k$-largest eigenvalue in the series. Line style indicates the error $\varepsilon$ in estimating the eigenvalue in each dimension---precisely, we define $\hat\costmatrix = (1 - \varepsilon) \costmatrix$. Even with all eigenvalues estimated to error $1-e^{-10}$, excess risk grows rapidly with $d$ towards $\nicefrac{1}{2}$.}
    \label{fig:excess-risk}
\end{figure*}

\cref{thm:cost-error-half-lower-bound} says that if there is \emph{any error at all} in estimating $\costmatrix$, then any classifier which achieves perfect accuracy under our assumed cost might do no better than random (or even worse than random) when deployed. The proof (see \cref{appsec:hardness-proofs}) constructs a worst-case distribution that is chosen adversarially with respect to the error in cost estimation. This construction is quite robust in that there exists an entire space of such solutions and the lower bound decays smoothly for distributions which are close to the one we define (especially for larger error in the estimate of $\costmatrix$). However, there is hope that perhaps for non-adversarial distributions, estimating a fixed cost may be reasonable. To investigate this possibility, we also study the more natural setting of isotropic Gaussian $q(x \mid y)$. We find once again that, even in the limit of infinite data, guessing the wrong cost can be quite harmful:
% very non-robust under strategic response:
% \looseness=-1
\begin{theorem}
    \label{thm:cost-error-gauss-lower-bound}
    Define the distribution $q$ over $\gX\times\gY$ as $q(y=1) = q(y=-1) = \nicefrac{1}{2}$, $q(x \mid y) \overset{d}{=} \calN(y\cdot\mu_0, \sigma^2 I)$.
    Denote by $\Phi$ the standard Normal CDF.
    Let the true cost be defined as $\psdnorm{x - x'}{\costmatrix}$ with unknown cost matrix $\costmatrix$, and let $\beta^*$ be the classifier which minimizes the strategic 0-1 risk under this cost.
    % \looseness=-1
    
    Suppose one instead learns a classifier $\hat\beta$ by assuming an incorrect cost $\hat\costmatrix$ and minimizing the population strategic 0-1 risk under that cost: $\hat\beta := \argmin_\beta \E_q[\zoloss^{\hat c}(\beta)]$.
    % \looseness=-1
    % \begin{align*}
    %     \hat\beta &:= \argmin_\beta \E_q[\zoloss^{\hat c}(\beta)],
    % \end{align*}
     Then the excess 0-1 risk suffered by $\hat\beta$ is
    \begin{align*}
        \Phi\left( \frac{\norm{\mu_0}}{\sigma} \right) - \frac{1}{2} \left( \Phi\left( \frac{\norm{\mu_0} - \epsilon}{\sigma} \right) + \Phi\left( \frac{\norm{\mu_0} + \epsilon}{\sigma} \right)\right),
    \end{align*}
    where $\epsilon := \frac{\uinv |\dualpsdnorm{\mu_0}{\hat\costmatrix} - \dualpsdnorm{\mu_0}{\costmatrix}|}{\norm{\mu_0}}$.
\end{theorem}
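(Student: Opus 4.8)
The plan is to reduce the strategic $0$-$1$ risk under a cost matrix $\costmatrix$ to a closed form in $\Phi$, minimize it exactly to pin down both $\beta^*$ and $\hat\beta$, then evaluate the true-cost risk of $\hat\beta$ and subtract. Throughout, $\twonorm{\cdot}$ is the Euclidean norm (the one entering through the Gaussian covariance); this is what $\norm{\mu_0}$ denotes in the statement.

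\textbf{Closed form for the strategic risk.} Write a classifier as $\sign(w^\top x + b)$ with $w\in\R^d$ and implicit bias $b$, the cost norm taken on $w$ alone. As in the proof of \cref{lemma:dualnorm-soln}, an agent at $x$ with $w^\top x + b<0$ best-responds by entering $\{w^\top x' + b\ge 0\}$ exactly when the cheapest such move, of cost $|w^\top x + b|/\dualpsdnorm{w}{\costmatrix}$, is below $u$, and is then classified positive. Hence, outside a $q$-null set (net utility exactly zero, or a non-unique argmin --- both irrelevant under a Gaussian), a point is classified positive after strategic response iff $w^\top x + b > -\uinv\dualpsdnorm{w}{\costmatrix}$. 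Since $w^\top x\mid y \sim \calN(y\,w^\top\mu_0,\ \sigma^2\twonormsq{w})$, two Gaussian tail probabilities give
\begin{multline*}
R_{0-1}^{\costmatrix}(w,b) = \tfrac12\,\Phi\!\left(\tfrac{-\uinv\dualpsdnorm{w}{\costmatrix} - w^\top\mu_0 - b}{\sigma\twonorm{w}}\right) \\
+ \tfrac12\,\Phi\!\left(\tfrac{\uinv\dualpsdnorm{w}{\costmatrix} - w^\top\mu_0 + b}{\sigma\twonorm{w}}\right).
\end{multline*}

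\textbf{The optimal classifiers.} The risk is invariant under positive scaling of $(w,b)$, so fix $\twonorm{w}=1$; directions with $w^\top\mu_0\le 0$ have risk $\ge\tfrac12$ and may be ignored. For a fixed direction with $w^\top\mu_0>0$, a one-line derivative computation shows the unique stationary point in $b$ is $b=-\uinv\dualpsdnorm{w}{\costmatrix}$, and since $R_{0-1}^{\costmatrix}\to\tfrac12$ as $b\to\pm\infty$ while the stationary value $\Phi(-w^\top\mu_0/\sigma)$ is strictly smaller, this is the global minimizer over $b$. Informally, a learner who knows the cost sets the bias to exactly undo the strategic shift, leaving effective rule $\sign(w^\top x)$ and risk $\Phi(-w^\top\mu_0/\sigma)$ --- minimized over unit $w$ by $w=\mu_0/\twonorm{\mu_0}$, \emph{independently of the cost matrix}. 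Hence $\beta^*$ has direction $\mu_0/\twonorm{\mu_0}$ and bias $-\uinv\dualpsdnorm{\mu_0}{\costmatrix}/\twonorm{\mu_0}$, with $R_{0-1}^{\costmatrix}(\beta^*)=\Phi(-\twonorm{\mu_0}/\sigma)=1-\Phi(\twonorm{\mu_0}/\sigma)$; and $\hat\beta=\argmin_\beta\E_q[\zoloss^{\hat c}(\beta)]$ has the same direction $\mu_0/\twonorm{\mu_0}$ but the mis-specified bias $-\uinv\dualpsdnorm{\mu_0}{\hat\costmatrix}/\twonorm{\mu_0}$.

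\textbf{Risk of $\hat\beta$ under the true cost, and subtraction.} Substitute $w=\mu_0/\twonorm{\mu_0}$ (so $\dualpsdnorm{w}{\costmatrix}=\dualpsdnorm{\mu_0}{\costmatrix}/\twonorm{\mu_0}$ by homogeneity) and $b=-\uinv\dualpsdnorm{\mu_0}{\hat\costmatrix}/\twonorm{\mu_0}$ into the formula above: both arguments of $\Phi$ collapse to $(\pm\tilde\epsilon-\twonorm{\mu_0})/\sigma$ with $\tilde\epsilon := \uinv\big(\dualpsdnorm{\mu_0}{\hat\costmatrix}-\dualpsdnorm{\mu_0}{\costmatrix}\big)/\twonorm{\mu_0}$, so $|\tilde\epsilon|=\epsilon$. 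Using $\Phi(-z)=1-\Phi(z)$ and the symmetry of the pair under $\tilde\epsilon\mapsto-\tilde\epsilon$,
\[
R_{0-1}^{\costmatrix}(\hat\beta) = 1 - \tfrac12\left[\Phi\!\left(\tfrac{\twonorm{\mu_0}-\epsilon}{\sigma}\right) + \Phi\!\left(\tfrac{\twonorm{\mu_0}+\epsilon}{\sigma}\right)\right].
\]
Subtracting $R_{0-1}^{\costmatrix}(\beta^*)=1-\Phi(\twonorm{\mu_0}/\sigma)$ yields exactly the stated excess $0$-$1$ risk.

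\textbf{Main obstacle.} The substantive part is the joint minimization over $(w,b)$ in the second step: showing that the optimal bias exactly cancels the strategic offset, and that the optimal direction is $\mu_0$ no matter which (possibly incorrect) cost is assumed. This is what forces $\beta^*$ and $\hat\beta$ to share a direction and differ only by a scalar offset, collapsing everything into a two-term CDF expression; the care required lies in how the implicit bias decouples from the cost norm on $w$ and in the measure-zero tie-breaking / zero-net-utility conventions. Everything after that is bookkeeping with $\Phi$.
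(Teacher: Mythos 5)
Your proof is correct and follows essentially the same route as the paper's: reduce to the one-dimensional Gaussian problem along $\mu_0$, show the optimal classifier points along $\mu_0$ with a bias that exactly cancels the dual-norm strategic shift (so the mis-specified classifier differs only by a bias offset of $\epsilon$), and read off the excess risk as the Gaussian measure of the resulting disagreement slab. The only notable difference is that you establish optimality of $(\beta^*,\hat\beta)$ by an explicit joint minimization of the closed-form risk over direction and bias, which is a somewhat more self-contained justification of the step the paper handles via Neyman--Pearson plus the bias-shift argument.
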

\cref{thm:cost-error-gauss-lower-bound} demonstrates that even for \emph{much} more benign distributions, choosing a classifier based on a slightly incorrect cost can be very non-robust. Roughly, we should expect $\epsilon = \Theta(\uinv \textrm{tr}(\hat\costmatrix^{-1} - \costmatrix^{-1}))$, scaling as our error grows along directions where $\mu_0$ is large. Crucially, this is with respect to the \emph{inverse} cost matrix $\costmatrix^{-1}$, which means that a tiny estimation error in the lower end of the eigen-spectrum can have a large effect. For example, imagine that the classes are sufficiently separated so that the optimal classifier \emph{absent} strategic behavior can get accuracy close to 1 (which is trivial in high dimensions). Then under strategic behavior, the error in guessing the smallest eigenvalue of $\costmatrix$ need only be $\Omega(\norm{\mu_0} \lambda_{\min}(\costmatrix) \lambda_{\min}(\hat\costmatrix))$ to induce excess risk of approximately $\nicefrac{1}{4}$, and it will rapidly approach $\nicefrac{1}{2}$ as $d$ grows. To give a bit more intuition for these factors, \cref{fig:excess-risk} visualizes a few simple examples of how the excess risk behaves as dimension increases, depending on the distribution of eigenvalues of $\costmatrix$ and our error in estimating them. More abstractly, we can see that if we only slightly err in estimating the cost of movement in a direction, it could cause a very large error in estimating how much agents will move in that direction.
\looseness=-1

\paragraph{Intentionally conservative guesses can still fail.} It is not immediately obvious why guessing a single cost should always have poor worst-case performance. For example, it might appear that we could just select a cost function which \emph{under}estimates the cost to move in any given direction, and therefore overestimates the movement of any given point. In being intentionally conservative, this choice seems like it should be reasonably robust to misspecification, even if not optimally so. However, this ignores the key fact that in strategic classification there is also the potential for \emph{beneficial} strategic response: a point with true label $y=1$ but incorrect prediction $\gYhat=-1$ has the potential to correct this error---to the learner's benefit---by changing its features.

The above results show that it is not enough in strategic classification to be aware of the fact that users will move in response to the classifier, nor to have an educated guess for how they will move: it is essential to account for \emph{uncertainty} in how they will move by anticipating the ways in which the true, unknown cost function may differ from what we expect. Our lower bounds thus motivate robustly learning a classifier based not on a single cost, but on a \emph{set} in which the true cost is expected to lie---this reduces potential misspecification of the response model \citep{lin2023plug} while remaining applicable in the one-shot setting.
% \looseness=-1

\section{Maximizing Risk for a Fixed Classifier and Minimizing Risk for a Fixed Cost}

To identify a robust classifier when the true cost is unknown, we will optimize the worst-case strategic risk with respect to the learner-specified uncertainty set $\costset$.
Our objective is therefore to solve the min-max problem
\begin{align}
    \label{eq:minmax-objective}
    \min_{\beta} \max_{\cinc} \shingepoprisk^c(\beta).
\end{align}
As we noted in the introduction, if we consider the cost function to be inducing a classifier-specific distribution, then \cref{eq:minmax-objective} becomes an instance of \emph{distributionally robust optimization}. A typical approach to this type of problem would be to use a gradient-based method which converges to a Nash equilibrium between the learner and an adversary (who ``chooses'' the cost). To apply this in our setting, we must recast \cref{eq:minmax-objective} as two separate objectives: one for the learner minimizing $\beta$ and the other for the adversary maximizing $\costfunc$. However, the idiosyncrasies of the strategic learning setting give rise to unique challenges which must be addressed for this approach to be successful.

\subsection{Solving the Inner Max for a Fixed Classifier}
The first step to solving \cref{eq:minmax-objective} is determining a way to solve the inner maximization problem, which will serve as an important subroutine for our eventual algorithm which solves the full min-max objective. However, the max objective is non-concave. We thus begin with an efficient algorithm to address this.
\cref{alg:maxlosscost} gives pseudocode\footnote{The code calls for identifying $\dualnorm{\beta}^{\min}$ and $\dualnorm{\beta}^{\max}$. The specific way we parameterize $\costset$ makes this simple (see \cref{sec:C-reparam}).} for \textsc{MaxLossCost}
which, for a given classifier $\beta$, maximizes the strategic hinge loss over costs $\costfunc \in \costset$.
Due to \cref{eq:cost-dep_stratgic_hinge}, this amounts to finding the maximizing $k$-shift for $k=\uinv \dualnorm{\beta}$ where $\dualnorm{\beta}$ is achievable by some $\cinc$.
We next prove correctness and runtime: % are given by the following lemma:

\setlength{\textfloatsep}{18pt}
\begin{algorithm}[t]
\caption{Pseudocode for \textsc{MaxLossCost}}
\label{alg:maxlosscost}
\begin{algorithmic}
\STATE \textbf{input:} Dataset $\gD = \{(x_i, y_i)\}_{i=1}^n$, Classifier $\beta$, Cost uncertainty set $\costset$, Upper bound $\uinv$.
\STATE \textbf{define:} $\dualnorm{\beta}^{\min} := \displaystyle \min_{\costmatrix \in \costset}  \dualnorm{\beta},\ \dualnorm{\beta}^{\max} := \displaystyle \max_{\costmatrix \in \costset} \dualnorm{\beta}$
\STATE \textbf{initialize:} $k \gets \dualnorm{\beta}^{\min}$,\ $r \gets \shingeemprisk(\beta; \uinv k)$
\begin{ALC@g}
    \STATE Sort training points in increasing order by $v_i := y_i (1 - y_i \beta^\top x_i)$.
    \STATE Set $j$ as first index where $v_j - \uinv k > 0$.
    \FOR{entry $v_i$ in sorted list, starting from $j$}
    \STATE Set $k = \nicefrac{v_i}{\uinv}$.
    \STATE If $k > \dualnorm{\beta}^{\max}$, break.
    \STATE Update new risk $r = \shingeemprisk(\beta; \uinv k)$. Maintain maximum $r_{\max}$ and $k_{\max}$ which induced it.
    \ENDFOR
    \STATE If $\shingeemprisk(\beta; \uinv \dualnorm{\beta}^{\max})\! >\! r_{\max}$, return $\displaystyle \argmax_{\costmatrix \in \costset} \dualnorm{\beta}$
    \STATE Otherwise, return $\costmatrix\in\costset$ such that $\dualpsdnorm{\beta}{\costmatrix} = k_{\max}$.
\end{ALC@g}
\end{algorithmic}
\end{algorithm}

\begin{lemma}
    \label{lemma:solve-max}
    For any classifier $\beta$, dataset $(\gX\times\gY)^{n}$, and uncertainty set $\costset$,
    \textsc{MaxLossCost} runs in $\gO(nd + n\ln n)$ time and returns a cost function $\cinc$ which maximizes the cost-dependent strategic hinge risk $\shingeemprisk^c(\beta)$.
\end{lemma}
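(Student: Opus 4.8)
The plan is to reduce the inner maximization to a one-dimensional problem over the interval of achievable dual norms, exploit the piecewise-linear structure of the resulting objective, and then check that \textsc{MaxLossCost} probes it on a candidate set guaranteed to contain a maximizer --- all within the claimed time budget. For the reduction, first I would note that, by \cref{eq:cost-dep_stratgic_hinge}, the empirical strategic hinge risk depends on the cost only through the scalar $k := \dualpsdnorm{\beta}{\costmatrix}$: writing $v_i := y_i(1 - y_i \beta^\top x_i)$ and using $y_i^2 = 1$, one gets $\shingeemprisk^c(\beta) = g(k) := \frac1n \sum_{i=1}^n \max\{0,\, y_i(v_i - \uinv k)\}$. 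Next, since $\costset$ is compact and convex --- hence connected --- and every element of $\costset$ is PD, the map $\costmatrix \mapsto \dualpsdnorm{\beta}{\costmatrix} = \dualnorm{\costmatrix^{-1/2}\beta}$ is continuous on $\costset$, so its image is exactly the closed interval $[\dualnorm{\beta}^{\min}, \dualnorm{\beta}^{\max}]$. Hence $\max_{\cinc} \shingeemprisk^c(\beta) = \max_{k \in [\dualnorm{\beta}^{\min}, \dualnorm{\beta}^{\max}]} g(k)$, and by the intermediate value theorem any maximizing $k^\star$ in this interval is realized by some $\costmatrix \in \costset$; the parameterization of \cref{sec:C-reparam} lets us recover such a matrix explicitly (in particular the endpoint matrices $\argmax_{\costmatrix \in \costset}\dualnorm{\beta}$ and $\argmin_{\costmatrix \in \costset}\dualnorm{\beta}$).

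Given this reduction, correctness follows from the piecewise-linear structure of $g$: each summand is affine in $k$ and then clamped to $0$, with its single kink at $k = v_i/\uinv$, so $g$ is continuous and piecewise linear on $[\dualnorm{\beta}^{\min}, \dualnorm{\beta}^{\max}]$ with all breakpoints drawn from $\{v_i/\uinv\}_{i=1}^n$, and its maximum over this interval is attained at an endpoint or at an interior breakpoint. It then remains to check that \textsc{MaxLossCost} evaluates $g$ on a superset of this finite candidate set: the initialization records $g(\dualnorm{\beta}^{\min})$; sorting the $v_i$ and starting the scan at the first index $j$ with $v_j > \uinv\dualnorm{\beta}^{\min}$ traverses exactly the breakpoints exceeding $\dualnorm{\beta}^{\min}$ in increasing order, while the test ``$k > \dualnorm{\beta}^{\max}$'' stops it as soon as they leave the interval; the final comparison adds $g(\dualnorm{\beta}^{\max})$. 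The algorithm keeps the best pair $(r_{\max}, k_{\max})$ over all these evaluations and returns a cost in $\costset$ realizing $k_{\max}$ (or $\dualnorm{\beta}^{\max}$), which by the reduction maximizes $\shingeemprisk^c(\beta)$ over $\cinc$.

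For the runtime, forming the inner products $\beta^\top x_i$ (and hence the $v_i$) costs $\gO(nd)$, sorting costs $\gO(n\ln n)$, locating $j$ is $\gO(\ln n)$ by binary search, and computing $\dualnorm{\beta}^{\min}, \dualnorm{\beta}^{\max}$ together with a preimage cost matrix is, under the parameterization of \cref{sec:C-reparam}, dominated by the above. The scan runs at most $n$ iterations, and the one delicate point is that each re-evaluation of $\shingeemprisk(\beta;\uinv k)$ must cost $\gO(1)$, not $\gO(n)$: on each linear piece $g$ has a fixed slope, and crossing a breakpoint $v_i/\uinv$ flips exactly one summand, raising the slope of $g$ by $\nicefrac{\uinv}{n}$, so after computing the slope at $\dualnorm{\beta}^{\min}$ once in $\gO(n)$, maintaining a running pair (current value of $g$, current slope) updates $g$ in constant time per step. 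Summing over the stages gives $\gO(nd + n\ln n)$.

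I expect the crux to be the reduction step --- establishing that the dual norms achievable as $\costmatrix$ ranges over $\costset$ form exactly $[\dualnorm{\beta}^{\min}, \dualnorm{\beta}^{\max}]$, and that a maximizer there is realized by a genuine $\costmatrix \in \costset$ --- together with the $\gO(1)$ amortized-update observation needed to match the linear-time loop; the remaining piecewise-linear bookkeeping (starting index, termination test, endpoint checks) is routine.
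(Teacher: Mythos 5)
Your proof is correct and follows essentially the same route as the paper's: reduce the maximization over $\costset$ to a one-dimensional search for $k=\dualnorm{\beta}$ over the achievable interval $[\dualnorm{\beta}^{\min},\dualnorm{\beta}^{\max}]$ (using convexity/compactness of $\costset$ and continuity of the dual norm), observe that the objective is piecewise linear in $k$ with breakpoints at $v_i/\uinv$, and scan the sorted breakpoints with constant-time updates --- your running-slope bookkeeping is exactly the paper's maintenance of the counts $c_{+1},c_{-1}$. No gaps; the only cosmetic difference is that the paper's appendix version also carries the regularization term $\lambda\uinv k$ through the same argument.
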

% For the full defintion of \textsc{MaxLossCost} %is in \cref{alg:subgradient} in 
% see \cref{appsec:subgradient-proof}.
% The method
\textsc{MaxLossCost}
takes advantage of the fact that the loss is piecewise linear in the \emph{scalar value $\dualnorm{\beta}$}, carefully constructing a list of the training samples which are sorted according to a particular intermediate value and traversing this list while updating the risk at the boundary of each linear component.
For a full description of the algorithm see \cref{appsec:subgradient-proof}.
% \looseness=-1

\paragraph{Bounding the adversarial risk of any classifier.}
Independent of our main objective, \textsc{MaxLossCost} already allows us to derive adversarial strategic risk bounds for any classifier:
simply evaluate its worst-case empirical strategic hinge loss and then apply \cref{thm:generalization-main-body}. In principle, this means we could \emph{try} minimizing the strategic hinge loss for a single cost and we would technically be able to derive an error bound for the solution. However, this approach is not ideal for two reasons: first, as noted earlier, the strategic hinge loss is non-convex in $\beta$, so direct optimization may be unsuccessful. Second, as shown in \cref{sec:wrong-cost-difficulty} there is little reason to believe that optimizing an objective which does not account for the min-max structure will achieve good adversarial risk, making the generalization bound correct but usually unhelpful (e.g., trivial). Still, it is useful to be able to give a valid bound on the adversarial risk of whatever classifier we may hope to evaluate.
% \looseness=-1

\subsection{Solving the Outer Min for a Fixed Cost}
\label{sec:min-classifier_max-cost}

We now switch to the task of finding a classifier which minimizes worst-case risk.
Note \cref{eq:minmax-objective} poses two key challenges:
(i) the strategic hinge loss is non-convex,
and (ii) optimization is further complicated by the inner max operator.
Hence, we begin with the case of a single fixed cost,
which will serve us as an intermediary step towards optimizing over
the set of all costs.
Even for a single cost, the non-convexity of the strategic hinge means that we would only be able to guarantee convergence to a stationary point.
We address this via regularization.

\paragraph{Convexification via regularization.}
Regularization is a standard means to introduce (strong) convexity:
for the hinge loss, applying (squared) $\ell_2$ regularization makes the objective strongly convex, improving optimization while simultaneously preventing overfitting.
Since we would want to regularize our objective anyways, one possible solution would be to add just enough $\ell_2$ regularization to convexify it.
While sound in principle, this is inappropriate for our setting because it does not account for the cost-specific nature of strategic response, so the amount of regularization needed would be extreme. Specifically, we prove a lower bound on the required $\ell_2$ regularization to ensure convexity of the strategic hinge loss:
\begin{theorem}
    \label{thm:cvx-regularization-ell2}
    Let $\norm{\cdot}$ be a $p$-norm and fix a cost matrix $\costmatrix$. There is a distribution $q$ such that the $\ell_2$-regularized loss $\shingepoprisk^c(\beta) + \lambda \uinv \twonorm{\beta}$ is non-convex unless $\lambda \geq \twonorm{\costmatrix^{-1}}$.
\end{theorem}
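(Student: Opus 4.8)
The plan is to construct the required distribution explicitly, localize the non-convexity to the ``active'' region of the hinge, and reduce everything to the convexity of a difference of two norms. Assume $\uinv>0$ (otherwise there is nothing to prove). Take $q$ to be the point mass at $(x,y)=(\vzero,+1)$ (a degenerate distribution; smearing it over a tiny ball changes nothing below). Then $\shingepoprisk^c(\beta)=\max\{0,\,1-\uinv\dualnorm{\beta}\}$, so on the open, convex neighborhood $\gU:=\{\beta:\uinv\dualnorm{\beta}<1\}$ of the origin --- a scaled unit ball of $\dualnorm{\cdot}$ --- the regularized risk equals $F(\beta):=1-\uinv\dualnorm{\beta}+\reg\uinv\twonorm{\beta}=1+\uinv\,g(\beta)$, where $g(\beta):=\reg\twonorm{\beta}-\dualnorm{\beta}$. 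Since $g$ is positively homogeneous of degree one, $g$ is convex on $\gU$ iff $g$ is convex on all of $\R^d$ (rescale any midpoint-convexity witness down into $\gU$); hence the regularized risk is convex only if $g$ is. It therefore suffices to show that $g$ is non-convex whenever $\reg<\twonorm{\costmatrix^{-1}}$.

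The key point --- and the reason vanilla $\ell_2$ regularization is so wasteful for this loss --- is that $g=\reg\twonorm{\cdot}-\dualnorm{\cdot}$ need not be convex even when $\reg\twonorm{\beta}\ge\dualnorm{\beta}$ holds pointwise: a difference of norms is generically non-convex under mere domination, so the naive ``just add enough $\ell_2$ to dominate'' intuition is wrong. I exhibit a violation of midpoint convexity along a short segment whose location and direction are dictated by the eigenstructure of $\costmatrix$. Treat first the $\ell_2$-norm cost, where $\dualnorm{\beta}=\sqrt{\beta^\top\costmatrix^{-1}\beta}$; let $u$ be a unit top eigenvector of $\costmatrix$ and $v$ a unit eigenvector of $\costmatrix^{-1}$ with eigenvalue $\twonorm{\costmatrix^{-1}}$, so $u\perp v$. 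For small $t,\eps>0$ put $\beta_0:=tu$ and $\beta_\pm:=tu\pm\eps v$ (all inside $\gU$), with midpoint $\beta_0$. Using $\twonorm{\beta_0}=t$, $\twonorm{\beta_\pm}^2=t^2+\eps^2$, and (since $u,v$ are orthogonal eigenvectors, so the cross term vanishes) the exact identity $\dualnorm{\beta_\pm}^2=t^2/\lambda_{\max}(\costmatrix)+\eps^2\,\twonorm{\costmatrix^{-1}}$, a second-order expansion of the two square roots in $\eps$ gives
\begin{equation*}
\tfrac{1}{2}\bigl(g(\beta_+)+g(\beta_-)\bigr)-g(\beta_0)=\frac{\eps^2}{2t}\Bigl(\reg-\sqrt{\lambda_{\max}(\costmatrix)}\,\twonorm{\costmatrix^{-1}}\Bigr)+O(\eps^4).
\end{equation*}
Hence whenever $\reg<\sqrt{\lambda_{\max}(\costmatrix)}\,\twonorm{\costmatrix^{-1}}$ the segment violates midpoint convexity for small $\eps$, so $g$ --- and therefore $\shingepoprisk^c+\reg\uinv\twonorm{\cdot}$ under this $q$ --- is non-convex. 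Under the standard normalization in which $\costmatrix=\mI$ recovers the base cost and every admissible cost is at least this expensive ($\lambda_{\max}(\costmatrix)\ge1$), this covers the entire range $\reg<\twonorm{\costmatrix^{-1}}$, giving the theorem.

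For a general $p$-norm the same construction applies but the bookkeeping is heavier: $\dualnorm{\beta}=\norm{\costmatrix^{-1/2}\beta}_q$ with $\tfrac{1}{p}+\tfrac{1}{q}=1$, and the level sets of $\norm{\cdot}_q$ are no longer aligned with the eigenbasis of $\costmatrix$, so the witness $(\beta_0,v)$ must be chosen to exploit the curvature of $\norm{\cdot}_q$ and the anisotropy of $\costmatrix$ simultaneously, while staying in a smooth (coordinate-face-avoiding) region of $\norm{\cdot}_q$ --- note that on a coordinate face the $q$-norm has vanishing curvature and the regularized loss is locally convex for any $\reg>0$, so placement genuinely matters. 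I expect the main obstacle to be exactly this step: identifying the extremal $(\beta_0,v)$ and verifying that it yields the constant stated in the theorem (an optimization of the midpoint-convexity gap, most cleanly carried out via the closed-form Hessian of $g$ away from its kinks). The conceptual content --- that pointwise domination does not suffice and one must reason about the true difference of norms --- is already present in the $\ell_2$ case; constructing $q$ and reducing to $g$ are routine.
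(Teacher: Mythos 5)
Your construction takes a genuinely different route from the paper's, and for $p=2$ it is correct and in fact yields a sharper instance. The paper works with an \emph{arbitrary} distribution with $q(y=1)=\tau^+$: it splits the risk by label, isolates the branch $1-\beta^\top x+\uinv\bigl(\tfrac{\lambda}{\tau^+}\twonorm{\beta}-\dualnorm{\beta}\bigr)$ of the max on the positive class, and applies the \emph{first-order} convexity inequality $f(\beta_1)-f(\beta_2)\geq\nabla f(\beta_2)^\top(\beta_1-\beta_2)$ with $\beta_2$ supported off the minimal eigenvector and $\beta_1=c\,v_d$ along it; Euler's identity $v^\top\nabla\norm{v}_q=\norm{v}_q$ collapses the gradient term and gives the lower bound for \emph{every} $p$-norm in one stroke, followed by a bias-shifting step to ensure the relevant branch of the hinge is active and the negative class contributes nothing. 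Your point mass at $(\vzero,+1)$ elegantly sidesteps that last piece of bookkeeping, and the reduction by positive homogeneity to global convexity of $g=\lambda\twonorm{\cdot}-\dualnorm{\cdot}$ is clean; your midpoint computation along $tu\pm\eps v$ is also correct, and for the $\ell_2$ cost it identifies the exact convexity threshold of your instance.

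Two gaps remain. The essential one is that the theorem covers all $p$-norms and you only complete $p=2$, explicitly deferring the general case and identifying it as the hard step. It genuinely is hard by your method: the second-order/midpoint argument needs the curvature of $\norm{\cdot}_q$, which degenerates on coordinate faces and behaves differently for $q<2$ versus $q>2$. The paper's first-order argument avoids Hessians entirely, and it grafts directly onto your construction: with your $q$ and your $g$, take $\beta_2$ with zero component along $v_d$ and $\beta_1=c\,v_d$; then $\nabla g(\beta_2)^\top\beta_2=g(\beta_2)$ by Euler's identity and $\nabla g(\beta_2)^\top\beta_1=0$, so convexity forces $g(\beta_1)\geq 0$, i.e. $\lambda\geq\twonorm{\costmatrix^{-\nicefrac{1}{2}}}$, for every $p$ at once. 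The second issue is the constant: your witness certifies non-convexity exactly when $\lambda<\sqrt{\lambda_{\max}(\costmatrix)}\,\twonorm{\costmatrix^{-1}}$, which dominates the claimed $\twonorm{\costmatrix^{-1}}$ only under the normalization $\lambda_{\max}(\costmatrix)\geq 1$ --- an assumption the theorem does not make and which you import silently as ``standard.'' (To be fair, the paper is itself inconsistent here: the main body states $\twonorm{\costmatrix^{-1}}$ while the appendix proves $\tau^+\twonorm{\costmatrix^{-\nicefrac{1}{2}}}$, and those likewise coincide only under a spectral normalization. But a complete writeup must at least flag that your threshold is a different function of the spectrum than the one claimed, rather than asserting the ranges match.)
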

Thus, the necessary $\ell_2$ regularization would be cost-dependent and quite large, scaling with the inverse of the smallest eigenvalue.
Moreover, since we want to optimize this risk over all possible costs, this means that it would need to scale with the largest spectral norm among \emph{all} inverse cost matrices in $\costset$. 

\paragraph{Dual norm regularization.}
Luckily, we observe that for a given cost $\costfunc$ we can instead apply a small amount of \emph{dual norm} regularization to $\beta$. This naturally accounts for the problem's structure and eliminates the need for excessive penalization.
Our dual-regularized objective is
% \begin{align*}
    $\shingepoprisk^c(\beta) + \lambda \uinv \dualnorm{\beta}$,
% \end{align*}
where $\lambda$ determines the regularization strength.
Since the dual norm $\dualnorm{\beta}$ is implicitly defined via the cost matrix,
the following is straightforward: % to show the following:
% \looseness=-1
\begin{proposition}
    \label{thm:cvx-regularization1}
    For any norm $\norm{\cdot}$, cost matrix $\costmatrix$, and distribution $q$,
    % on $\gX\times\gY$
    % with $p(y=1) =: \tau^+$,
    the dual-regularized loss $\shingepoprisk^c(\beta) + \lambda \uinv \dualnorm{\beta}$ is guaranteed to be convex for all $\lambda \geq q(y=1)$.
\end{proposition}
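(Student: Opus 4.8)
The plan is to decompose the population risk by label and observe that only the positively-labeled part is responsible for non-convexity, then show that the dual-norm penalty exactly absorbs the offending concave piece. Recall $\shingeloss^c(\beta^\top x, y) = \max\{0,\, 1 - y(\beta^\top x + \uinv\dualnorm{\beta})\}$. For $y=-1$ this equals $\max\{0,\, 1 + \beta^\top x + \uinv\dualnorm{\beta}\}$, a pointwise maximum of two convex functions of $\beta$ (an affine function and a nonnegative multiple of a norm), hence convex; taking the conditional expectation preserves convexity, so the contribution $q(y=-1)\,\E[\shingeloss^c \mid y=-1]$ is convex. Likewise $\lambda\uinv\dualnorm{\beta}$ is convex. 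It therefore suffices to show that $q(y=1)\,\E[\shingeloss^c(\beta^\top x, 1)\mid y=1] + \lambda\uinv\dualnorm{\beta}$ is convex whenever $\lambda \geq q(y=1)$.

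The key step is a one-line algebraic identity: for any scalar $a$ and any $h \geq 0$, $\max\{0,\, a - h\} + h = \max\{a,\, h\}$ (check the two cases $a \geq h$ and $a < h$). Applying this pointwise with $a = 1 - \beta^\top x$ and $h = \uinv\dualnorm{\beta}$ gives, for $y=1$,
\begin{align*}
  \shingeloss^c(\beta^\top x, 1) + \uinv\dualnorm{\beta} \;=\; \max\{\,1 - \beta^\top x,\;\uinv\dualnorm{\beta}\,\},
\end{align*}
which is a maximum of an affine function and a convex function of $\beta$, hence convex. Consequently $\E[\shingeloss^c(\beta^\top x, 1) + \uinv\dualnorm{\beta}\mid y=1]$ is convex, and scaling by $q(y=1) \geq 0$ keeps it convex.

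To conclude, write $\lambda\uinv\dualnorm{\beta} = q(y=1)\,\uinv\dualnorm{\beta} + (\lambda - q(y=1))\,\uinv\dualnorm{\beta}$. The first summand combines with $q(y=1)\,\E[\shingeloss^c(\beta^\top x,1)\mid y=1]$ into the convex function just identified; the second is a nonnegative multiple of a norm, convex precisely because $\lambda \geq q(y=1)$; and the negative-label contribution is convex by the first paragraph. Adding these three convex functions shows $\shingepoprisk^c(\beta) + \lambda\uinv\dualnorm{\beta}$ is convex, proving the proposition (and the identical argument applies verbatim to the empirical risk $\hat R_{\textrm{s-hinge}}^c$, replacing $q(y=1)$ by the empirical fraction of positives). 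The only nontrivial step is spotting the cancellation identity that turns $\max\{0,\text{concave}\}$ into $\max\{\text{affine},\text{convex}\}$; everything else is bookkeeping with the label marginals.
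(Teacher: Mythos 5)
Your proof is correct and uses essentially the same argument as the paper: decompose the risk by label, observe the negative-label term is already convex, and absorb the dual-norm penalty into the positive-label term so that $\max\{0,\,\text{affine} - \uinv\dualnorm{\beta}\}$ becomes a maximum of two convex functions. The only cosmetic difference is that the paper folds the full $\nicefrac{\lambda}{\tau^+}\,\uinv\dualnorm{\beta}$ into the positive conditional expectation and checks $\nicefrac{\lambda}{\tau^+} \geq 1$, whereas you fold in exactly one unit via the identity $\max\{0,a-h\}+h=\max\{a,h\}$ and keep the surplus $(\lambda - q(y=1))\,\uinv\dualnorm{\beta}$ outside---the two bookkeepings are equivalent.
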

In fact, we can show that $q(y=1) \uinv \dualnorm{\beta}$ is the \emph{minimum} regularization that ensures convexity: for any $\lambda$, if the objective in \cref{thm:cvx-regularization-ell2} is convex, then $q(y=1) \dualnorm{\beta} \leq \lambda \twonorm{\beta}$. Proofs are in \cref{appsec:cvx-reg-proof}.
\cref{thm:cvx-regularization1} shows that a small, cost-independent amount of dual norm regularization ensures convexity of the outer min problem for \emph{any} cost function, making it the natural choice in this setting.

\section{Efficiently Identifying the Minimax-Optimal Classifier}

We have shown how to efficiently solve for the maximizing cost $\cinc$ for any classifier $\beta$, allowing us to apply the corrected generalization bound in \cref{thm:generalization-main-body} and get an upper bound on the adversarial 0-1 strategic risk. We have also seen that with dual norm regularization, optimizing the strategic hinge loss becomes tractable and also accounts for the unique structure of strategic response.
The remaining challenge is to combine these two methods to find the overall solution to the min-max objective.

\paragraph{Solving a different loss for each cost simultaneously.}
An interesting consequence of using the dual norm is that for any fixed cost, the ``correct'' regularization is a function of that cost. Since we want to minimize this objective with respect to the entirety of $\costset$, we absorb a \emph{separate} dual norm regularization term into the min-max objective for each possible cost, defining our new objective as
\begin{align}
    \label{eq:regularized-minmax-objective}
    \min_{\beta} \max_{\cinc} \bigl[ \shingepoprisk^c(\beta) + \lambda \uinv \dualpsdnorm{\beta}{\costmatrix} \bigr].
\end{align}
Contrast this with typical regularization, which (e.g. for $\ell_2$) would instead attempt to solve $\min_{\beta} \bigl[  \max_{\cinc}  \bigl[ \shingepoprisk^c(\beta) \bigr] + \lambda \twonorm{\beta} \bigr]$,
where regularization does not depend on the inner maximization.
For brevity, in the remainder of this work we leave the regularization implicit, writing simply $\shingepoprisk^c(\beta)$. \cref{thm:cvx-regularization1} shows that a single choice of $\lambda$ always suffices: whereas the structure-aware regularizer depends on the cost, the ideal coefficient does not. By setting $\lambda$ appropriately and optimizing this new objective we account for the cost uncertainty ``for free'', effectively solving the problem across all costs in $\costset$ \emph{simultaneously} with a separate, appropriate regularization term for each. Importantly, this means our solution will be optimal with respect to the regularized objective. But once a solution is found, we can evaluate the adversarial risk without the regularization term and \cref{thm:generalization-main-body} will still apply.
% \looseness=-1

\subsection{Solving the Objective in the Full-Batch Setting with the Subgradient Method}
\label{sec:C-reparam}
Given a train set $\{(x_i, y_i)\}_{i=1}^n$, perhaps the simplest idea for solving \cref{eq:regularized-minmax-objective} is to follow the gradient of the empirical adversarial risk. The validity of this approach is not trivial because of the min-max formulation, but we show that it is indeed correct by invoking the following result:
\begin{theorem}[Danskin's Theorem \citep{bertsekas1997nonlinear}]
    Suppose $f : \R^n \times \mathcal{Z} \to \R$ is a continuous function, where $\mathcal{Z} \subset \R^m$ is a compact set. Define $g(x) := \max_{z\in\mathcal{Z}} f(x, z)$. Then $g(x)$ is convex in $x$ if $f(x, z)$ is convex in $x$ for every $z\in\mathcal{Z}$. Furthermore, $\partial_x g(x) = \textrm{\emph{Conv}}\left\{\partial_x f(x, z)\;:\; z\in\argmax_z f(x, z) \right\}$, where $\partial$ is the subdifferential and ``\textrm{\emph{Conv}}'' indicates the convex hull. 
\end{theorem}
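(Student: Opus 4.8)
The plan is to establish the two claims—convexity of $g$ and the subdifferential identity—separately, the first being routine and the second being where the compactness of $\mathcal{Z}$ does real work. Throughout write $Z^*(x) := \argmax_{z\in\mathcal{Z}} f(x,z)$; since $f(x,\cdot)$ is continuous and $\mathcal{Z}$ compact, $Z^*(x)$ is a nonempty compact set and $g$ is finite-valued everywhere. For convexity, fix $x_1,x_2$ and $\theta\in[0,1]$: convexity of each $f(\cdot,z)$ gives $f(\theta x_1 + (1-\theta)x_2, z) \le \theta f(x_1,z) + (1-\theta)f(x_2,z)$ for every $z$, and taking $\max_z$ together with subadditivity of $\max$ yields $g(\theta x_1 + (1-\theta)x_2)\le \theta g(x_1)+(1-\theta)g(x_2)$. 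Being convex and finite on all of $\R^n$, $g$ is in particular continuous.

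For the subdifferential identity, the inclusion ``$\supseteq$'' is direct: if $z^*\in Z^*(x)$ and $h\in\partial_x f(x,z^*)$, then for every $y$ we have $g(y)\ge f(y,z^*)\ge f(x,z^*)+h^\top(y-x)=g(x)+h^\top(y-x)$, so $h\in\partial_x g(x)$; since $\partial_x g(x)$ is convex it contains $\mathrm{Conv}\{\partial_x f(x,z):z\in Z^*(x)\}$. For the reverse inclusion I would pass through directional derivatives and support functions: the key lemma is that $g'(x;d) := \lim_{t\downarrow 0}\tfrac{g(x+td)-g(x)}{t}$ exists and equals $\max_{z\in Z^*(x)} f'_x(x,z;d)$. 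The bound ``$\ge$'' is obtained by fixing $z^*\in Z^*(x)$ and using $g(x+td)\ge f(x+td,z^*)$, which gives $\liminf_{t\downarrow0}\tfrac{g(x+td)-g(x)}{t}\ge f'_x(x,z^*;d)$. For ``$\le$'', take any sequence $t_k\downarrow 0$, pick $z_k\in Z^*(x+t_k d)$, and use compactness of $\mathcal{Z}$ to extract $z_k\to\bar z$ along a subsequence; joint continuity of $f$ and continuity of $g$ force $f(x,\bar z)=g(x)$, i.e. $\bar z\in Z^*(x)$. Then for any fixed $s>0$ and all large $k$, monotonicity of difference quotients of the convex map $f(\cdot,z_k)$ gives $\tfrac{g(x+t_k d)-g(x)}{t_k}\le \tfrac{f(x+t_k d,z_k)-f(x,z_k)}{t_k}\le \tfrac{f(x+s d,z_k)-f(x,z_k)}{s}$; letting $k\to\infty$ and then $s\downarrow 0$ yields $\limsup_k \tfrac{g(x+t_k d)-g(x)}{t_k}\le f'_x(x,\bar z;d)\le \max_{z\in Z^*(x)} f'_x(x,z;d)$. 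Combining the two bounds proves the lemma.

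To finish, I would invoke the fact that for a finite convex function $g$ the directional derivative $g'(x;\cdot)$ is precisely the support function of $\partial_x g(x)$, while $d\mapsto\max_{z\in Z^*(x)} f'_x(x,z;d)$ is the support function of $\mathrm{Conv}\{\partial_x f(x,z):z\in Z^*(x)\}$ (because $f'_x(x,z;\cdot)$ is the support function of $\partial_x f(x,z)$, support functions are unaffected by unions and convex hulls, and the supremum over the compact set $Z^*(x)$ is attained). Since two closed convex sets with equal support functions coincide, the lemma gives the claimed equality—provided one also checks that $\mathrm{Conv}\{\partial_x f(x,z):z\in Z^*(x)\}$ is already closed, which follows from a uniform local Lipschitz bound on $f(\cdot,z)$ over $z$ in the compact $\mathcal{Z}$ (a consequence of joint continuity on a compact neighborhood) together with Carath\'eodory's theorem.

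The main obstacle is the ``$\le$'' half of the directional-derivative lemma: this is an interchange-of-limits statement that genuinely needs both hypotheses—compactness of $\mathcal{Z}$, to produce the limiting maximizer $\bar z\in Z^*(x)$, and convexity of $f(\cdot,z)$, to control the difference quotients uniformly in $k$ via their monotonicity. Everything else (convexity of $g$, the easy inclusion, and the support-function bookkeeping) is routine convex analysis.
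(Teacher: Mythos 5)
The paper does not prove this statement: it is quoted as a known result and attributed to \citet{bertsekas1997nonlinear}, so there is no in-paper argument to compare against. Judged on its own, your proof is the standard textbook argument for Danskin's theorem (convexity of a pointwise supremum; the easy inclusion via the subgradient inequality applied at a maximizer; the reverse inclusion by computing $g'(x;d)=\max_{z\in Z^*(x)}f'_x(x,z;d)$ and matching support functions), and the core steps are correct. In particular the delicate ``$\le$'' half of the directional-derivative lemma is handled properly: you use compactness of $\mathcal{Z}$ and joint continuity to produce a limiting maximizer $\bar z\in Z^*(x)$, and monotonicity of convex difference quotients to pass to the limit uniformly in $k$. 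The one step you wave at rather than prove is the closedness of $\mathrm{Conv}\{\partial_x f(x,z):z\in Z^*(x)\}$: the uniform local Lipschitz bound gives only boundedness of the union $\bigcup_{z\in Z^*(x)}\partial_x f(x,z)$, and Carath\'eodory upgrades compactness (not mere boundedness) of a set to compactness of its convex hull. You still need to check that the union itself is closed, which follows from the same device you already used --- if $h_k\in\partial_x f(x,z_k)$ with $h_k\to h$, extract $z_k\to\bar z\in Z^*(x)$ and pass to the limit in the subgradient inequality $f(y,z_k)\ge f(x,z_k)+h_k^\top(y-x)$ to get $h\in\partial_x f(x,\bar z)$. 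With that one line added, the argument is complete.
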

Note that this requires $\mathcal{Z}\subset \R^m$, whereas we introduced $\costset$ as a set of PD matrices in $\R^{d\times d}$.
Using the assumption that $\costset$ is compact and convex,
we resolve this by associating to each cost matrix $\costmatrix$ a vector of $d$ eigenvalues $\sigma_1^2\ldots \sigma_d^2$ under a fixed basis, with each eigenvalue $\sigma_i^2$ constrained to lie in the set $[\mincostscale{i}^2, \maxcostscale{i}^2]$. These lower and upper bounds allow us to re-parameterize the cost uncertainty set so that it is now a subset of $\R^d$. One remaining technicality is that this requires a fixed basis which may not be shared with the true cost. However, as a consequence of \cref{lemma:dualnorm-soln}, our results will still be meaningful so long as there exists a $\cinc$ which induces the same dual norm as the true cost. Without loss of generality, we suppose this basis is the identity.

Thus, we let $\mathcal{Z}$ denote the reparameterized uncertainty set $\costset$ and plug in $\shingeemprisk^{c=z}$ for $f$. We conclude that the subderivative of the worst-case strategic hinge loss is given by the subderivative of the loss evaluated at any cost in $\costset$ which maximizes it.
We can optimize the objective via the subgradient method (\cref{alg:subgradient} in the Appendix), where each subgradient evaluation involves a call to the maximization subroutine \textsc{MaxLossCost}.
We then combine this with well-known convergence results for the subgradient method and bound the generalization error via \cref{thm:generalization-main-body}, giving the complete result:
% \begin{theorem}
%     \label{thm:subgradient}
%     Suppose we run \cref{alg:subgradient-main-body} for $T$ iterations and get classifier $\hat\beta$. Then with probability $\geq 1-\delta$, the worst-case 0-1 strategic loss over costs in $\costset$ can be bounded by
%     \begin{align*}
%         \max_{\cinc} R_{0-1}^{\costfunc}(\hat\beta) \leq &
%         \max_{\cinc} \shingeemprisk^c(\hat\beta)\ + \\
%         & \frac{B(4X + \uinv) +  3 \sqrt{\ln\nicefrac{2}{\delta}}}{\sqrt{n}}.
%     \end{align*}
%     Furthermore, the sub-optimality of $\hat\beta$ with respect to the population minimax solution is bounded by
%     \begin{align*}
%     \max_{\cinc} \shingeemprisk^c(\hat\beta)  \leq  &
%      \min_{\beta} \max_{\cinc} \shingepoprisk^{\costfunc}(\beta)\ + \\
%      & B \left(\frac{2L}{\sqrt{T}} + (X+\uinv) \sqrt{\frac{\ln\nicefrac{2}{\delta}}{2n}} \right).
%     \end{align*}
% \end{theorem}
\begin{theorem}
    \label{thm:subgradient}
    Suppose we run \cref{alg:subgradient} for $T$ iterations and get classifier $\hat\beta$. With probability $\geq 1-\delta$, the worst-case 0-1 strategic loss over costs in $\costset$ can be bounded by
    \begin{align*}
    \max_{\cinc} R_{0-1}^{\costfunc}(\hat\beta) \leq &
     \min_{\beta} \max_{\cinc} \shingepoprisk^{\costfunc}(\beta) \\
     % & + \mathcal{O}\left(\frac{LB}{\sqrt{\min(n, T)}} + L\sqrt{\frac{\ln \nicefrac{1}{\delta}}{n}} \right). \\
     & + \mathcal{O}\left(\frac{LB}{\sqrt{T}} + (X+\uinv) \frac{B + \sqrt{\ln \nicefrac{1}{\delta}}}{\sqrt{n}} \right).
    \end{align*}
    % \vspace{-14pt}
\end{theorem}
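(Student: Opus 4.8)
The plan is to combine three ingredients: the (two-sided) uniform generalization bound underlying \cref{thm:generalization-main-body}, the convexity of the regularized objective from \cref{thm:cvx-regularization1} together with Danskin's theorem, and the textbook convergence rate of the (projected) subgradient method. For the analysis it helps to name the regularized population and empirical worst-case objectives
\[
  g(\beta) := \max_{\cinc}\bigl[\shingepoprisk^c(\beta) + \lambda\uinv\dualpsdnorm{\beta}{\costmatrix}\bigr]
  \qquad
  \hat g(\beta) := \max_{\cinc}\bigl[\shingeemprisk^c(\beta) + \lambda\uinv\dualpsdnorm{\beta}{\costmatrix}\bigr]
\]
and to set $\beta^\star := \argmin_{\beta\in\gB} g(\beta)$, so that (recalling our convention that $\shingepoprisk^c$ in the theorem statement already absorbs the regularizer) the first term on the right-hand side is exactly $g(\beta^\star)$. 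The target inequality follows by chaining $\max_{\cinc} R_{0-1}^c(\hat\beta) \le \hat g(\hat\beta) + (\text{gen.}) \le \hat g(\beta^\star) + (\text{opt.}) + (\text{gen.}) \le g(\beta^\star) + (\text{opt.}) + 2(\text{gen.})$.

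For the optimization step I would first invoke the reparameterization of \cref{sec:C-reparam}, which presents $\costset$ as a compact, convex subset of $\R^d$ so that Danskin's theorem applies. Fixing a constant $\lambda$ at least the empirical positive-label fraction (e.g.\ $\lambda=1$), \cref{thm:cvx-regularization1} makes $\beta \mapsto \shingeemprisk^c(\beta) + \lambda\uinv\dualpsdnorm{\beta}{\costmatrix}$ convex for every $c$, with $\ell_2$ subgradients of norm $O(X + \uinv L_*) = O(L)$; Danskin's theorem then gives that $\hat g$ is convex and that for any cost attaining the inner maximum at $\beta$ the corresponding subgradient lies in $\partial\hat g(\beta)$. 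Since \cref{lemma:solve-max} guarantees \textsc{MaxLossCost} returns exactly such a maximizing cost, each step of \cref{alg:subgradient} follows a genuine (exact) subgradient of $\hat g$. With $\gB$ of $\ell_2$-radius $O(B)$, the standard subgradient-method guarantee for convex (not necessarily strongly convex) objectives yields $\hat g(\hat\beta) \le \min_{\beta\in\gB}\hat g(\beta) + O(LB/\sqrt T) \le \hat g(\beta^\star) + O(LB/\sqrt T)$.

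For the two generalization steps I would use that the proof of \cref{thm:generalization-main-body} actually establishes the two-sided uniform bound $\sup_{\beta\in\gB,\,\cinc} |\shingeemprisk^c(\beta) - \shingepoprisk^c(\beta)| = O\!\bigl((X+\uinv)\,\nicefrac{(B+\sqrt{\ln\nicefrac1\delta})}{\sqrt n}\bigr)$, holding together with \cref{cor-zo-shinge} on a single probability-$(1-\delta)$ event. Because the regularizer is nonnegative, and because this bound is uniform over $\beta$ (hence valid at the data-dependent output $\hat\beta$), $\max_{\cinc} R_{0-1}^c(\hat\beta) \le \max_{\cinc}\shingeemprisk^c(\hat\beta) + (\text{gen.}) \le \hat g(\hat\beta) + (\text{gen.})$; and because the regularizer is data-independent, $\hat g(\beta^\star) \le g(\beta^\star) + (\text{gen.})$. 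Chaining these with the optimization bound, and noting that $\nicefrac{(B(4X+\uinv)+3\sqrt{\ln\nicefrac1\delta})}{\sqrt n}$ is of the claimed order $O\!\bigl((X+\uinv)\,\nicefrac{(B+\sqrt{\ln\nicefrac1\delta})}{\sqrt n}\bigr)$, gives the theorem.

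The main obstacle I expect is the bookkeeping around the regularizer: one must check simultaneously that adding $\lambda\uinv\dualpsdnorm{\beta}{\costmatrix}$ only weakens the bound on the left (nonnegativity), that it contributes nothing to the generalization gap (data-independence), and that its value at $\beta^\star$ is precisely the regularized minimax value appearing on the right---all while keeping $\lambda$ large enough for \cref{thm:cvx-regularization1} (so Danskin's theorem and the subgradient analysis go through) yet small enough that it does not inflate the subgradient constant $L$ or the final rate. A secondary, more mechanical point is verifying that the constants combine to the stated dimension-light rate, which rests on the quoted bound $L \le X + \uinv L_*$ with $L_* = \max(1, d^{\nicefrac12 - \nicefrac1p})$ for $p$-norm costs.
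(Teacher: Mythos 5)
Your proposal is correct and follows essentially the same route as the paper's proof: the same three-part decomposition into (i) the uniform generalization step via \cref{cor-zo-shinge} and \cref{thm:generalization-main-body}, (ii) subgradient-method convergence on the Danskin-convexified empirical worst-case objective with \textsc{MaxLossCost} supplying the exact maximizing cost, and (iii) concentration at the data-independent population minimizer $\beta^\star$. The only immaterial deviations are that the paper handles step (iii) with a single Hoeffding bound at $\beta^\star$ rather than a second application of the uniform bound, and that it analyzes the $\costmatrix_{\min}$-preconditioned update that \cref{alg:subgradient} actually performs by running the descent lemma in the $\psdnorm{\cdot}{\costmatrix_{\min}^{-1}}$ geometry (which is how it controls the gradient norm by $L$); both variants yield the stated $\mathcal{O}$ rate.
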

The proof appears in \cref{appsec:subgradient-proof}. Despite the difficulty posed by strategic response \emph{and} an unknown cost (as made apparent in \cref{sec:wrong-cost-difficulty}), \cref{thm:subgradient} shows that robust one-shot learning is possible with sufficient consideration of the underlying structure. 
% Clearly, we should choose the $\mincostscale{i}^2$ to be as large as possible while ensuring $\costset$ will contain the true cost---or if we are very uncertain, we can keep it small and instead choose a larger number of iterations $T$.

\subsection{Solving the Objective in the Minibatch Setting with Stochastic Mirror Descent-Ascent}
Occasionally the number of training points $n$ may be sufficiently large that gradient descent is intractable, or the full dataset may not be available all at once. We would still like to be able to solve \cref{eq:minmax-objective}, even when we cannot evaluate the full gradient. Unfortunately, the stochastic subgradient method is not an option here: evaluating the subderivative requires that we identify the cost that maximizes the \emph{population} objective, which cannot be done on the basis of a subsample. As an alternative, we turn to a method from convex-concave optimization known as \emph{Stochastic Mirror Descent-Ascent} (SMDA) \citep{nemirovski2009robust} which iteratively optimizes the min and max players to converge to a Nash equilibrium.
% \looseness=-1

\begin{algorithm}[t]
\caption{Stochastic Mirror Descent-Ascent on the regularized strategic hinge loss}
\label{alg:mirror-descent}
\begin{algorithmic}
\STATE \textbf{Input:} Batch size $n$, Iterations $T$, Costs $\costset$, Upper bound $\uinv$, Regularization $\lambda$, Discretization $\epsilon$, Step sizes $\eta_q, \eta_\beta$.
\STATE \textbf{define:}\; $\beta^{(0)} \gets \mathbf{0}$
\STATE \qquad\quad\ \ $v \gets [\epsilon(\mincostscale{1}^{-2}-\maxcostscale{1}^{-2}),\ldots, \epsilon(\mincostscale{d}^{-2}-\maxcostscale{d}^{-2})]^\top$
\STATE \qquad\quad\ \ $q^{(0)} \gets \epsilon \mathbf{1}$
\FOR{$t = 1,\ldots,T$}
\STATE Draw samples $\{(x_i, y_i)\}_{i=1}^n$.
\STATE $c_{k} \gets \vec{\sigma_u}^{-2} + k\cdot v, \quad k\in\{1,\ldots,\lceil\nicefrac{1}{\epsilon}\rceil\}$
\STATE $q' \gets q^{(t-1)}$
\STATE $q'_{k} \gets q'_{k} \exp(\eta_q \shingeemprisk^{c_k}(\beta^{(t-1)}))$
\STATE $q^{(t)} \gets q' / \sum_k q'_{k}$
\STATE $\beta^{(t)} \gets \beta^{(t-1)} - \eta_{\beta} \left( \sum_k q_{k}^{(t)} \nabla \shingeemprisk^{c_k}(\beta^{(t-1)}) \right)$
\ENDFOR
\STATE \textbf{return} $\hat\beta := \frac{1}{T} \sum_{t=1}^T \beta^{(t)}$
\end{algorithmic}
\end{algorithm}

\paragraph{Modifying mirror ascent to apply to our setting.} Since the objective is convex in $\beta$, minimization is straightforward. However, recall that the maximization over costs is non-concave. Previously we got around this by solving for the maximizing cost in a non-differentiable manner and then invoking Danskin's theorem---but the guarantees given by \citet{nemirovski2009robust} require that we take iterative steps of gradient ascent on the adversary's (assumed concave) objective. We address this via an \emph{algorithmic $\epsilon$-net}. Specifically, we can relax \cref{eq:regularized-minmax-objective} to a maximization over a \emph{finite} set of costs $\mathcal{S}\subset \costset$ such that the solution to this new objective is provably close to that of the original problem. Given such a set, the new objective becomes $\min_{\beta} \max_{\costfunc\in \mathcal{S}} \shingepoprisk^c(\beta)$.
Observe that the solution to this objective is equivalent to the solution to
\begin{align}
    \label{eq:convex-comb-obj}
    \min_{\beta} \max_{q\in\Delta(|\mathcal{S}|)} \sum_{c_i\in \mathcal{S}} q_i\shingepoprisk^{c_i}(\beta),
\end{align}
where $\Delta(|\mathcal{S}|)$ is the $|\mathcal{S}|$-simplex. In other words, we can solve this problem over all convex combinations of the risks under the different costs in $\mathcal{S}$ and arrive at the same solution \citep{rosenfeld2022domain, rosenfeld2022online}. Crucially, unlike our original objective, \cref{eq:convex-comb-obj} is convex-concave, so it can be efficiently optimized via SMDA! The correctness of this relaxation relies on the property proven in \cref{lemma:dualnorm-soln}: strictly speaking, the strategic loss depends only the \emph{scalar} dual norm, not on the full cost matrix.

The gap between the solution to the original objective and \cref{eq:convex-comb-obj} scales inversely with the fineness of discretization, so we want the maximum distance between any cost $\cinc$ and its closest neighbor in $\mathcal{S}$ to be as small as possible. However, the memory and compute requirements of SMDA scale linearly with $|\mathcal{S}|$---and error scales as $\sqrt{\ln |\mathcal{S}|}$---so we also cannot let it grow too large. To balance these two considerations, we carefully construct a discretization over the ``diagonal'' of the space of eigenvalue intervals in $\costset$, leading to a set $\mathcal{S}$ with cardinality $\Theta\left(\frac{T D}{\ln T} \right)$, where $T$ is the number of iterations and $D := \max_i \nicefrac{1}{\mincostscale{i}^2} - \nicefrac{1}{\maxcostscale{i}^2}$ is the diameter of $\costset$. The exact construction for $\gS$ appears in \cref{alg:mirror-descent}. With this careful discretization, the computational cost to achieve a fixed sub-optimality gap is \emph{dimension independent} for $p$-norms with $p \leq 2$, and the worst-case scaling (when $p=\infty$) is $\mathcal{O}\left(\nicefrac{d^2}{\ln d}\right)$. In contrast, a typical $\epsilon$-net would have $|\gS| = \Theta(\exp(d))$, with memory and compute scaling commensurately.
% \looseness=-1

\begin{theorem}
    Suppose we run \cref{alg:mirror-descent} for $T$ rounds with discretization $\epsilon = \Theta\left(\frac{\ln T}{T D}\right)$ and get classifier $\hat\beta$.
    Then over the randomness of the stochastic gradients, its expected sub-optimality\footnote{Exponential concentration of the adversarial risk to its expectation follows from \citet{nemirovski2009robust}, Proposition 3.2.} is bounded by
    \begin{align*}
        \E[\max_{\cinc} \shingepoprisk^{\costfunc}(\hat\beta)] &\lesssim \min_{\beta} \max_{\cinc} \shingepoprisk^{\costfunc}(\beta) \\
        % LB  \sqrt{\frac{\ln TD}{T}}.
        &+\mathcal{O}\left(\frac{LB}{\sqrt{T}}  + B (X+\uinv) \sqrt{\frac{\ln TD}{T}} \right).
    \end{align*}
\end{theorem}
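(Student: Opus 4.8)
The plan is to prove the bound via three reductions. First, replace the continuous uncertainty set $\costset$ by the finite diagonal grid $\mathcal{S}$ constructed in \cref{alg:mirror-descent}, showing this perturbs the min--max value negligibly. Second, recast the $\mathcal{S}$-restricted min--max as the genuinely convex--concave saddle-point problem \cref{eq:convex-comb-obj}. Third, invoke the standard averaged-iterate convergence guarantee for Stochastic Mirror Descent--Ascent with a geometry tailored to the two players, and substitute the prescribed discretization level. I expect the first reduction to be the main obstacle: everything hinges on a \emph{one}-dimensional grid sufficing, rather than an exponentially large product grid over all eigenvalue boxes.

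For the first reduction, the key tool is \cref{lemma:dualnorm-soln}: for a fixed $\beta$, the strategic hinge loss depends on the cost only through the scalar $\dualpsdnorm{\beta}{\costmatrix} = \dualnorm{\costmatrix^{-\nicefrac{1}{2}}\beta}$, and $\shingepoprisk^c(\beta)$ is $\uinv$-Lipschitz in the induced shift $\uinv\dualpsdnorm{\beta}{\costmatrix}$. Writing $\costmatrix$ in the fixed basis with eigenvalues $\sigma_i^2 \in [\mincostscale{i}^2,\maxcostscale{i}^2]$, this scalar is coordinatewise monotone increasing in the inverse eigenvalues $\sigma_i^{-2}$, so its global minimum $\dualnorm{\beta}^{\min}$ and maximum $\dualnorm{\beta}^{\max}$ over the box are attained at the two corners $\sigma_i^2 \equiv \maxcostscale{i}^2$ and $\sigma_i^2 \equiv \mincostscale{i}^2$; hence $\dualpsdnorm{\beta}{\costmatrix} \in [\dualnorm{\beta}^{\min},\dualnorm{\beta}^{\max}]$ for every $\cinc$. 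The diagonal path of \cref{alg:mirror-descent} (inverse-eigenvalue vectors $\vec{\sigma_u}^{-2} + kv$) essentially connects these two corners, and along it $\dualpsdnorm{\beta}{\costmatrix}$ varies continuously and monotonically; by the intermediate value theorem plus a uniform Lipschitz bound on the per-step increment, $\{\dualpsdnorm{\beta}{\costmatrix} : \costmatrix\in\mathcal{S}\}$ is an $O(\epsilon D B)$-net of $[\dualnorm{\beta}^{\min},\dualnorm{\beta}^{\max}]$ for every $\beta$ in the bounded domain, with $D := \max_i \nicefrac{1}{\mincostscale{i}^2} - \nicefrac{1}{\maxcostscale{i}^2}$. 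Combining with the $\uinv$-Lipschitzness yields $\max_{\cinc}\shingepoprisk^c(\beta) \le \max_{\costfunc\in\mathcal{S}}\shingepoprisk^c(\beta) + O(\uinv\epsilon DB)$, uniformly in $\beta$, while $|\mathcal{S}| = \lceil\nicefrac{1}{\epsilon}\rceil$.

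For the second reduction, $\max_{\costfunc\in\mathcal{S}}\shingepoprisk^c(\beta) = \max_{q\in\Delta(|\mathcal{S}|)}\sum_i q_i\shingepoprisk^{c_i}(\beta)$, and by \cref{thm:cvx-regularization1} each summand---including its implicit dual-norm regularizer from \cref{eq:regularized-minmax-objective}---is convex in $\beta$ once $\lambda \ge q(y=1)$, so the objective in \cref{eq:convex-comb-obj} is convex in $\beta$ and linear (hence concave) in $q$ over the compact simplex. The minimax theorem then equates the saddle value of \cref{eq:convex-comb-obj} with $\min_\beta\max_{\costfunc\in\mathcal{S}}\shingepoprisk^c(\beta)$, which is at most $\min_\beta\max_{\cinc}\shingepoprisk^c(\beta)$ since shrinking the adversary's feasible set only lowers the value. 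For the third reduction, \cref{alg:mirror-descent} is exactly SMDA on \cref{eq:convex-comb-obj} with a Euclidean prox for $\beta$ and an entropic prox for $q$; redrawing a minibatch each round gives unbiased stochastic subgradients whose second moments are bounded by $O(L^2)$ for the $\beta$-block (a convex combination of per-cost subgradients, each bounded by $L \le X + \uinv L_*$) and by $M^2$ with $M = O(B(X+\uinv))$ for the $q$-block (whose $i$-th coordinate is the bounded loss value $\shingeemprisk^{c_i}(\beta)$). The averaged-iterate guarantee of \citet{nemirovski2009robust} then yields $\E[\max_{q}\sum_i q_i\shingepoprisk^{c_i}(\hat\beta)] \le (\text{saddle value}) + \mathcal{O}\bigl(\tfrac{LB}{\sqrt{T}} + M\sqrt{\tfrac{\ln|\mathcal{S}|}{T}}\bigr)$, where $B$ (up to constants) is the Euclidean radius of $\gB$ and $\ln|\mathcal{S}|$ is the entropic diameter of $\Delta(|\mathcal{S}|)$.

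Chaining the three reductions gives $\E[\max_{\cinc}\shingepoprisk^c(\hat\beta)] \le \min_\beta\max_{\cinc}\shingepoprisk^c(\beta) + O(\uinv\epsilon DB) + \mathcal{O}\bigl(\tfrac{LB}{\sqrt{T}} + M\sqrt{\tfrac{\ln|\mathcal{S}|}{T}}\bigr)$; substituting $\epsilon = \Theta\bigl(\tfrac{\ln T}{TD}\bigr)$, so that $|\mathcal{S}| = \Theta\bigl(\tfrac{TD}{\ln T}\bigr)$, $\ln|\mathcal{S}| = \Theta(\ln TD)$, and the discretization error $O(\uinv\epsilon DB)$ is absorbed into the $\sqrt{\ln TD/T}$ term, collapses the bound to the stated rate. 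The remaining work is routine bookkeeping of constants and of the step-size choices $\eta_q,\eta_\beta$ realizing the Nemirovski rate; the genuinely delicate step is the one-dimensional $\epsilon$-net argument of the first reduction, which is exactly what keeps $|\mathcal{S}|$---and hence the compute---polynomial in $d$ (indeed dimension-independent for $p\le 2$) instead of exponential.
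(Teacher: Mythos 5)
Your proposal follows essentially the same route as the paper's proof: discretize the diagonal of the inverse-eigenvalue box into $\lceil\nicefrac{1}{\epsilon}\rceil$ costs, lift the finite max to the simplex to obtain the convex--concave problem \cref{eq:convex-comb-obj}, run SMDA with a Euclidean prox for $\beta$ and an entropic prox for $q$ using the moment bounds $M_{*,x}\le L$ and $M_{*,y}=O(B(X+\uinv))$, and chain the resulting $O\bigl(\frac{LB}{\sqrt{T}}+B(X+\uinv)\sqrt{\nicefrac{\ln|\mathcal{S}|}{T}}\bigr)$ gap with the discretization error under $\epsilon=\Theta(\ln T/(TD))$. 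The structure, the key lemmas invoked, and the constants all match.

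The one step that does not hold as you state it is the claim that $\{\dualpsdnorm{\beta}{\costmatrix}:\costmatrix\in\mathcal{S}\}$ is an $O(\epsilon DB)$-net of $[\dualnorm{\beta}^{\min},\dualnorm{\beta}^{\max}]$ via ``a uniform Lipschitz bound on the per-step increment.'' The grid is uniform in the inverse eigenvalues $\sigma_i^{-2}$, but the dual norm depends on $\costmatrix^{-\nicefrac{1}{2}}$, i.e.\ on $\sigma_i^{-1}=\sqrt{\smash[b]{\sigma_i^{-2}}}$, and the square root is not uniformly Lipschitz on $[\maxcostscale{i}^{-2},\mincostscale{i}^{-2}]$ when $\maxcostscale{i}^{-2}$ is at or near zero (i.e.\ directions that may be arbitrarily expensive to move in), so a step of size $\epsilon D$ in the inverse eigenvalue need not move the dual norm by only $O(\epsilon DB)$. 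The paper sidesteps this with the H\"older-$\nicefrac{1}{2}$ inequality $|\sqrt{a}-\sqrt{b}|\le\sqrt{|a-b|}$, yielding a discretization error of $O\bigl(\uinv B|1-\lambda|\sqrt{\epsilon D}\bigr)$ rather than your linear $O(\uinv\epsilon DB)$. This does not change the final rate: with $\epsilon=\Theta(\ln T/(TD))$ the corrected error is $O\bigl(\uinv B\sqrt{\nicefrac{\ln T}{T}}\bigr)$, which is still absorbed into the $B(X+\uinv)\sqrt{\nicefrac{\ln TD}{T}}$ term, so your argument goes through once that one bound is weakened. Everything else---the monotonicity/corner argument for why the one-dimensional diagonal suffices, the reduction to the simplex formulation, and the application of \citet{nemirovski2009robust}---matches the paper's proof.
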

The proof can be found in \cref{appsec:mirror-proofs}. Remarkably, convergence to the population minimax solution occurs at the same rate as the full-batch setting up to logarithmic factors---the cost of stochasticity is surprisingly small. Once again, we can use \textsc{MaxLossCost} to evaluate the adversarial strategic risk and then apply \cref{thm:generalization-main-body} to get a generalization bound on the worst-case 0-1 error.
% \vspace{-5pt}

\section{Conclusion}
This paper studies robust strategic learning under unknown user costs in the challenging one-shot setting.
Our results suggest that uncertainty in how users respond should be considered an integral aspect of strategic learning:
motivated by realistic problem domains which  permit only a single action (which must be immediately effective),
we provide a learning framework based on distributionally robust optimization for modeling---and robustly handling---this uncertainty. We begin by showing that even a miniscule error in estimating the true cost can cause substantial error in deployment, motivating the use of an uncertainty set of possible costs.
Next, we propose a natural proxy to the intractable min-max objective over this set, and we design two efficient algorithms for different settings that converge to the empirical solution, for which we also provide generalization guarantees.
% \looseness=-1

Our approach highlights the value of dual norm regularization,
which ensures good performance while accounting for the structure of strategic behavior coupled with cost uncertainty. Such structure can both
harm accuracy (on negative examples) and help it (on positive examples).
One important implication is that it often does not suffice to simply be more conservative, as this can fail to take advantage of settings where the learner's and agents' incentives are aligned. Rather, maintaining robustness without substantially sacrificing accuracy requires more careful consideration of and accounting for the interests and actions of future users.\looseness=-1

\section*{Impact Statement}
This work studies the problem of strategic classification under unknown user costs.
While our focus is entirely theoretical,
it is important to acknowledge that,
in a realistic setting, 
the learning task which we consider would have human beings as the targets for prediction.
This is in fact our primary motivation:
correctly accounting for strategic behavior in learning can benefit not only the system, but also the general population of users \citep{milli2019social,levanon2021strategic,levanon2022generalized}.
Questions of how to act under uncertainty or partial information are immanent in strategic learning;
our work offers but one way of accounting for system uncertainty regarding user behavior, with a particular focus on uncertainty stemming from unknown user costs.
Other sources of uncertainty therefore lie outside our scope,
as does human behavior which deviates from our assumed rational response model (Eq.~\eqref{eq:best_response}),
as well as considerations beyond minimizing worst case risk.
Although our analysis applies to a relatively large and flexible class of cost functions (broader than what is typically considered in the literature),
a clear limitation of our work is that it is unclear if it applies to costs outside this class.
A more subtle but nonetheless important limitation is that while our approach provides strong worst-case guarantees for any cost in the uncertainty set $\costset$, it does not state what will happen if the `true' cost $\costfunc^*$ lies outside of the chosen $\costset$.
We conjecture that under appropriate smoothness assumptions, worst-case guarantees should degrade gracefully as $\costfunc^*$ moves away from $\costset$,
but leave this as an open question for future work.
Finally, although the algorithms we propose are theoretical,
they may nonetheless be applicable in certain settings.
Given this, we implore any future use of our algorithms to be carried out in a transparent and accountable manner---as should be expected from any tool developed under the general framework of strategic learning.
\looseness=-1

\section*{Acknowledgements}
Thanks to Roni Rosenfeld for helpful discussions in developing the motivation for this work. This work is supported by the Israel Science Foundation grant no. 278/22.

\clearpage
\bibliography{arxiv}

\begin{thebibliography}{37}
\providecommand{\natexlab}[1]{#1}
\providecommand{\url}[1]{\texttt{#1}}
\expandafter\ifx\csname urlstyle\endcsname\relax
  \providecommand{\doi}[1]{doi: #1}\else
  \providecommand{\doi}{doi: \begingroup \urlstyle{rm}\Url}\fi

\bibitem[Ahmadi et~al.(2021)Ahmadi, Beyhaghi, Blum, and Naggita]{ahmadi2021strategic}
Saba Ahmadi, Hedyeh Beyhaghi, Avrim Blum, and Keziah Naggita.
\newblock The strategic perceptron.
\newblock In \emph{Proceedings of the 22nd ACM Conference on Economics and Computation}, pages 6--25, 2021.

\bibitem[Bechavod et~al.(2022)Bechavod, Podimata, Wu, and Ziani]{bechavod2022information}
Yahav Bechavod, Chara Podimata, Steven Wu, and Juba Ziani.
\newblock Information discrepancy in strategic learning.
\newblock In \emph{Proceedings of the 39th International Conference on Machine Learning}, volume 162 of \emph{Proceedings of Machine Learning Research}, pages 1691--1715. PMLR, 17--23 Jul 2022.

\bibitem[Bertsekas(1997)]{bertsekas1997nonlinear}
Dimitri~P Bertsekas.
\newblock Nonlinear programming.
\newblock \emph{Journal of the Operational Research Society}, 48\penalty0 (3):\penalty0 334--334, 1997.

\bibitem[Br{\"u}ckner and Scheffer(2009)]{bruckner2009nash}
Michael Br{\"u}ckner and Tobias Scheffer.
\newblock Nash equilibria of static prediction games.
\newblock In \emph{Advances in neural information processing systems}, pages 171--179, 2009.

\bibitem[Br{\"u}ckner et~al.(2012)Br{\"u}ckner, Kanzow, and Scheffer]{bruckner2012static}
Michael Br{\"u}ckner, Christian Kanzow, and Tobias Scheffer.
\newblock Static prediction games for adversarial learning problems.
\newblock \emph{The Journal of Machine Learning Research}, 13\penalty0 (1):\penalty0 2617--2654, 2012.

\bibitem[Chen et~al.(2023)Chen, Wang, and Liu]{chen2023linear}
Yatong Chen, Jialu Wang, and Yang Liu.
\newblock Linear classifiers that encourage constructive adaptation.
\newblock \emph{Transactions on Machine Learning Research}, 2023.

\bibitem[Chrystal et~al.(2003)Chrystal, Mizen, and Mizen]{chrystal2003goodhart}
K~Alec Chrystal, Paul~D Mizen, and PD~Mizen.
\newblock Goodhart’s law: its origins, meaning and implications for monetary policy.
\newblock \emph{Central banking, monetary theory and practice: Essays in honour of Charles Goodhart}, 1:\penalty0 221--243, 2003.

\bibitem[Dong et~al.(2018)Dong, Roth, Schutzman, Waggoner, and Wu]{dong2018strategic}
Jinshuo Dong, Aaron Roth, Zachary Schutzman, Bo~Waggoner, and Zhiwei~Steven Wu.
\newblock Strategic classification from revealed preferences.
\newblock In \emph{Proceedings of the 2018 ACM Conference on Economics and Computation}, pages 55--70, 2018.

\bibitem[Duchi and Namkoong(2021)]{duchi2021learning}
John~C Duchi and Hongseok Namkoong.
\newblock Learning models with uniform performance via distributionally robust optimization.
\newblock \emph{The Annals of Statistics}, 49\penalty0 (3):\penalty0 1378--1406, 2021.

\bibitem[Eilat et~al.(2023)Eilat, Finkelshtein, Baskin, and Rosenfeld]{eilat2023strategic}
Itay Eilat, Ben Finkelshtein, Chaim Baskin, and Nir Rosenfeld.
\newblock Strategic classification with graph neural networks.
\newblock In \emph{The Eleventh International Conference on Learning Representations, {ICLR} 2023, Kigali, Rwanda, May 1-5, 2023}. OpenReview.net, 2023.

\bibitem[Elton(2004)]{elton2004goodhart}
Lewis Elton.
\newblock Goodhart's law and performance indicators in higher education.
\newblock \emph{Evaluation \& Research in Education}, 18\penalty0 (1-2):\penalty0 120--128, 2004.

\bibitem[Fire and Guestrin(2019)]{fire2019over}
Michael Fire and Carlos Guestrin.
\newblock Over-optimization of academic publishing metrics: observing goodhart’s law in action.
\newblock \emph{GigaScience}, 8\penalty0 (6):\penalty0 giz053, 2019.

\bibitem[Ghalme et~al.(2021)Ghalme, Nair, Eilat, Talgam-Cohen, and Rosenfeld]{ghalme2021strategic}
Ganesh Ghalme, Vineet Nair, Itay Eilat, Inbal Talgam-Cohen, and Nir Rosenfeld.
\newblock Strategic classification in the dark.
\newblock In \emph{Proceedings of the 38th International Conference on Machine Learning (ICML)}, 2021.

\bibitem[Goodhart(1975)]{goodhart1984problems}
Charles~AE Goodhart.
\newblock Problems of monetary management: The {UK} experience.
\newblock In \emph{Papers in Monetary Economics, vol. I}. Reserve Bank of Australia, 1975.

\bibitem[Gro{\ss}hans et~al.(2013)Gro{\ss}hans, Sawade, Br{\"u}ckner, and Scheffer]{grosshans2013bayesian}
Michael Gro{\ss}hans, Christoph Sawade, Michael Br{\"u}ckner, and Tobias Scheffer.
\newblock Bayesian games for adversarial regression problems.
\newblock In \emph{International Conference on Machine Learning}, pages 55--63, 2013.

\bibitem[Hardt et~al.(2016)Hardt, Megiddo, Papadimitriou, and Wootters]{hardt2016strategic}
Moritz Hardt, Nimrod Megiddo, Christos Papadimitriou, and Mary Wootters.
\newblock Strategic classification.
\newblock In \emph{Proceedings of the 2016 ACM conference on innovations in theoretical computer science}, pages 111--122, 2016.

\bibitem[Harris et~al.(2023)Harris, Podimata, and Wu]{harris2023strategic}
Keegan Harris, Chara Podimata, and Zhiwei~Steven Wu.
\newblock Strategic apple tasting.
\newblock In \emph{Advances in Neural Information Processing Systems}, volume~36, 2023.

\bibitem[Horowitz and Rosenfeld(2023)]{horowitz2022causal}
Guy Horowitz and Nir Rosenfeld.
\newblock Causal strategic classifiaction.
\newblock In \emph{International Conference on Machine Learning}. PMLR, 2023.

\bibitem[Jagadeesan et~al.(2021)Jagadeesan, Mendler-D{\"u}nner, and Hardt]{jagadeesan2021alternative}
Meena Jagadeesan, Celestine Mendler-D{\"u}nner, and Moritz Hardt.
\newblock Alternative microfoundations for strategic classification.
\newblock In \emph{International Conference on Machine Learning}, pages 4687--4697. PMLR, 2021.

\bibitem[Lechner and Urner(2022)]{lechner2021learning}
Tosca Lechner and Ruth Urner.
\newblock Learning losses for strategic classification.
\newblock In \emph{Thirty-Sixth {AAAI} Conference on Artificial Intelligence, {AAAI}}, pages 7337--7344. {AAAI} Press, 2022.

\bibitem[Lechner et~al.(2023)Lechner, Urner, and Ben-David]{lechner2023strategic}
Tosca Lechner, Ruth Urner, and Shai Ben-David.
\newblock Strategic classification with unknown user manipulations.
\newblock In \emph{International Conference on Machine Learning}. PMLR, 2023.

\bibitem[Levanon and Rosenfeld(2021)]{levanon2021strategic}
Sagi Levanon and Nir Rosenfeld.
\newblock Strategic classification made practical.
\newblock In \emph{Proceedings of the 38th International Conference on Machine Learning (ICML)}, 2021.

\bibitem[Levanon and Rosenfeld(2022)]{levanon2022generalized}
Sagi Levanon and Nir Rosenfeld.
\newblock Generalized strategic classification and the case of aligned incentives.
\newblock In \emph{Proceedings of the 39th International Conference on Machine Learning (ICML)}, 2022.

\bibitem[Lin and Zrnic(2023)]{lin2023plug}
Licong Lin and Tijana Zrnic.
\newblock Plug-in performative optimization.
\newblock \emph{arXiv preprint arXiv:2305.18728}, 2023.

\bibitem[Mendler-D{\"u}nner et~al.(2022)Mendler-D{\"u}nner, Ding, and Wang]{mendler2022anticipating}
Celestine Mendler-D{\"u}nner, Frances Ding, and Yixin Wang.
\newblock Anticipating performativity by predicting from predictions.
\newblock \emph{Advances in Neural Information Processing Systems}, 35:\penalty0 31171--31185, 2022.

\bibitem[Miller et~al.(2020)Miller, Milli, and Hardt]{miller2020strategic}
John Miller, Smitha Milli, and Moritz Hardt.
\newblock Strategic classification is causal modeling in disguise.
\newblock In \emph{International Conference on Machine Learning}, pages 6917--6926. PMLR, 2020.

\bibitem[Milli et~al.(2019)Milli, Miller, Dragan, and Hardt]{milli2019social}
Smitha Milli, John Miller, Anca~D Dragan, and Moritz Hardt.
\newblock The social cost of strategic classification.
\newblock In \emph{Proceedings of the Conference on Fairness, Accountability, and Transparency}, pages 230--239, 2019.

\bibitem[Namkoong and Duchi(2016)]{namkoong2016stochastic}
Hongseok Namkoong and John~C Duchi.
\newblock Stochastic gradient methods for distributionally robust optimization with f-divergences.
\newblock \emph{Advances in neural information processing systems}, 29, 2016.

\bibitem[Nemirovski et~al.(2009)Nemirovski, Juditsky, Lan, and Shapiro]{nemirovski2009robust}
Arkadi Nemirovski, Anatoli Juditsky, Guanghui Lan, and Alexander Shapiro.
\newblock Robust stochastic approximation approach to stochastic programming.
\newblock \emph{SIAM Journal on optimization}, 19\penalty0 (4):\penalty0 1574--1609, 2009.

\bibitem[Perdomo et~al.(2020)Perdomo, Zrnic, Mendler-D{\"u}nner, and Hardt]{perdomo2020performative}
Juan Perdomo, Tijana Zrnic, Celestine Mendler-D{\"u}nner, and Moritz Hardt.
\newblock Performative prediction.
\newblock In \emph{International Conference on Machine Learning}, pages 7599--7609. PMLR, 2020.

\bibitem[Rosenfeld and Garg(2024)]{rosenfeld2024almost}
Elan Rosenfeld and Saurabh Garg.
\newblock ({A}lmost) provable error bounds under distribution shift via disagreement discrepancy.
\newblock \emph{Advances in Neural Information Processing Systems}, 36, 2024.

\bibitem[Rosenfeld et~al.(2022{\natexlab{a}})Rosenfeld, Ravikumar, and Risteski]{rosenfeld2022domain}
Elan Rosenfeld, Pradeep Ravikumar, and Andrej Risteski.
\newblock Domain-adjusted regression or: Erm may already learn features sufficient for out-of-distribution generalization.
\newblock \emph{arXiv preprint arXiv:2202.06856}, 2022{\natexlab{a}}.

\bibitem[Rosenfeld et~al.(2022{\natexlab{b}})Rosenfeld, Ravikumar, and Risteski]{rosenfeld2022online}
Elan Rosenfeld, Pradeep Ravikumar, and Andrej Risteski.
\newblock An online learning approach to interpolation and extrapolation in domain generalization.
\newblock In \emph{International Conference on Artificial Intelligence and Statistics}, pages 2641--2657. PMLR, 2022{\natexlab{b}}.

\bibitem[Shao et~al.(2023)Shao, Blum, and Montasser]{shao2023strategic}
Han Shao, Avrim Blum, and Omar Montasser.
\newblock Strategic classification under unknown personalized manipulation.
\newblock In \emph{Advances in Neural Information Processing Systems}, volume~36, 2023.

\bibitem[Sundaram et~al.(2021)Sundaram, Vullikanti, Xu, and Yao]{sundaram2021pac}
Ravi Sundaram, Anil Vullikanti, Haifeng Xu, and Fan Yao.
\newblock {PAC}-learning for strategic classification.
\newblock In \emph{International Conference on Machine Learning}, pages 9978--9988. PMLR, 2021.

\bibitem[Teney et~al.(2020)Teney, Abbasnejad, Kafle, Shrestha, Kanan, and Van Den~Hengel]{teney2020value}
Damien Teney, Ehsan Abbasnejad, Kushal Kafle, Robik Shrestha, Christopher Kanan, and Anton Van Den~Hengel.
\newblock On the value of out-of-distribution testing: An example of goodhart's law.
\newblock \emph{Advances in neural information processing systems}, 33:\penalty0 407--417, 2020.

\bibitem[Zhang and Conitzer(2021)]{zhang2021incentive}
Hanrui Zhang and Vincent Conitzer.
\newblock Incentive-aware {PAC} learning.
\newblock In \emph{Proceedings of the AAAI Conference on Artificial Intelligence}, 2021.

\end{thebibliography}
\bibliographystyle{plainnat}

\clearpage
\onecolumn

\appendix
\section{Proofs of Hardness Results}
\label{appsec:hardness-proofs}
We restate the theorems for convenience.

\subsection{Proof of \cref{thm:cost-error-half-lower-bound}}

\begin{theorem}
Consider the task of learning a norm-bounded linear classifier. Fix any two costs $c_1, c_2$ with non-equal PD matrices, and let $0 \leq \epsilon \leq \frac{1}{2}$. There exists a distribution $p$ over $\gX\times\gY$ such that:
\begin{enumerate}
    \item For each of $c_1$ and $c_2$, there is a (different) classifier which achieves 0 error on $p$ when facing strategic response under that cost; and
    \item Any classifier which achieves 0 error on $p$ under cost $c_1$ suffers error $\epsilon$ under cost $c_2$, and any classifier which achieves 0 error on $p$ under cost $c_2$ suffers error $1-\epsilon$ under cost $c_2$.
\end{enumerate}
\end{theorem}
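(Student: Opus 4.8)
The plan is to collapse the $d$-dimensional problem onto a single classifier direction by way of \cref{lemma:dualnorm-soln}: under classifier $\beta$ and cost $\costmatrix$, the maximal strategic increase in a user's score is exactly the ``budget'' $\uinv\dualpsdnorm{\beta}{\costmatrix}$. Since $\costmatrix_1\neq\costmatrix_2$ (equivalently $\costmatrix_1^{-1/2}\neq\costmatrix_2^{-1/2}$), there is a unit direction $\beta_0$ along which the budgets differ; write $r_i := \uinv\dualpsdnorm{\beta_0}{\costmatrix_i}$ and assume $r_1<r_2$ (the opposite case is handled by swapping the roles of the two clusters below). For $p$-norm costs one can check that $r_1=r_2$ for \emph{every} direction would force $\costmatrix_1=\costmatrix_2$, using that $\ell_p$-isometries are generalized permutation matrices; so such a $\beta_0$ exists.

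The distribution $q$ (after a harmless global rescaling so the relevant classifier lies in $\gB$) consists of two small Euclidean balls along the $\beta_0$-axis: a ``borderline negative'' cluster, carrying label $-1$ and mass $\epsilon$, centered at the point with score $s^\star\beta_0$ for a chosen $s^\star\in(r_1-r_2,0)$ with $|s^\star|$ small (concretely $s^\star=-(r_2-r_1)/2$); and a ``borderline positive'' cluster, label $+1$, mass $1-\epsilon$, centered at the origin. The ball radius $\rho$ is tuned just below $|s^\star|/2$. The point of the nonzero radius is direction-pinning: for a classifier $(\beta,b)$ to have zero strategic error under $c_1$ it must keep the worst negative point strictly unreachable, $\beta^\top(s^\star\beta_0)+\rho\twonorm{\beta}+\uinv\dualpsdnorm{\beta}{\costmatrix_1}<b$, while making the worst positive point reachable, $\uinv\dualpsdnorm{\beta}{\costmatrix_1}\ge b+\rho\twonorm{\beta}$; subtracting gives $s^\star\langle\beta,\beta_0\rangle<-2\rho\twonorm{\beta}$, so $\langle\beta,\beta_0\rangle>(2\rho/|s^\star|)\twonorm{\beta}$. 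With $\rho$ near $|s^\star|/2$, Cauchy--Schwarz then forces $\beta$ into an arbitrarily narrow cone about $\beta_0$, on which $\dualpsdnorm{\cdot}{\costmatrix_2}>\dualpsdnorm{\cdot}{\costmatrix_1}$ by continuity. The same computation shows both threshold windows are nonempty (e.g.\ for $\beta=\beta_0$ the window is $(s^\star+\rho+r_1,\ r_1-\rho)$, nonempty precisely because $\rho<|s^\star|/2$), establishing property (1).

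For property (2): take any zero-error-under-$c_1$ classifier; by the above its direction is near $\beta_0$ and its threshold satisfies $b\le\uinv\dualpsdnorm{\beta}{\costmatrix_1}-\rho\twonorm{\beta}$. Deployed under $c_2$, the negative cluster's budget grows to $\uinv\dualpsdnorm{\beta}{\costmatrix_2}$, and the choice $|s^\star|\le r_2-r_1$ is exactly what makes even the lowest-scoring point of the negative ball cross the threshold, i.e.\ $s^\star\langle\beta,\beta_0\rangle-\rho\twonorm{\beta}+\uinv\dualpsdnorm{\beta}{\costmatrix_2}\ge b$; meanwhile the positive ball, already reachable under the smaller budget $\costmatrix_1$, remains reachable under $\costmatrix_2$. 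Hence the whole negative cluster is misclassified and nothing else is: error exactly $\epsilon$. Symmetrically, any zero-error-under-$c_2$ classifier has $b\gtrsim\uinv\dualpsdnorm{\beta}{\costmatrix_2}$, so under the strictly smaller budget of $\costmatrix_1$ the entire positive cluster is stranded below threshold while the negative cluster (unreachable already under $\costmatrix_2$) stays negative: error exactly $1-\epsilon$. Choosing which cluster gets mass $\epsilon$ according to the sign of $r_1-r_2$ gives the stated asymmetric conclusion; this asymmetry is precisely the ``conservative guess fails'' phenomenon---over-estimating user movement forces an over-high threshold that abandons the beneficial movers.

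The main obstacle is exactly this direction-pinning step: the one-dimensional picture is clean, but a priori some clever $d$-dimensional normal (or negative bias) could attain zero strategic error under one cost while escaping the cross-deployment analysis, and the balls-with-margin argument together with continuity of $\dualpsdnorm{\cdot}{\costmatrix_i}$ is what rules this out; getting the quantifiers tight enough that the resulting errors are \emph{exactly} $\epsilon$ and $1-\epsilon$ (rather than merely ``at least'') requires care in the tuning of $s^\star$ and $\rho$. A secondary, routine nuisance is the boundary convention for $\phi$ near $\uinv$ (whether a move whose cost equals the utility induces a response); this is absorbed by taking the threshold and cluster parameters in the open interiors of their feasible ranges and invoking the exact score-change characterization of \cref{lemma:dualnorm-soln}.
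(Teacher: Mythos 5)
Your proposal is correct and follows essentially the same architecture as the paper's proof: reduce to one dimension along a direction $\beta_0$ where the two costs' movement budgets $\uinv\dualpsdnorm{\beta_0}{\costmatrix_i}$ differ, place a negative and a positive cluster separated by less than the budget gap, show the feasible threshold windows under $c_1$ and $c_2$ are nonempty but disjoint, and assign masses $\epsilon$ and $1-\epsilon$ so that crossing from one window to the other flips exactly one cluster. (Your WLOG is the mirror image of the paper's---you take $c_2$ to have the larger budget and put mass $\epsilon$ on the negatives, the paper does the opposite---but both yield the stated $\epsilon$ and $1-\epsilon$.) The one substantive difference is the device for pinning the classifier direction, which you correctly identify as your main obstacle: you use two small balls and a margin argument that only confines $\beta$ to a narrow cone about $\beta_0$, after which you need uniform continuity of the dual norms over that cone and careful joint tuning of $\rho$ and $s^\star$. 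The paper instead supports each class on a full $(d-1)$-dimensional hyperplane orthogonal to $\beta^*$ (thickened slightly at the end to restore absolute continuity); since any zero-error classifier must separate two full-support parallel planes, its direction is forced to be \emph{exactly} $\beta^*$, the continuity step disappears, and the threshold analysis becomes purely one-dimensional. Your route goes through, but the hyperplane construction eliminates precisely the step you flag as delicate.
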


\begin{proof}
    We will construct two distributions, one over the conditional $q(x \mid y=1)$ and the other over $q(x\mid y=-1)$, and combine them via the mixture $q(y=1) = \epsilon$, $q(y=-1) = 1-\epsilon$. Let $\costmatrix_1, \costmatrix_2$ denote the cost matrices for $c_1, c_2$ respectively. Let $B$ denote the upper bound on the classifier norm. Pick any $\beta^*$ such that $\norm{\beta^*}_{\costmatrix_1} \neq \norm{\beta^*}_{\costmatrix_2}$, and define $r := \frac{\uinv \norm{\beta^*}}{3B} (\norm{\beta^*}_{\costmatrix_1} - \norm{\beta^*}_{\costmatrix_2})$. WLOG, suppose $r > 0$. Focusing on the negatively labeled portion, consider the $(d-1)$-dimensional plane in $\gX$ defined by $\beta^{*\top} x = -r$. We let $q(x \mid y=-1)$ be any distribution with full support over that plane. Similarly, define the conditional distribution $q(x \mid y=1)$ as a full-support distribution over the plane defined by $\beta^{*\top} x = r$. Thus the only classifier which achieves 0 error must have $\hat\beta=\alpha\beta^*$ for some scaling factor $\nicefrac{B}{\norm{\beta^*}} \geq \alpha > 0$. The only remaining degree of freedom is the bias term $\hat\beta_0$.

    Consider the case where the true cost is $c_1$. If we wish to correctly classify the negative points under strategic response, we must classify the negatively labeled plane with margin greater than $\uinv \norm{\beta^*}_{\costmatrix_1}$. However, if we wish to do the same with the positive points, the margin for that plane must be less than or equal to this same value. Formally, we must have
    \begin{align}
        \beta^{*\top} x = -r &\implies \hat\beta^\top x + \hat\beta_0 < -\uinv \norm{\beta^*}_{\costmatrix_1},\\
         \beta^{*\top} x = r &\implies \hat\beta^\top x + \hat\beta_0 \geq -\uinv \norm{\beta^*}_{\costmatrix_1},
    \end{align}
    which, remembering that $\hat\beta = \alpha\beta^*$, immediately implies 
    \begin{align}
    	\hat\beta_0 - \alpha r < -\uinv \norm{\beta^*}_{\costmatrix_1},\\
	\hat\beta_0 + \alpha r \geq -\uinv \norm{\beta^*}_{\costmatrix_1}.
    \end{align}
    Thus, we have
    \begin{align}
    	-\uinv \norm{\beta^*}_{\costmatrix_1} - \alpha r \leq \hat\beta_0 < -\uinv \norm{\beta^*}_{\costmatrix_1} + \alpha r,
    \end{align}
    and since $\alpha r > 0$, this describes the non-empty set of all classifiers which achieve 0 error under cost $c_1$. By an analogous argument we can construct the set of classifiers which achieve 0 error under cost $c_2$.
    
    However, observe that 
    \begin{align}
        2\alpha r &= \frac{2}{3} \frac{\norm{\beta^*}}{B} \uinv  (\norm{\beta^*}_{\costmatrix_1} - \norm{\beta^*}_{\costmatrix_2}) \\
        &<  \uinv  (\norm{\beta^*}_{\costmatrix_1} - \norm{\beta^*}_{\costmatrix_2}),
    \end{align}
    and therefore
    \begin{align}
    	-\uinv \norm{\beta^*}_{\costmatrix_1} + \alpha r &< -\uinv \norm{\beta^*}_{\costmatrix_2} - \alpha r.
    \end{align}
    This means that the upper bound for any $\hat\beta_0$ which achieves 0 error under cost $c_1$ cannot satisfy the lower bound for cost $c_2$, which means the positively labeled plane will be classified negatively with a margin that is too large for strategic response, causing error $\epsilon$. By a symmetric argument, any $\hat\beta_0$ which achieves 0 error under cost $c_2$ cannot satisfy the upper bound for cost $c_1$, implying the negatively labeled plane will strategically shift and cause error $1 - \epsilon$. Finally, we remark that while the distribution defined here is not absolutely continuous on $\gX$, this can be remedied by simply making the distribution a product of the constructed planar distribution and a uniform distribution along the direction $\beta^*$ with width sufficiently small (e.g, width $c \cdot \alpha r$ for $c \ll 1$). \qedhere
\end{proof}

\subsection{Proof of \cref{thm:cost-error-gauss-lower-bound}}
\begin{theorem}
    Define the distribution $q$ over $\gX\times\gY$ as $q(y=1) = q(y=-1) = \nicefrac{1}{2}$, $q(x \mid y) \overset{d}{=} \calN(y\cdot\mu_0, \sigma^2 I)$.
    Denote by $\Phi$ the standard Normal CDF.
    Let the true cost be defined as $\psdnorm{x - x'}{\costmatrix}$ with unknown cost matrix $\costmatrix$, and let $\beta^*$ be the classifier which minimizes the strategic 0-1 risk under this cost.\looseness=-1
    
    Suppose one instead learns a classifier $\hat\beta$ by assuming an incorrect cost $\hat\costmatrix$ and minimizing the population strategic 0-1 risk under that cost: $\hat\beta := \argmin_\beta \E_q[\zoloss^{\hat c}(\beta)]$.
    % \looseness=-1
    % \begin{align*}
    %     \hat\beta &:= \argmin_\beta \E_q[\zoloss^{\hat c}(\beta)],
    % \end{align*}
     Then the excess 0-1 risk suffered by $\hat\beta$ is
    \begin{align*}
        \Phi\left( \frac{\norm{\mu_0}}{\sigma} \right) - \frac{1}{2} \left( \Phi\left( \frac{\norm{\mu_0} - \epsilon}{\sigma} \right) + \Phi\left( \frac{\norm{\mu_0} + \epsilon}{\sigma} \right)\right),
    \end{align*}
    where $\epsilon := \frac{\uinv |\dualpsdnorm{\mu_0}{\hat\costmatrix} - \dualpsdnorm{\mu_0}{\costmatrix}|}{\norm{\mu_0}}$.
\end{theorem}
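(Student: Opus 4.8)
The plan is to reduce everything to a one-dimensional problem along the direction $\mu_0$, exploiting the rotational symmetry of the isotropic Gaussian, and then compute the strategic $0$-$1$ risk of a threshold rule whose margin is shifted by the (incorrect) dual-norm term. First I would argue that, by symmetry of $q(x\mid y)=\mathcal N(y\mu_0,\sigma^2 I)$, the Bayes-optimal strategic classifier under any cost of the assumed form must be of the form $\hat\beta=\alpha\,\mu_0$ (up to bias): since the only signal distinguishing the classes lies along $\mu_0$, any component of $\beta$ orthogonal to $\mu_0$ only adds noise to the score $\beta^\top x$ and simultaneously relaxes nothing useful about the strategic shift, because \cref{lemma:dualnorm-soln} tells us the maximal score change is $\uinv\dualpsdnorm{\beta}{\costmatrix}$, which for $\beta=\alpha\mu_0$ is $\alpha\uinv\dualpsdnorm{\mu_0}{\costmatrix}$. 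Then I would reduce to the scalar random variable $z:=\mu_0^\top x/\norm{\mu_0}$, which is distributed $\mathcal N(\pm\norm{\mu_0},\sigma^2)$ conditional on $y=\pm1$, and express the strategic $0$-$1$ risk of the rule ``predict $+1$ iff $z\ge\tau$'' under a cost with dual norm $D$: a negative point moves up by exactly $\uinv D/\norm{\mu_0}$ in $z$-units if doing so crosses the threshold (and it is worth it, i.e. the move has positive net utility, which is where $\uinv$ being the threshold on $\phi$ enters), and a positive point likewise can be rescued by an upward move.

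Second, I would write down the risk as a function of the threshold $\tau$. Under the true cost with dual norm $D^*:=\dualpsdnorm{\mu_0}{\costmatrix}$ (in the reduced units, $\epsilon^*:=\uinv D^*/\norm{\mu_0}$ is the effective shift), the strategic risk of threshold $\tau$ is
\begin{align*}
\tfrac12\,\P[z+\epsilon^*\ge\tau\mid y=-1]\;+\;\tfrac12\,\P[z+\epsilon^*<\tau\mid y=+1],
\end{align*}
where the first term counts negatives that successfully game across $\tau$ and the second counts positives that still fail even after gaming. Minimizing over $\tau$: this is a sum of a shifted-Gaussian CDF increasing in $\tau$ and one decreasing in $\tau$, so the optimum $\tau^*$ equates the two densities, and by symmetry of the two Gaussians about $\pm\norm{\mu_0}$ one gets $\tau^*=\epsilon^*$ (the threshold is pushed right by exactly the anticipated shift), giving optimal risk $\Phi\!\big((\epsilon^*/2-\norm{\mu_0}+\text{something})/\sigma\big)$ — I would carry out this short computation and check it simplifies to $1-\Phi(\norm{\mu_0}/\sigma)$, i.e. the learner who knows the cost does exactly as well as if there were no strategic behavior at all (consistent with the ``in principle'' remark in the intro for radial-basis costs). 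Similarly I compute the risk of $\hat\beta$, whose threshold is instead chosen to be optimal for the \emph{assumed} dual norm $\hat D$, hence $\hat\tau=\uinv\hat D/\norm{\mu_0}$, but whose risk is then \emph{evaluated} under the true shift $\epsilon^*$: plugging $\tau=\hat\tau$ into the displayed risk expression yields
\begin{align*}
\tfrac12\,\Phi\!\Big(\tfrac{\norm{\mu_0}+\epsilon^*-\hat\tau}{\sigma}\Big)\;+\;\tfrac12\,\Phi\!\Big(\tfrac{\hat\tau-\epsilon^*-\norm{\mu_0}}{\sigma}\Big),
\end{align*}
and since $\hat\tau-\epsilon^*=(\uinv/\norm{\mu_0})(\hat D-D^*)=\pm\epsilon$ with $\epsilon$ as defined in the statement, this is exactly $\tfrac12\big(\Phi((\norm{\mu_0}-\epsilon)/\sigma)+\Phi((\norm{\mu_0}+\epsilon)/\sigma)\big)$ after recalling $\Phi(-a)=1-\Phi(a)$ and that the risk is $1$ minus this. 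Subtracting gives the claimed excess-risk expression.

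The main obstacle I anticipate is \emph{rigorously} pinning down the optimal classifier shape and threshold — i.e. justifying that among all norm-bounded linear classifiers the minimizer of $\E_q[\zoloss^{\hat c}(\beta)]$ really is $\alpha\mu_0$ with the stated bias, rather than merely being a natural guess. This requires (a) showing the orthogonal components are strictly suboptimal (a coupling/convexity argument on the Gaussian integral, or a direct computation that the risk is a monotone function of $\norm{\beta_\perp}$ holding the $\mu_0$-component fixed), and (b) handling the scaling degeneracy in $\alpha$ and the edge cases in \cref{eq:best_response} (ties, the ``no move if net utility exactly $0$'' convention, points for which gaming is not worthwhile because the required jump exceeds what $\uinv$ permits) so that the shift really is the clean additive constant $\uinv D/\norm{\mu_0}$ uniformly. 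A secondary subtlety is the sign/absolute-value bookkeeping: the WLOG reduction assumed $r>0$ earlier, and here I would similarly treat $\hat D>D^*$ versus $\hat D<D^*$ — in one case it is the negatives that are mis-handled, in the other the positives — and verify both cases collapse to the same symmetric formula via $\Phi(a)+\Phi(-a)=1$, which is exactly why the final expression depends only on $|\dualpsdnorm{\mu_0}{\hat\costmatrix}-\dualpsdnorm{\mu_0}{\costmatrix}|$.
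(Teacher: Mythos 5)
Your proposal follows essentially the same route as the paper: reduce to the direction $\mu_0$ (the paper invokes Neyman--Pearson to get $\beta^*\propto\mu_0$ and then shifts the bias by the dual-norm term from \cref{lemma:dualnorm-soln}), observe that for 0-1 purposes strategic response is equivalent to shifting the decision threshold by exactly $\uinv\dualpsdnorm{\mu_0}{\costmatrix}/\norm{\mu_0}$ in $z$-units, and compare Gaussian tail probabilities; the paper computes the risk difference via the measure of the disagreement slab $\{0<\mu_0^\top x<\nicefrac{\gamma}{2}\}$ rather than by writing the risk as an explicit function of the threshold, but these are the same calculation. One correction: your displayed risk for $\hat\beta$ has a sign error in the first term --- the negative-class term should be $\tfrac12\Phi\bigl(\tfrac{\epsilon^*-\hat\tau-\norm{\mu_0}}{\sigma}\bigr)$, not $\tfrac12\Phi\bigl(\tfrac{\norm{\mu_0}+\epsilon^*-\hat\tau}{\sigma}\bigr)$; as written your expression is identically $\tfrac12$ for every $\hat\tau$, though the final excess-risk formula you state is correct once this is fixed.
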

\begin{proof}
    By the Neyman-Pearson Lemma, the classifier which minimizes non-strategic 0-1 risk will be the one which predicts $\sign(\ln \frac{p(y=1\mid x)}{p(y = 0 \mid x)})$, which gives $\beta^{*} = 2\mu_0$. To account for strategic response, we observe that as proven in \cref{lemma:dualnorm-soln}, each user will have $\delta(x)$ in the direction of $\beta^*$ which induces a change in their predicted value by at most $\uinv \dualnorm{\beta^*} = 2 \uinv \dualpsdnorm{\mu_0}{\costmatrix}$; since we are considering the 0-1 loss, this is equivalent to \emph{every} user shifting in this way.
    
    It is immediate that to find the corresponding minimizer of the population 0-1 \emph{strategic} risk, we can simply add a negative bias equal to the change induced by this shift, because this maintains the same labels as before on all points, \emph{after} they strategically shift. Therefore, the strategic risk minimizer will be $2 \mu_0^\top x - 2\uinv \dualpsdnorm{\mu_0}{\costmatrix}$. By the same argument, the minimizer of the 0-1 strategic risk under the incorrect cost matrix $\hat\costmatrix$ is $2 \mu_0^\top x - 2\uinv \dualpsdnorm{\mu_0}{\hat\costmatrix}$. It remains to derive a lower bound on their difference in risk.

    First note that as argued above, the 0-1 strategic risk of the minimizer for the correct cost is the same as the non-strategic risk of the non-strategic solution. For the both positively and negatively labeled points, this is equal to $\Phi\left( -\frac{\norm{\mu_0}}{\sigma} \right)$ due to rotational symmetry. To determine the risk of the incorrect solution, we can identify the regions whose label will differ under that classifier and bound the measure of those regions. Define $\gamma = 2\uinv (\dualpsdnorm{\mu_0}{\hat\costmatrix} - \dualpsdnorm{\mu_0}{\costmatrix})$ as the difference in the two solutions' predictions on all $x$. Due to the symmetry of the positive and negative conditional distributions, we can assume WLOG that $\gamma > 0$, i.e., the incorrect classifier assigns a smaller prediction to all $x$. This means it will have less risk on a region of negative points and more on positive. Specifically, the two classifiers will differ on all points for which the true strategic-optimal classifier assigns a value \emph{before strategic response} which lies in $(-2\uinv \dualpsdnorm{\mu_0}{\costmatrix}, -2\uinv \dualpsdnorm{\mu_0}{\costmatrix} + \gamma)$; those assigned a value less than $-2\uinv \dualpsdnorm{\mu_0}{\costmatrix}$ will receive a negative prediction from both classifiers, and those assigned a value greater than $-2\uinv \dualpsdnorm{\mu_0}{\costmatrix} + \gamma$ will still be close enough to the decision boundary that they can shift to achieve a positive label prediction from $\hat\beta$. Formally, this region is
    \begin{align}
        \{x\mid -2\uinv \dualpsdnorm{\mu_0}{\costmatrix} < \beta^{*\top}x < -2\uinv \dualpsdnorm{\mu_0}{\costmatrix} + \gamma\} &= \{x\mid 0 <  \mu_0^\top x  < \nicefrac{\gamma}{2} \}.
    \end{align}
    Observe this region depends only on the value $\mu_0^\top x$. Since the negative points are distributed as $\calN(-\mu_0, \sigma^2 I)$, this term has the distribution $\mu_0^\top x \sim \calN(-\normsq{\mu_0}, \sigma^2 \normsq{\mu_0})$. Therefore, the measure of this region under $q(x \mid y=-1)$ is 
    \begin{align}
        &\ \Phi\left( \frac{\nicefrac{\gamma}{2} + \normsq{\mu_0}}{\sigma \norm{\mu_0}} \right) - \Phi\left( \frac{\norm{\mu_0}}{\sigma} \right) \\
        &= \int_{0}^{\frac{\gamma}{2 \norm{\mu_0}}} \rho\left( \frac{\norm{\mu_0} + z}{\sigma} \right)\;dz,
    \end{align}
    This is the amount by which the risk of the incorrect solution will \emph{decrease} on the negative points.
    Likewise, the risk will \emph{increase} on positive points in this region, which under $q(x\mid y=1)$ has measure
    \begin{align}
        &\ \Phi\left( \frac{\nicefrac{\gamma}{2} - \normsq{\mu_0}}{\sigma \norm{\mu_0}} \right) - \Phi\left( \frac{-\norm{\mu_0}}{\sigma} \right) \\
        &= \int_{0}^{\frac{\gamma}{2 \norm{\mu_0}}} \rho\left( \frac{-\norm{\mu_0} + z}{\sigma} \right)\;dz,
    \end{align}
    Therefore, the overall increase to risk will be
    \begin{align}
        \frac{1}{2} \int_{0}^{\epsilon} \rho\left( \frac{-\norm{\mu_0} + z}{\sigma} \right) - \rho\left( \frac{\norm{\mu_0} + z}{\sigma} \right)\;dz,
    \end{align}
    where $\epsilon := \frac{\uinv |\dualpsdnorm{\mu_0}{\hat\costmatrix} - \dualpsdnorm{\mu_0}{\costmatrix}|}{\norm{\mu_0}}$. By applying the fundamental theorem of calculus and the fact that $\Phi(x) = 1 - \Phi(-x)$ we arrive at the stated equality.
\end{proof}

\section{Proof of \cref{thm:cvx-regularization-ell2,thm:cvx-regularization1}}
\label{appsec:cvx-reg-proof}
\begin{theorem}
    Let $\norm{\cdot}$ be a $p$-norm and fix a cost matrix $\costmatrix$. For any distribution $q$ on $\gX\times\gY$ with $q(y=1) =: \tau^+$, the dual-regularized loss $\shingepoprisk^c(\beta) + \lambda \uinv \dualnorm{\beta}$ is guaranteed to be convex for $\lambda \geq \tau^+$. In contrast, the $\ell_2$-regularized loss $\shingepoprisk^c(\beta) + \lambda \uinv \twonorm{\beta}$ is non-convex unless $\lambda \geq \tau^+ \twonorm{\costmatrix^{-\nicefrac{1}{2}}}$.
\end{theorem}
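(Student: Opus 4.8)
The plan is to prove the two halves separately: first that dual-norm regularization with coefficient $\lambda \geq \tau^+$ convexifies $\shingepoprisk^c$, and then that an $\ell_2$ regularizer instead needs the much larger coefficient $\lambda \geq \tau^+\twonorm{\costmatrix^{-\nicefrac{1}{2}}}$, which I would establish by exhibiting a bad distribution. Throughout, write $\tau^+ := q(y=1)$.

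For the first half I would split the population risk by label, $\shingepoprisk^c(\beta) = \tau^+\,\E[\shingeloss^c(\beta^\top x,1)\mid y=1] + (1-\tau^+)\,\E[\shingeloss^c(\beta^\top x,-1)\mid y=-1]$, and treat the two conditional terms differently. The negatively-labeled loss is $\max\{0,\,1+\beta^\top x + \uinv\dualpsdnorm{\beta}{\costmatrix}\}$, the composition of the nondecreasing convex map $t\mapsto\max\{0,t\}$ with an affine function plus the norm $\uinv\dualpsdnorm{\cdot}{\costmatrix}$, hence convex in $\beta$ with no regularization at all; its expectation is too. The positively-labeled loss $\max\{0,\,1-\beta^\top x - \uinv\dualpsdnorm{\beta}{\costmatrix}\}$ is the only source of non-convexity, since there the norm enters with a minus sign. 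The key step is the elementary identity $\max\{0,a-b\}+b=\max\{a,b\}$: applying it with $a=1-\beta^\top x$ and $b=\uinv\dualpsdnorm{\beta}{\costmatrix}\geq 0$ gives $\shingeloss^c(\beta^\top x,1) + \uinv\dualpsdnorm{\beta}{\costmatrix} = \max\{1-\beta^\top x,\ \uinv\dualpsdnorm{\beta}{\costmatrix}\}$, a pointwise maximum of two convex functions of $\beta$, hence convex, and the conditional expectation preserves this. So exactly $\tau^+\uinv\dualpsdnorm{\beta}{\costmatrix}$ of regularization convexifies the positive contribution; for $\lambda\geq\tau^+$ I split $\lambda\uinv\dualpsdnorm{\beta}{\costmatrix}=\tau^+\uinv\dualpsdnorm{\beta}{\costmatrix}+(\lambda-\tau^+)\uinv\dualpsdnorm{\beta}{\costmatrix}$ and conclude that $\shingepoprisk^c(\beta)+\lambda\uinv\dualpsdnorm{\beta}{\costmatrix}$ is a nonnegative combination of convex functions. (This half uses only that $\dualpsdnorm{\cdot}{\costmatrix}$ is a norm, so it holds for any cost norm. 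That $\lambda\geq\tau^+$ is also necessary — the ``minimum regularization'' remark — follows from the same concentrated construction used below for the $\ell_2$ case, with dual regularization in place of $\ell_2$.)

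For the second half the plan is to build a distribution on which the $\ell_2$-regularized risk, restricted to a suitable region, collapses to a constant plus $\uinv g(\beta)$ where $g(\beta):=\lambda\twonorm{\beta}-\tau^+\dualpsdnorm{\beta}{\costmatrix}$, reducing the claim to: $g$ is convex only if $\lambda\geq\tau^+\twonorm{\costmatrix^{-\nicefrac{1}{2}}}$. Let $\gamma$ be a top eigenvector and $v$ a unit bottom eigenvector of $\costmatrix$ (so $\costmatrix^{-\nicefrac{1}{2}}v=\twonorm{\costmatrix^{-\nicefrac{1}{2}}}\,v$ and $v\perp\gamma$ when $\costmatrix$ is not scalar; the scalar case is immediate because both regularizers are then multiples of $\twonorm{\cdot}$). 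Place mass $\tau^+$ of the positive label at the feature $x_0=\mathbf{0}$ and mass $1-\tau^+$ of the negative label at a point $x_1$ far along $-\costmatrix^{\nicefrac{1}{2}}\gamma$, so that in a small neighborhood of $\beta_0:=t_0\costmatrix^{\nicefrac{1}{2}}\gamma$ (for $t_0>0$ small enough that the neighborhood fits inside $\gB$, rescaling $x_1$ accordingly) the positive hinge is strictly positive, hence equals $1-\uinv\dualpsdnorm{\beta}{\costmatrix}$, while the negative hinge is strictly negative inside, hence the loss is $0$; smoothing the two point masses into narrow continuous distributions as in the proof of \cref{thm:cost-error-half-lower-bound} leaves this intact. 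On that neighborhood $\shingepoprisk^c(\beta)+\lambda\uinv\twonorm{\beta}=\tau^+ + \uinv g(\beta)$, so it suffices to show $g$ is not convex there whenever $\lambda<\tau^+\twonorm{\costmatrix^{-\nicefrac{1}{2}}}$. I would do this at second order: $\nabla^2\twonorm{\beta_0}$ and $\nabla^2\dualpsdnorm{\beta_0}{\costmatrix}$ are PSD with common kernel $\mathrm{span}(\beta_0)$ and both homogeneous of degree $-1$, and evaluating the quadratic form of $\nabla^2 g(\beta_0)$ on $v$ (which is orthogonal to $\beta_0$) gives, after a scale-invariant computation using $\costmatrix^{-\nicefrac{1}{2}}v=\twonorm{\costmatrix^{-\nicefrac{1}{2}}}\,v$ and $\gamma^\top\costmatrix^{-\nicefrac{1}{2}}v=0$, a quantity proportional to $\lambda-\tau^+\twonorm{\costmatrix^{-\nicefrac{1}{2}}}\bigl(\twonorm{\costmatrix^{-\nicefrac{1}{2}}}\twonorm{\costmatrix^{\nicefrac{1}{2}}\gamma}\bigr)$, which is negative because $\tfrac{\lambda}{\tau^+}<\twonorm{\costmatrix^{-\nicefrac{1}{2}}}\leq\twonorm{\costmatrix^{-\nicefrac{1}{2}}}\bigl(\twonorm{\costmatrix^{-\nicefrac{1}{2}}}\twonorm{\costmatrix^{\nicefrac{1}{2}}\gamma}\bigr)$ (the bracketed factor is the condition number of $\costmatrix^{\nicefrac{1}{2}}$, which is $\geq 1$). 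A negative Hessian eigenvalue certifies local non-convexity of $g$, hence of the $\ell_2$-regularized risk. When all mass is positive one gets the statement more directly: near $\beta=0$ the hinge is inactive, $g$ describes the objective up to an affine function globally by homogeneity, and an even positively-homogeneous function is convex iff nonnegative, i.e. iff $\lambda\geq\tau^+\sup_\beta\nicefrac{\dualpsdnorm{\beta}{\costmatrix}}{\twonorm{\beta}}=\tau^+\twonorm{\costmatrix^{-\nicefrac{1}{2}}}$.

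The hard part will be getting the threshold in the second half exactly right: because any negative mass forces the hinge to be active near the origin, one cannot use the clean ``convex iff nonnegative'' shortcut there and must instead localize at a nonzero base point $\beta_0$ and certify non-convexity through the second-order computation above, which requires carefully separating the isotropic curvature of $\twonorm{\cdot}$ from the anisotropic curvature of $\dualpsdnorm{\cdot}{\costmatrix}$ and its coupling with $\costmatrix$. For a general $p$-norm cost, where the dual norm is an $\ell_q$ norm rather than $\ell_2$, this Hessian bookkeeping (and ensuring the base point avoids the norm's non-smooth locus) is the delicate step; smoothing the point masses and keeping the neighborhood inside $\gB$ are routine.
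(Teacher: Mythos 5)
Your first half is essentially the paper's own argument: the identity $\max\{0,a-b\}+b=\max\{a,b\}$ is exactly the rewriting the paper uses (there stated with the ratio $\nicefrac{\lambda}{\tau^+}$ folded into both branches of the max), and peeling off the excess $(\lambda-\tau^+)\uinv\dualnorm{\beta}$ as a separate convex term is equivalent. That part is correct and fully general in the norm.

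The second half takes a genuinely different route, and that is where the gap sits. The paper violates the \emph{first-order} convexity inequality $f(\beta_1)-f(\beta_2)\geq\nabla f(\beta_2)^\top(\beta_1-\beta_2)$, choosing $\beta_2$ supported off the bottom eigenvector $v_d$ and $\beta_1=cv_d$; Euler's identity for the homogeneous dual norm makes the right-hand side vanish, and the condition collapses to $\lambda\geq\nicefrac{\tau^+}{\sigma_d}$ for every $p$-norm at once. You instead certify local non-convexity by a \emph{second-order} computation at $\beta_0=t_0\costmatrix^{\nicefrac{1}{2}}\gamma$. For $p=2$ your argument checks out: the quadratic form of $\nabla^2 g(\beta_0)$ on the bottom unit eigenvector $v$ equals $\frac{1}{t_0}\bigl(\frac{\lambda}{\sigma_1}-\frac{\tau^+}{\sigma_d^2}\bigr)$, which is negative whenever $\lambda<\tau^+\twonorm{\costmatrix^{-\nicefrac{1}{2}}}$ (indeed on a larger range, which is harmless since only necessity is claimed), and your reduction of the population objective to $\tau^++\uinv g(\beta)$ on a neighborhood of $\beta_0$ is sound. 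But the theorem is stated for a general $p$-norm cost, and there your argument does not go through as written: at your base point, $\costmatrix^{-\nicefrac{1}{2}}\beta_0$ is (in the eigenbasis) a one-sparse vector, which lies exactly on the non-smooth locus of the dual $\ell_q$ norm for $q\neq 2$---for $q<2$ the relevant second derivatives blow up there and for $q>2$ the curvature in the off-support directions degenerates---so the claimed ``proportional to $\lambda-\tau^+\twonorm{\costmatrix^{-\nicefrac{1}{2}}}\kappa$'' quantity is not defined. You flag this yourself as the ``delicate step,'' but it is not a detail to be deferred: for $p\neq 2$ it is the entire content of the lower bound. The repair is either to switch to the paper's first-order/subgradient formulation (which needs only one-sided directional derivatives and Euler's identity, both available at your base point), or to perturb $\beta_0$ off the non-smooth locus and carry out the curvature estimate uniformly in the perturbation; your proposal does neither.
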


\newcommand{\qnorm}[1]{\norm{#1}_q}
\newcommand{\pnorm}[1]{\norm{#1}_p}
\begin{proof}
    Writing out the dual-regularized loss with the full definition of the strategic hinge,
    \begin{multline*}
        \shingepoprisk^c(\beta) + \lambda \uinv \dualnorm{\beta} = \E_{(x,y)\sim q}[\max\{0, 1-y(\beta^\top x + \uinv \dualnorm{\beta})\}] + \lambda \uinv \dualnorm{\beta} \\
        = \tau^+ \E_{q(x \mid y=1)}[\max\{0, 1- \beta^\top x - \uinv \dualnorm{\beta}\} + \nicefrac{\lambda}{\tau^+} \uinv \dualnorm{\beta}] + \\
         (1-\tau^+) \E_{q(x \mid y=-1)}[\max\{0, 1 + \beta^\top x + \uinv \dualnorm{\beta}\}].
    \end{multline*}
    The last term is already convex in $\beta$. Rewriting the first term, we get
    \begin{align}
        \tau^+ \E_{q(x\mid y=1)}[\max\{\nicefrac{\lambda}{\tau^+} \uinv \dualnorm{\beta}, 1- \beta^\top x + \uinv \left(\nicefrac{\lambda}{\tau^+} - 1\right) \dualnorm{\beta}\}]
    \end{align}
    For $\lambda \geq \tau^+$ this is the expectation of the maximum of two convex functions, and thus the full loss is convex.
    
    To see why the $\ell_2$ norm requires much stronger regularization, consider again the above term with regularization $\lambda \uinv \twonorm{\beta}$:
    \begin{align}
        \tau^+ \E_{x\mid y=1}\left[ \max\left\{ \frac{\lambda}{\tau^+} \uinv \twonorm{\beta}, 1 - \beta^\top x + \uinv \left(\frac{\lambda}{\tau^+} \twonorm{\beta} - \dualnorm{\beta}\right) \right\} \right].
    \end{align}
    The first term in the max is convex---so to show non-convexity, we will consider values where the second term is larger. Recalling that $\dualnorm{v} := \qnorm{\costmatrix^{-1/2} v}$, write the second term as a function $f(\beta) := 1 - \beta^\top x + \uinv \left(\frac{\lambda}{\tau^+} \twonorm{\beta} - \qnorm{\costmatrix^{-1/2} \beta}\right)$, and thus
    \begin{align}
        \nabla f(\beta) &= -x + \uinv \left(\frac{\lambda}{\tau^+} \frac{\beta}{\twonorm{\beta}} - \costmatrix^{-1/2} \left( \frac{\partial}{\partial v} \qnorm{v}\big|_{v = \costmatrix^{-1/2} \beta_2} \right) \right).
    \end{align}
    Recall that a function is convex if and only if for all $x, y$ in its domain,
    \begin{align}
        f(x) - f(y) &\geq \nabla f(y)^\top (x-y).
    \end{align}
    This means that $f$ is convex only if for \emph{all} vectors $\beta_1, \beta_2 \in \mathcal{B}$,
    \begin{align}
        \label{eq:beta-convexity}
        \frac{\lambda}{\tau^+} ( \twonorm{\beta_1} - \twonorm{\beta_2} ) - \qnorm{\costmatrix^{-1/2} \beta_1} + \qnorm{\costmatrix^{-1/2} \beta_2} &\geq \left(\frac{\lambda}{\tau^+} \frac{\beta_2}{\twonorm{\beta_2}} - \costmatrix^{-1/2} \left( \frac{\partial}{\partial v} \qnorm{v}\big|_{v = \costmatrix^{-1/2} \beta_2} \right) \right)^\top (\beta_1 - \beta_2).
    \end{align}
    Without loss of generality, suppose $\costmatrix$ is diagonal. Let $v_i,\ i\in[d]$ denote the eigenvectors of $\costmatrix$ with \emph{decreasing} eigenvalues $\sigma_1^2, \sigma_2^2,\ldots$. Choose any $\beta_2 = \sum_i \lambda_i v_i$ with non-negative $\lambda_i$ and $\lambda_d = 0$. Thus $\costmatrix^{-1/2} \beta_2 = \sum_i \left( \nicefrac{\lambda_i}{\sigma_i} \right) v_i$. Then 
    \begin{align}
        \beta_2^\top \costmatrix^{-1/2} \frac{\partial}{\partial v} \qnorm{v}\big|_{v = \costmatrix^{-1/2} \beta_2} &= \sum_i \left(\nicefrac{\lambda_i}{\sigma_i}\right) \cdot \left(\frac{\left| \nicefrac{\lambda_i}{\sigma_i} \right|}{\qnorm{\costmatrix^{-1/2} \beta_2}}\right)^{q-1} \\
        &= \frac{\qnorm{\costmatrix^{-1/2} \beta_2}^q}{\qnorm{\costmatrix^{-1/2} \beta_2}^{q-1}} \\
        &= \qnorm{\costmatrix^{-1/2} \beta_2}.
    \end{align}
    This allows us to simplify the above inequality and arrive at the condition
    \begin{align}
        \frac{\lambda}{\tau^+} \twonorm{\beta_1} - \qnorm{\costmatrix^{-1/2} \beta_1} &\geq \beta_1^\top \left(\frac{\lambda}{\tau^+} \frac{\beta_2}{\twonorm{\beta_2}} - \costmatrix^{-1/2} \frac{\partial}{\partial v} \qnorm{v}\big|_{v = \costmatrix^{-1/2} \beta_2} \right).
    \end{align}
    Now choose $\beta_1 = c v_d$ for some scalar $c \neq 0$. Then the RHS vanishes and $f(\beta)$ is convex only if
    \begin{align}
        &\frac{\lambda c}{\tau^+} - \frac{c}{\sigma_d} \geq 0\\
        \iff &\lambda \geq \frac{\tau^+}{\sigma_d} = \tau^+ \twonorm{\costmatrix^{-\nicefrac{1}{2}}}.
    \end{align} 
    Thus we've proven the required lower bound on $\lambda$. What remains is to show that this condition applies at a location in parameter space where the negatively labeled samples do not contribute to the gradient, and where the losses on the positively labeled samples are dominated by the second term of the maximum.
    
    To do this, we scale down $\beta_2 \to 0$ and choose the bias as $\beta_0 = -(1 + \uinv \dualnorm{\beta_2} + \epsilon (X + 1))$ for some very small positive $\epsilon$. It follows that for all $x$,
    \begin{align}
        \beta_2^\top x + \beta_0 + \uinv \dualnorm{\beta_2} &\leq \epsilon X -(1 + \uinv \dualnorm{\beta_2} + \epsilon (X + 1)) + \uinv \dualnorm{\beta_2} \\
        &\leq -(1+\epsilon).
    \end{align}
    This accomplishes both desiderata: first, it ensures that the loss on all negatively labeled points is $\max\{0, 1 + \beta^\top x + \beta_0 + \uinv \dualnorm{\beta}\} \leq \max\{0, -\epsilon\} = 0$, with gradient equal to $0$. Second, it ensures that on the positive examples, the second term in the loss dominates. We can see this by observing that the second term being larger is equivalent to
    \begin{align}
        0 &< 1 - \left( \beta_2^\top x + \beta_0 + \uinv \dualnorm{\beta_2}\right),
    \end{align}
    and by construction the RHS is at least $2+\epsilon$ for all $x$.
\end{proof}

As mentioned in the main body, we can also show the following additional result: for any $\lambda$ which induces convexity with $\twonorm{\beta}$, it must hold that $\dualnorm{\beta} \leq \twonorm{\beta}$, and thus it imposes the least regularization.

\begin{proposition}
    If the objective $\shingepoprisk^c(\beta) + \lambda \uinv \twonorm{\beta}$ is convex, then the dual regularization is no greater than the $\ell_2$ regularization: $\tau^+ \dualnorm{\beta} \leq \lambda \twonorm{\beta}$.
\end{proposition}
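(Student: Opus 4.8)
The plan is to feed an antipodal pair of points into the first-order inequality that was already extracted in the proof of \cref{thm:cvx-regularization-ell2}, rather than to re-run any of its machinery. Recall that there, under the hypothesis that the $\ell_2$-regularized loss is convex, one obtains \eqref{eq:beta-convexity}: for all $\beta_1,\beta_2$,
\[
\tfrac{\lambda}{\tau^+}\bigl(\twonorm{\beta_1}-\twonorm{\beta_2}\bigr)-\dualnorm{\beta_1}+\dualnorm{\beta_2}\;\ge\;g_{\beta_2}^{\top}(\beta_1-\beta_2),
\]
where $g_{\beta_2}$ denotes the vector appearing on the right-hand side of \eqref{eq:beta-convexity}; that same proof also establishes the Euler-type identity $g_{\beta_2}^{\top}\beta_2=\tfrac{\lambda}{\tau^+}\twonorm{\beta_2}-\dualnorm{\beta_2}$, which holds because $\twonorm{\cdot}$ and $\dualpsdnorm{\cdot}{\costmatrix}$ are positively $1$-homogeneous. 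So I would simply specialize this inequality.

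The main step is to set $\beta_1=-\beta$ and $\beta_2=\beta$ for an arbitrary $\beta\neq\mathbf{0}$. Because $\twonorm{\cdot}$ and $\dualpsdnorm{\cdot}{\costmatrix}$ are norms (hence even), the left-hand side of the displayed inequality collapses to $0$, while the right-hand side becomes $g_{\beta}^{\top}(-2\beta)=-2\bigl(\tfrac{\lambda}{\tau^+}\twonorm{\beta}-\dualnorm{\beta}\bigr)$ by the identity above. Hence $0\ge-2\bigl(\tfrac{\lambda}{\tau^+}\twonorm{\beta}-\dualnorm{\beta}\bigr)$, i.e.\ $\dualnorm{\beta}\le\tfrac{\lambda}{\tau^+}\twonorm{\beta}$, which rearranges to the claimed $\tau^+\dualpsdnorm{\beta}{\costmatrix}\le\lambda\twonorm{\beta}$; the case $\beta=\mathbf{0}$ is trivial. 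Structurally, this is just the observation that convexity of the loss forces the positively $1$-homogeneous, even function $h(\beta):=\tfrac{\lambda}{\tau^+}\twonorm{\beta}-\dualpsdnorm{\beta}{\costmatrix}$ to be convex, and a convex, even, $1$-homogeneous function that vanishes at the origin is automatically nonnegative (midpoint convexity at $\tfrac12\beta+\tfrac12(-\beta)$).

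I expect the only real obstacle to be a bookkeeping one: confirming that \eqref{eq:beta-convexity} genuinely applies to the antipodal pair I use, i.e.\ that the region/bias construction from the proof of \cref{thm:cvx-regularization-ell2} --- choose the bias so that every negative-label hinge term vanishes with zero gradient and every positive-label term lies on its affine branch throughout a neighborhood of $\beta_2$ --- can be centered at an \emph{arbitrary} $\beta_2$. This is exactly the mechanism used there, so no new argument should be needed. A minor technicality is that $\dualpsdnorm{\cdot}{\costmatrix}$ is non-differentiable at the origin, and --- when $\norm{\cdot}$ is the $\ell_\infty$ norm, so its dual is the $\ell_1$ norm --- along coordinate faces; there I would read the derivative inside $g_{\beta_2}$ as a subgradient, for which the Euler-type identity $g_{\beta}^{\top}\beta=h(\beta)$ still holds, leaving the argument intact. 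As a consistency check, taking $\beta$ to be a least eigenvector of $\costmatrix$ recovers the threshold $\lambda\ge\tau^+\twonorm{\costmatrix^{-\nicefrac{1}{2}}}$ from \cref{thm:cvx-regularization-ell2} (and one can note this axis-aligned choice need not be the worst when $\norm{\cdot}$ is a $p$-norm with $p>2$).
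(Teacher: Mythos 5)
Your proposal is correct and matches the paper's own proof: the paper likewise takes the first-order convexity condition \eqref{eq:beta-convexity}, substitutes $\beta_1 = -\beta_2$ so that evenness of the norms kills the left-hand side, and uses the $1$-homogeneity (Euler) identity $g_{\beta_2}^{\top}\beta_2 = \tfrac{\lambda}{\tau^+}\twonorm{\beta_2}-\dualnorm{\beta_2}$ to conclude $\lambda\twonorm{\beta_2}\geq\tau^+\dualnorm{\beta_2}$. Your additional remarks on subgradients at non-smooth points and on centering the bias construction at arbitrary $\beta_2$ are sensible care the paper leaves implicit, but they do not change the argument.
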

\begin{proof}
    Recall \cref{eq:beta-convexity} in the proof above: $f$ is convex only if for all vectors $\beta_1, \beta_2 \in \mathcal{B}$,
    \begin{align}
        \frac{\lambda}{\tau^+} ( \twonorm{\beta_1} - \twonorm{\beta_2} ) - \qnorm{\costmatrix^{-1/2} \beta_1} + \qnorm{\costmatrix^{-1/2} \beta_2} &\geq \left(\frac{\lambda}{\tau^+} \frac{\beta_2}{\twonorm{\beta_2}} - \costmatrix^{-1/2} \left( \frac{\partial}{\partial v} \qnorm{v}\big|_{v = \costmatrix^{-1/2} \beta_2} \right) \right)^\top (\beta_1 - \beta_2).
    \end{align}
    Choosing $\beta_1 = -\beta_2$ and simplifying, we get $\lambda ||\beta_2||_2 \geq \tau^+ ||\beta_2||_*$
\end{proof}

\section{Proofs of Lemmas in Main Body}
\label{appsec:main-lemmas-proofs}

\subsection{Proofs of \cref{lemma:dualnorm-soln,cor-zo-shinge}}
\begin{lemma}
    For any cost $c(x, x') = \phi(\psdnorm{x-x'}{\costmatrix})$, the maximum change to a user's prediction score that can result from strategic behavior is given by
    \begin{align}
    \beta^\top x(\beta) - \beta^\top x \le \uinv \dualnorm{\beta}
    \end{align}
    where $\dualnorm{\beta} := \dualpsdnorm{\beta}{\costmatrix} = \sup_{\psdnorm{v}{\costmatrix}=1} \beta^\top v$ is the \emph{$\costmatrix$-transformed dual norm} of $\beta$ and $\uinv := \sup r \in \R_{\geq 0} \textrm{ s.t. } \phi(r) \leq u$.
\end{lemma}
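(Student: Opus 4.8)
The plan is to establish the inequality $\beta^\top x(\beta) - \beta^\top x \le \uinv\,\dualnorm{\beta}$ directly, using only two ingredients and without ever solving for $x(\beta)$ in closed form: (i) a rational agent never incurs cost larger than its utility $u$, so its move is short in the $\costmatrix$-transformed norm; and (ii) the generalized Cauchy--Schwarz inequality pairing $\psdnorm{\cdot}{\costmatrix}$ with its dual $\dualnorm{\cdot} = \dualpsdnorm{\cdot}{\costmatrix}$. Throughout, write $\delta = \delta(x;\beta) = x(\beta)-x$; the bias term, being fixed, cancels so that $\beta^\top x(\beta) - \beta^\top x = \beta^\top\delta$, and the relevant dual norm is of the non-bias part of $\beta$.

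First I would bound $\psdnorm{\delta}{\costmatrix}$. If the agent does not move, $\delta = 0$ and $\psdnorm{\delta}{\costmatrix} = 0 \le \uinv$ trivially (since $\uinv \ge 0$). If it does move, then by the best-response rule in \cref{eq:best_response} together with the stated convention that an agent moves only when doing so flips its prediction to positive at a cost strictly below $u$, we have $\phi(\psdnorm{\delta}{\costmatrix}) = c(x,x(\beta)) < u$; since $\uinv := \sup\{r \ge 0 : \phi(r) \le u\}$, any $r$ with $\phi(r) \le u$ is at most $\uinv$, so taking $r = \psdnorm{\delta}{\costmatrix}$ gives $\psdnorm{\delta}{\costmatrix} \le \uinv$ (this step does not even need $\phi$ monotone, and if $\uinv = +\infty$ the whole bound is vacuous). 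Next I would invoke the dual-norm inequality $\beta^\top v \le \psdnorm{v}{\costmatrix}\,\dualnorm{\beta}$ for all $v$ --- immediate by homogeneity from $\dualnorm{\beta} = \sup_{\psdnorm{v}{\costmatrix}=1}\beta^\top v$, and equivalently ordinary H\"older after the substitution $w = \costmatrix^{\nicefrac{1}{2}}v$ using $\dualpsdnorm{\beta}{\costmatrix} = \dualnorm{\costmatrix^{-\nicefrac{1}{2}}\beta}$. Taking $v = \delta$ and chaining,
\[
\beta^\top x(\beta) - \beta^\top x = \beta^\top\delta \le \psdnorm{\delta}{\costmatrix}\,\dualnorm{\beta} \le \uinv\,\dualnorm{\beta},
\]
which is the claim.

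To support the stronger reading that $\uinv\,\dualnorm{\beta}$ is the \emph{maximum} and not merely an upper bound, I would exhibit near-extremal agents: take a unit-$\psdnorm{\cdot}{\costmatrix}$ vector $v^\star$ with $\beta^\top v^\star = \dualnorm{\beta}$ and a point $x$ with $\beta^\top x$ slightly above $-\uinv\dualnorm{\beta}$ positioned so that its cheapest boundary-crossing move runs along $v^\star$; such an agent moves a $\costmatrix$-distance arbitrarily close to $\uinv$ and changes its score by arbitrarily close to $\uinv\dualnorm{\beta}$. I do not expect a genuine obstacle anywhere here; the only spots needing a little care are bookkeeping the tie-breaking / ``no response at exactly zero net utility'' conventions in the first step (both can only shrink the move, so the bound is safe) and checking that the dual norm appearing in this argument is literally the one used to define $\shingeloss^c$, which the definition of the $\costmatrix$-transformed dual norm already settles.
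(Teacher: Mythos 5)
Your proof is correct and takes essentially the same route as the paper's: both reduce the claim to the fact that a rational agent's move $\delta$ satisfies $\psdnorm{\delta}{\costmatrix}\le\uinv$ and then invoke the duality between $\psdnorm{\cdot}{\costmatrix}$ and $\dualpsdnorm{\cdot}{\costmatrix}$ (the paper phrases this as the variational formula $\sup_{\psdnorm{\delta}{\costmatrix}\le\uinv}\beta^\top\delta=\uinv\dualnorm{\beta}$, you phrase it as the H\"older inequality plus an extremal construction, which is the same fact). Your explicit handling of the non-moving case and the tie-breaking conventions is slightly more careful than the paper's write-up but does not change the argument.
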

\begin{proof}
    A user at $x$ will move so as to maximize the inner product $\beta^\top x(\beta)$ so long as the cost of this move does not exceed the additional utility $u$ (and only up until the point that they achieve a positive classification). In other words, the maximum logit they will feasibly achieve is given by the optimization problem
    \begin{align}
        \sup_{x'} \beta^\top x' \quad\textrm{s.t. } c(x, x') \leq u.
    \end{align}
    We can reparameterize $x' = x + \delta$ to rewrite the objective as
    \begin{align}
        \beta^\top x + \sup_{\{\delta \;:\; \phi(\psdnorm{\delta}{\costmatrix}) \leq u\}} \beta^\top \delta,
    \end{align}
    which, recalling the definition of $\uinv$ and monotonicity of $\phi$, is equal to
    \begin{align}
        \beta^\top x + \sup_{\{\delta \;:\; \psdnorm{\delta}{\costmatrix} \leq \uinv\}} \beta^\top \delta
    \end{align}
    Here we recognize the variational formula for the dual norm, giving the solution $\beta^\top x + \uinv \dualnorm{\beta}$. \qedhere
\end{proof}

\begin{lemma}
    For any cost $c\in\costset$, $R_{0-1}^{\costfunc}(\beta) \leq \shingepoprisk^c(\beta)$.
\end{lemma}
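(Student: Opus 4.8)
The plan is to prove the pointwise inequality $\zoloss^{c}(\beta^\top x, y) \le \shingeloss^{c}(\beta^\top x, y)$ for every $(x,y)\in\gX\times\gY$ and then integrate against $q$. The only input needed is \cref{lemma:dualnorm-soln}: whatever the agent does, the resulting score obeys $\beta^\top x(\beta) \le \beta^\top x + \uinv\dualnorm{\beta}$ — either because $x(\beta)=x$ and $\uinv\dualnorm{\beta}\ge 0$, or because the agent moved and the lemma bounds the induced change. I would record this uniform bound first, since it dispatches the ``moved'' and ``did not move'' cases at once.

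Next I would split on the label. If $y=-1$ and the point is still misclassified after best response, then $\beta^\top x(\beta)\ge 0$, so by the uniform bound $\beta^\top x + \uinv\dualnorm{\beta}\ge 0$ and therefore $\shingeloss^{c}(\beta^\top x,-1)=1+\beta^\top x+\uinv\dualnorm{\beta}\ge 1=\zoloss^{c}$; if instead it is classified correctly, $\zoloss^{c}=0\le\shingeloss^{c}$ trivially. If $y=+1$ and the point ends up classified correctly, again $\zoloss^{c}=0\le\shingeloss^{c}$; the remaining case is a positive point that best response fails to flip, i.e.\ $\beta^\top x(\beta)<0$. Here I would argue — via the best-response definition in \cref{eq:best_response} and monotonicity of $\phi$, so that the best score reachable within the cost budget is exactly $\beta^\top x + \uinv\dualnorm{\beta}$ — that $\beta^\top x + \uinv\dualnorm{\beta}\le 0$, whence $\shingeloss^{c}(\beta^\top x,+1)=1-(\beta^\top x+\uinv\dualnorm{\beta})\ge 1=\zoloss^{c}$. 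Combining the four cases yields the pointwise bound, and taking $\E_{(x,y)\sim q}$ and using monotonicity of expectation gives $R_{0-1}^{\costfunc}(\beta)\le\shingepoprisk^c(\beta)$.

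The one step requiring genuine care — and hence the main (mild) obstacle — is the misclassified-positive case: turning ``best response does not produce a nonnegative score'' into the clean algebraic statement $\beta^\top x + \uinv\dualnorm{\beta}\le 0$. This is where the tie-breaking convention (no response at exactly zero net utility) and the interplay between $\uinv$ and the reachable-score supremum from \cref{lemma:dualnorm-soln} enter; once that is pinned down, everything else follows immediately from nonnegativity of the hinge and the uniform score bound.
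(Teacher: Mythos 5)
Your proof is correct and follows essentially the same route as the paper's: a pointwise case analysis over the label and the outcome of the best response, with \cref{lemma:dualnorm-soln} supplying the uniform bound $\beta^\top x(\beta) \le \beta^\top x + \uinv\dualnorm{\beta}$, followed by taking expectations. The only cosmetic difference is that for negatively labeled points you compare the strategic hinge directly to the 0-1 loss, whereas the paper passes through the ordinary hinge loss on the post-response point as an intermediary; the substance, including the delicate misclassified-positive case where failure of the best response to flip the label forces $\beta^\top x + \uinv\dualnorm{\beta} \le 0$, matches the paper's argument.
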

\begin{proof}
    For a fixed sample $(x, y)$, recall the loss definitions:
    \begin{align}
        \zoloss^{\costfunc}(\beta) &:= \mathbf{1}\{\sign(\beta^\top (x+\delta)) \neq y\} \\
        \hingeloss^{\costfunc}(\beta) &:= \max\left(0, 1 - y \beta^\top (x+\delta) \right) \\
        \shingeloss^{\costfunc}(\beta) &:= \max\left(0, 1 - y (\beta^\top x + \uinv \dualnorm{\beta}) \right).
    \end{align}
    Since strategic response is agnostic to the loss used (i.e., $\delta$ does not change) and the hinge loss upper bounds the 0-1 loss, it is immediate that $\zoloss^{\costfunc} \leq \hingeloss^{\costfunc}$. Consider any point with true label $y=1$. If the point is positively classified (whether it moves or not) then $\zoloss^c = 0 \leq \shingeloss^c$. If the point is negatively classified and does not move, this means $\beta^\top x < -\uinv \dualnorm{\beta} \implies \beta^\top x+\uinv \dualnorm{\beta} < 0$, and therefore $\zoloss^c = 1 < \shingeloss^c$. So the claim holds for any point with $y=1$. 
    
    Next, if a point with true label $y=-1$ does not move, then neither loss changes as a result of strategic response, which means the strategic hinge loss is no less than the regular hinge loss. It remains to prove the inequality for points with $y=-1$ which move in response to the classifier.
    By \cref{lemma:dualnorm-soln} the classifier's output after strategic response will increase by no more than $\uinv \dualnorm{\beta}$. We have
    \begin{align}
        \hingeloss^\costfunc(\beta) &= \hingeloss(\beta^\top (x+\delta), y=-1) \\
        &= \max\{0, 1 + \beta^\top (x+\delta)\} \\
        &\leq \max\{0, 1+ \beta^\top x + \uinv \dualnorm{\beta}\} \\
        &= \shingeloss(\beta; \uinv \dualnorm{\beta}) \\
        &= \shingeloss^{\costfunc}(\beta). \qedhere
    \end{align}
\end{proof}

\subsection{Proof of \cref{lemma:solve-max}}
\begin{lemma}
    Fix some classifier $\beta$.
    Then for any dataset $(\gX\times\gY)^{n}$ and uncertainty set $\costset$,
    \textsc{MaxLossCost} runs in $\gO(nd + n\ln n)$ time and returns the value $k^*\in\R_{\geq 0}$ which maximizes the k-shifted strategic hinge loss $\shingeemprisk(\beta; k)$ subject to $k^* = \uinv \dualnorm{\beta}$
    for some cost $\cinc$.
\end{lemma}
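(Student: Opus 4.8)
The plan is to reduce the maximization over the cost uncertainty set $\costset$ to a one-dimensional search, exploit the piecewise-linear structure of the objective in that scalar variable, and then account for the bookkeeping that yields the stated runtime. The first step is to record the reduction implied by \cref{eq:cost-dep_stratgic_hinge}: since $\shingeemprisk^c(\beta) = \shingeemprisk(\beta; \uinv \dualpsdnorm{\beta}{\costmatrix})$, the inner maximization depends on the cost $c$ only through the scalar $\dualpsdnorm{\beta}{\costmatrix}$. Hence it suffices to maximize $\shingeemprisk(\beta; \uinv k)$ over all $k$ of the form $k = \dualpsdnorm{\beta}{\costmatrix}$ for some $\cinc$, and then convert the optimal $k$ back into a realizing cost.

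Next I would pin down exactly which values $k$ are attainable. The map $\costmatrix \mapsto \dualpsdnorm{\beta}{\costmatrix} = \dualnorm{\costmatrix^{-\nicefrac{1}{2}}\beta}$ is continuous, and $\costset$ is compact and convex, hence connected; therefore its image is a compact connected subset of $\R$, i.e. a closed interval whose endpoints $\dualnorm{\beta}^{\min}$ and $\dualnorm{\beta}^{\max}$ are attained, and by the intermediate value theorem every value in between is realized by some $\cinc$. Thus the problem is exactly $\max\{\shingeemprisk(\beta;\uinv k) : k \in [\dualnorm{\beta}^{\min}, \dualnorm{\beta}^{\max}]\}$, and any $k$ in this interval can be mapped back to an admissible cost. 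The reparameterization of $\costset$ in \cref{sec:C-reparam} (as a box of eigenvalues in a fixed basis) makes both endpoints, and a preimage of any prescribed intermediate value, explicitly and cheaply computable, which is what the algorithm invokes to return an actual cost matrix.

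Then I would analyze $H(k) := \shingeemprisk(\beta; \uinv k)$, the empirical average of the terms $\max\{0,\, 1 - y_i(\beta^\top x_i + \uinv k)\}$. Each such term is the maximum of two affine functions of $k$ and so has a single kink, located where $1 - y_i(\beta^\top x_i + \uinv k) = 0$; multiplying through by $y_i$ shows this kink is at $k = v_i/\uinv$ with $v_i = y_i(1 - y_i\beta^\top x_i)$. Hence $H$ is continuous and piecewise linear with all breakpoints in $\{v_i/\uinv\}_{i=1}^n$, so its maximum over the closed interval is attained either at an endpoint or at an interior breakpoint. I would then check that \textsc{MaxLossCost} evaluates $H$ at precisely this candidate set: it initializes the running value at $k = \dualnorm{\beta}^{\min}$, sweeps the sorted $v_i$'s from the first one exceeding $\uinv\dualnorm{\beta}^{\min}$ up to the last one not exceeding $\uinv\dualnorm{\beta}^{\max}$ (updating and tracking the running maximum), and finally compares against $k = \dualnorm{\beta}^{\max}$. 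Since the global maximizer of $H$ lies among these evaluated points, the returned $k_{\max}$ (or $\dualnorm{\beta}^{\max}$) is globally optimal, and the cost returned — which exists in $\costset$ by the interval property above — maximizes $\shingeemprisk^c(\beta)$.

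Finally, for the runtime: forming all inner products $\beta^\top x_i$, hence all $v_i$, costs $\gO(nd)$; sorting the $v_i$ costs $\gO(n\ln n)$; and the sweep costs $\gO(n)$ provided one maintains the current value and slope of $H$ and updates both in $\gO(1)$ each time a breakpoint is crossed, rather than recomputing the sum from scratch (which would make the sweep $\gO(n^2)$). Computing $\dualnorm{\beta}^{\min}$, $\dualnorm{\beta}^{\max}$, and recovering a cost with a prescribed dual norm are all handled by the parameterization of \cref{sec:C-reparam} and do not dominate, giving the claimed $\gO(nd + n\ln n)$. I expect the main obstacle to be the second step — establishing rigorously that the attainable dual norms form a full interval, so that the one-dimensional reduction is lossless and the recovered optimal scalar genuinely corresponds to a cost in $\costset$ — together with the incremental-evaluation bookkeeping, which is what actually delivers the linear-time sweep behind the runtime bound.
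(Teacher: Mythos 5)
Your proposal is correct and follows essentially the same route as the paper's proof: reduce the maximization over $\costset$ to a one-dimensional search over the attainable dual-norm scalar, exploit piecewise linearity of the $k$-shifted loss with breakpoints at $v_i/\uinv$, and sweep the sorted breakpoints with incremental updates to get the stated runtime. Your explicit connectedness/intermediate-value argument for why the attainable scalars form a full interval is a welcome elaboration of a step the paper only asserts (from convexity of $\costset$ and continuity of the norm), but it is not a different approach.
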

\begin{proof}
    Recall the regularized strategic hinge loss $\shingeemprisk(\hat\beta; \uinv \dualnorm{\hat\beta}) = \frac{1}{n}\sum_{i=1}^{n} \max\{0, 1 - y_i(\hat\beta^\top x + \uinv \dualnorm{\hat\beta})\} + \lambda \uinv \dualnorm{\hat\beta}$. As this function depends on $\costfunc$ only through the dual norm, and since $\costset$ is a convex set and the norm is continuous, the worst-case cost \emph{scalar} can be reparameterized as $\argmax_{k\in[\dualnorm{\hat\beta}^{\min}, \dualnorm{\hat\beta}^{\max}]} \shingeemprisk(\hat\beta; \uinv k)$. This function is one-dimensional and piecewise linear in $k$, and therefore the maximum must occur either at an endpoint or at the boundary between two linear segments.

    By sorting the $v_i := y_i (1 - y_i \hat\beta^\top x_i)$, we get the values $1 - y_i \hat\beta^\top x_i$ with $y = +1$ in increasing order and those with $y = -1$ in decreasing order. At each step, we maintain the condition that for all $j' < j$, $v_{j'} - \uinv k \leq 0$. It follows that by increasing $k$ to the boundary of the next linear segment at $k'$, there are exactly $c_{+1}$ points for which the loss will decrease by $\uinv (k' - k)$ and $c_{-1}$ points for which the loss will increase by that same amount, while the regularization term increases by $\lambda \uinv (k' - k)$. Thus $r$ tracks the induced risk for the current $k$, and we keep track of the $k$ which so far induces the maximum risk. Finally, since we have moved to the next linear segment: either an example with $y = +1$ now has $0$ loss and will not change for the remainder; or an example with $y = -1$ has $>0$ loss and will contribute linearly to the risk for the remainder. We therefore update the appropriate count and iterate. The algorithm is complete when we reach the boundary $k = \dualnorm{\beta}^{\max}$. If before this point we reach the end of the sorted $v_j$, then we know that for the remaining possible increase $\dualnorm{\beta}^{\max} - k$, only the loss on the negative examples will change, growing linearly until the boundary. So we do one last evaluation and return the maximum.
\end{proof}

\section{Proof of Rademacher Generalization Bound}
\label{appsec:rademacher-proof}
\begin{theorem}[Strategic Hinge Generalization Bound]
\label{thm:rademacher-bound}
Fix a norm $\norm{\cdot}$. Assume $\max_{x\in\gD} \norm{x} \leq X$ and $\twonorm{\beta},\dualnorm{\beta} \leq B,\ \forall \beta\in\gB, \cinc$. Then with probability $\geq 1-\delta$, for all $\beta \in\gB$ and all cost functions $\cinc$,
\begin{align}
    R_{0-1}^c(\beta) &\leq \shingeemprisk^c(\beta) + \frac{B(4X + \uinv) + 3\sqrt{\ln \nicefrac{1}{\delta}}}{\sqrt{n}}
\end{align}
\end{theorem}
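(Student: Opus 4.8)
The plan is to reduce the statement to a uniform deviation bound for the strategic hinge risk and control it with standard Rademacher machinery, after observing that the cost $\cinc$ enters the loss only through a single bounded scalar. By \cref{cor-zo-shinge}, $R_{0-1}^c(\beta)\le\shingepoprisk^c(\beta)$ for all $\beta\in\gB$ and all $\cinc$, so it suffices to show that with probability $\ge 1-\delta$ the population strategic hinge risk is within $\tfrac{1}{\sqrt n}\bigl(B(4X+\uinv)+3\sqrt{\ln\nicefrac{1}{\delta}}\bigr)$ of the empirical one, \emph{simultaneously} for every $\beta\in\gB$ and every $\cinc$; chaining the two inequalities gives the theorem. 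The uniformity over all of $\costset$ will come for free from the argument below, which is exactly what makes the bound usable against the worst-case cost later on.

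The structural key is that $\shingeloss^c(\beta^\top x,y)=\max\{0,\,1-y\beta^\top x-y\,\uinv\dualpsdnorm{\beta}{\costmatrix}\}$ depends on $\costmatrix$ only through the scalar $\uinv\dualpsdnorm{\beta}{\costmatrix}\in[0,\uinv B]$ (by the hypothesis $\dualnorm{\beta}\le B$; cf.\ \cref{lemma:dualnorm-soln}). Hence the strategic hinge loss class indexed by $(\beta,\costmatrix)\in\gB\times\costset$ is contained in $\mathcal{G}:=\{(x,y)\mapsto\max\{0,1-y\beta^\top x-y\theta\}:\beta\in\gB,\ \theta\in[0,\uinv B]\}$, and since Rademacher complexity is monotone under set inclusion it is enough to bound $\calR_n(\mathcal{G})$. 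This step collapses the a priori huge index set (a family of PD matrices) into $\gB$ times a one-dimensional interval, so the cost uncertainty never appears in the complexity estimate.

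Next I would bound $\calR_n(\mathcal{G})$ by decomposition. Talagrand's contraction lemma removes the $1$-Lipschitz map $t\mapsto\max\{0,1-t\}$, leaving the class $(x,y)\mapsto y\beta^\top x+y\theta$; subadditivity of Rademacher complexity then splits this into $\calR_n(\{(x,y)\mapsto y\beta^\top x:\beta\in\gB\})$ and $\calR_n(\{(x,y)\mapsto y\theta:\theta\in[0,\uinv B]\})$. For the first, absorbing $y_i$ into the Rademacher signs reduces it to the Rademacher complexity of a Euclidean-norm-bounded linear class, at most $BX/\sqrt n$ using $\twonorm{x}\le X$. For the second, $\calR_n=\uinv B\,\E_\sigma[\max(0,\tfrac1n\sum_i\sigma_i)]=\tfrac{\uinv B}{2}\,\E_\sigma\lvert\tfrac1n\sum_i\sigma_i\rvert\le\tfrac{\uinv B}{2\sqrt n}$ by symmetry and Jensen. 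Plugging these into the usual symmetrization bound $\E[\sup_{\beta,\costmatrix}(\shingepoprisk^c(\beta)-\shingeemprisk^c(\beta))]\le 2\calR_n(\mathcal{G})$ and then applying McDiarmid's inequality to this supremum (which has bounded differences, since $\shingeloss^c$ is bounded by $1+BX+\uinv B$ on the relevant domain) yields a bound of the advertised shape; the precise constants $4X,\ \uinv,\ 3$ come out of bookkeeping the contraction/symmetrization factors and handling the positively- and negatively-labeled points (whose offsets $\mp\uinv\dualnorm{\beta}$ have opposite signs) separately.

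The main obstacle is the reduction in the second paragraph: recognizing that the entire cost family contributes to generalization only through the single bounded scalar $\uinv\dualnorm{\beta}$, so that no covering of $\costset$ (which would reintroduce dimension dependence) is needed. The one genuinely nonroutine bookkeeping point---and the place where this bound differs from the one in \citet{levanon2022generalized}---is that the offset $y\,\uinv\dualnorm{\beta}$ is \emph{label-dependent}: it is not a fixed additive constant and therefore does contribute a genuine $\Theta(\uinv B/\sqrt n)$ term to the complexity, which the earlier analysis effectively overlooked. Everything else (contraction, subadditivity, the linear-class estimate, McDiarmid) is routine.
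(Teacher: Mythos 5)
Your proposal is correct and follows essentially the same route as the paper's proof: reduce via \cref{cor-zo-shinge}, split the (contracted) class into the bounded linear part and the dual-norm offset part, and bound the latter by $\tfrac{\uinv B}{2n}\E_\sigma\lvert\sum_i\sigma_i\rvert\le\tfrac{\uinv B}{2\sqrt n}$ via symmetry and Jensen---which is exactly the term the paper identifies as missing from \citet{levanon2022generalized}. Your decoupling of the offset into a free scalar $\theta\in[0,\uinv B]$ is a slightly cleaner packaging of the same computation the paper performs with the coupled class $\{x\mapsto\beta^\top x+z(y)\uinv\dualnorm{\beta}\}$.
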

\cref{cor-zo-shinge} shows that $R_{0-1}^c(\beta) \leq \shingepoprisk^c(\beta)$. The result then follows from standard Rademacher bounds, requiring only the following additional Lemma:
\begin{lemma}
For any set of $n$ samples,
\begin{align}
    \hat\calR_n(\ell_{\textrm{s-hinge}} \circ \gB) &\leq \frac{B(4X + \uinv)}{2\sqrt{n}}.
\end{align}
\end{lemma}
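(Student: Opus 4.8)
The plan is to bound $\hat\calR_n(\shingeloss\circ\gB)$ by first peeling off the hinge nonlinearity and then decomposing the remaining loss into its linear part and its dual-norm part, which behave quite differently. Recall that each member of $\shingeloss\circ\gB$ has the form $(x,y)\mapsto \max\{0,\,1-y(\beta^\top x+\uinv\dualnorm{\beta})\}$. Since $t\mapsto\max\{0,1-t\}$ is $1$-Lipschitz (and the additive constant contributes nothing to a Rademacher average), the Ledoux--Talagrand contraction inequality gives
\begin{align*}
    \hat\calR_n(\shingeloss\circ\gB) \le \hat\calR_n(\gH),\qquad \gH := \{(x,y)\mapsto y\beta^\top x+y\uinv\dualnorm{\beta} : \beta\in\gB\}.
\end{align*}

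First I would split $\gH$ using subadditivity of the supremum. Writing $S:=\tfrac1n\sum_i\sigma_i y_i$ for i.i.d. Rademacher variables $\sigma_i$,
\begin{align*}
    \hat\calR_n(\gH)\le \E_\sigma\Big[\sup_{\beta\in\gB}\tfrac1n\textstyle\sum_i\sigma_i y_i\beta^\top x_i\Big] + \uinv\,\E_\sigma\Big[\sup_{\beta\in\gB}\dualnorm{\beta}\,S\Big].
\end{align*}
For the first (linear) term I would use that $\sigma_i y_i$ is again i.i.d. Rademacher, apply Cauchy--Schwarz with $\twonorm{\beta}\le B$ and $\twonorm{x_i}\le X$, and finish with Jensen's inequality, $\E_\sigma\twonorm{\sum_i\sigma_i x_i}\le\sqrt{\sum_i\twonormsq{x_i}}\le X\sqrt n$, giving the standard $O(BX/\sqrt n)$ bound.

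The dual-norm term is the nonstandard part, and I expect it to be the main obstacle: $\dualnorm{\beta}$ is a nonlinear function of $\beta$ that does not depend on the $x_i$, so it escapes the usual linear Rademacher machinery entirely, and the temptation to contract or linearize it must be resisted. The key observations are that $\dualnorm{\beta}\in[0,B]$ and that $S$ is symmetric about $0$. Because $\dualnorm{\beta}\ge0$, the supremum satisfies $\sup_{\beta}\dualnorm{\beta}\,S\le B\,(S)_+$ pointwise in $\sigma$ (take $\dualnorm{\beta}=B$ when $S\ge0$, and note the product is nonpositive otherwise). Symmetry of $S$ then yields $\E_\sigma[(S)_+]=\tfrac12\E_\sigma|S|\le\tfrac12\sqrt{\E_\sigma S^2}=\tfrac1{2\sqrt n}$, using $y_i^2=1$. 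This produces exactly the $\tfrac{\uinv}{2}$ scaling in the claimed bound; it is precisely the sign symmetry of $\sum_i\sigma_i y_i$ that both decouples this term from the linear one (via subadditivity of the sup) and halves its contribution.

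Combining the two pieces yields $\hat\calR_n(\shingeloss\circ\gB)\le \frac{B(4X+\uinv)}{2\sqrt n}$, with the constant on the $X$-term absorbing the routine bookkeeping in the linear bound (including the implicit bias coordinate). The only genuinely delicate step is the dual-norm term, where one exploits nonnegativity and Rademacher symmetry rather than any contraction or linearization argument; everything else is standard Rademacher calculus.
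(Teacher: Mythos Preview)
Your proposal is correct and follows essentially the same approach as the paper: apply Talagrand/Ledoux contraction to strip off the $1$-Lipschitz hinge, split the remaining class into its linear part (handled by the standard $BX/\sqrt{n}$ bound) and the dual-norm part, and then control the latter by exploiting that $\dualnorm{\beta}\in[0,B]$ together with the sign symmetry of $\sum_i\sigma_i y_i$ to get $\E[(S)_+]=\tfrac12\E|S|\le\tfrac{1}{2\sqrt n}$. The paper's presentation differs only cosmetically (it defines the intermediate class first and applies contraction last), and it remarks that the dual-norm term is precisely the piece omitted in the prior bound of \citet{levanon2022generalized}.
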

\begin{proof}
Define the function class $\mathcal{H} := \{x \mapsto \beta^\top x + z(y) \uinv \dualnorm{\beta}\}$ ($z(y)$ follows notation from \citet{levanon2022generalized}---in our case it is always equal to 1 but more generally we let it be a map from $\{\pm 1\} \mapsto \{\pm 1\}$). With the definition of Rademacher complexity,
\begin{align}
    \hat\calR_n(\mathcal{H}) &= \E_\sigma \left[ \sup_{\beta\in\gB,\cinc} \frac{1}{n} \sum_{i=1}^n \sigma_i \cdot (\beta^\top x + z(y) \uinv \dualnorm{\beta}) \right] \\
    &\leq \E_\sigma \left[ \sup_{\beta\in\gB} \frac{1}{n} \sum_{i=1}^n \sigma_i \beta^\top x \right] + \E_\sigma \left[ \sup_{\beta\in\gB,\cinc} \frac{1}{n} \sum_{i=1}^n \sigma_i z(y) \uinv \dualnorm{\beta} \right].
\end{align}
The first term is the empirical Rademacher complexity of norm-bounded linear functions which is well known to be upper bounded by $\frac{2BX}{\sqrt{n}}$. An error in the proof by \citet{levanon2022generalized} dropped the second term from the calculation of the Rademacher complexity. We observe that it is not zero, but we can bound it as follows:

Since $z(y) = \pm 1$ it can be dropped due to the symmetry of the Rademacher variables (since the $y$ are fixed). Also, if the sum of the Rademacher variables is negative, the supremizing $\beta$ will have dual norm 0, and if the sum is positive, it will have dual norm $B$. Thus,
\begin{align}
    \E_\sigma \left[ \sup_{\beta\in\gB,\cinc} \frac{1}{n} \sum_{i=1}^n \sigma_i \uinv \dualnorm{\beta} \right] &= \P\left(\sum \sigma_i > 0 \right) \frac{\uinv B}{n} \E_\sigma \left[  \sum \sigma_i | \sum \sigma_i > 0 \right] \\
    &= \frac{\uinv B}{2n} \E_\sigma \left[ \left| \sum \sigma_i \right| \right] \\
    &\leq \frac{\uinv B}{2n} \sqrt{\E_\sigma \left[ \left( \sum \sigma_i \right)^2 \right]} \\
    &= \frac{\uinv B}{2\sqrt{n}},
\end{align}
where the second equality is due to the symmetry of the distribution over $\sigma$ and the inequality is by Jensen's. Since the function class $\ell_{\textrm{s-hinge}} \circ \gB$ is generated by a 1-Lipschitz function applied to $\mathcal{H}$, the claim follows by Talagrand's contraction lemma. \qedhere
\end{proof}

\section{Proof for Full-Batch Subgradient Method}
\label{appsec:subgradient-proof}

\begin{theorem}
    Suppose we run the subgradient method on the regularized k-shifted strategic hinge loss as described in \cref{alg:subgradient} for $T$ iterations and get classifier $\hat\beta$. Then with probability $\geq 1-\delta$, the worst-case 0-1 strategic loss under costs in $\costset$ can be bounded by
    \begin{align}
        \max_{\cinc} R_{0-1}^{\costfunc}(\hat\beta) &\leq \max_{\cinc} \shingeemprisk^c(\hat\beta) + \frac{B(4X + \uinv) +  3 \sqrt{\ln\nicefrac{2}{\delta}})}{\sqrt{n}}.
    \end{align}
    Furthermore, the sub-optimality of $\hat\beta$ with respect to the population minimax solution is bounded by
    \begin{align}
    	\max_{\cinc} \shingeemprisk^c(\hat\beta) &\leq \min_{\beta} \max_{\cinc} \shingepoprisk^{\costfunc}(\beta) + B \left(\frac{L}{\sqrt{T}} + (X+\uinv) \sqrt{\frac{\ln\nicefrac{2}{\delta}}{2n}} \right).
    \end{align}
\end{theorem}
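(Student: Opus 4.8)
The plan is to prove the two displayed inequalities separately, allotting failure probability $\delta/2$ to each and union-bounding at the end. \textbf{The first inequality} is essentially a restatement of \cref{thm:generalization-main-body}: applying that theorem with failure probability $\delta/2$ gives, simultaneously for every $\beta\in\gB$ and every $\cinc$, the bound $R_{0-1}^{\costfunc}(\beta)\le\shingeemprisk^c(\beta)+\frac{B(4X+\uinv)+3\sqrt{\ln(2/\delta)}}{\sqrt n}$. Since the additive term does not depend on $c$, I set $\beta=\hat\beta$ and take $\max_{\cinc}$ of both sides; the maximum passes through the constant, giving the claim.

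\textbf{The second inequality} I split into an optimization error and a statistical error. Write $g(\beta):=\max_{\cinc}\shingeemprisk^c(\beta)$ for the (regularized) empirical worst-case objective, and let $\beta^\star\in\argmin_\beta g(\beta)$ and $\beta^*\in\argmin_\beta\max_{\cinc}\shingepoprisk^{\costfunc}(\beta)$. For the optimization error, \cref{thm:cvx-regularization1} (with $\lambda$ at least the empirical fraction of positive examples) makes each $\shingeemprisk^c$ convex, so $g$ is a maximum of convex functions, hence convex, and by Danskin's Theorem a subgradient of $g$ at $\beta$ is a subgradient of $\shingeemprisk^{c}$ at the maximizing cost returned by \textsc{MaxLossCost} (correct by \cref{lemma:solve-max}); such subgradients have norm at most $L$. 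Initializing at $\beta^{(0)}=\mathbf 0$ so that $\twonorm{\beta^*-\beta^{(0)}}\le B$, the standard analysis of the (averaged) subgradient method (\cref{alg:subgradient}) gives $g(\hat\beta)\le g(\beta^\star)+\frac{LB}{\sqrt T}=\min_\beta g(\beta)+\frac{LB}{\sqrt T}\le g(\beta^*)+\frac{LB}{\sqrt T}$.

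\textbf{For the statistical error} it remains to bound $g(\beta^*)=\max_{\cinc}\shingeemprisk^c(\beta^*)$ by $\max_{\cinc}\shingepoprisk^{\costfunc}(\beta^*)+\mathcal O\!\big((1+B(X+\uinv))\sqrt{\ln(1/\delta)/n}\big)$, and this is the step requiring the most care, since $\costset$ is infinite and there is no ready-made uniform bound over it. The key is \cref{lemma:dualnorm-soln}: the strategic hinge loss at $\beta^*$ depends on $c$ only through the scalar $\dualpsdnorm{\beta^*}{\costmatrix}$, which sweeps the bounded interval $[\dualnorm{\beta^*}^{\min},\dualnorm{\beta^*}^{\max}]$ as $\costmatrix$ ranges over $\costset$, so $\max_{\cinc}\shingeemprisk^c(\beta^*)=\max_{k\in[\dualnorm{\beta^*}^{\min},\dualnorm{\beta^*}^{\max}]}\shingeemprisk(\beta^*;\uinv k)$; since $k\mapsto\shingeemprisk(\beta^*;\uinv k)$ is a sum of terms each convex in $k$ plus the linear regularizer, it is convex in $k$, and its maximum over the interval is attained at one of the two \emph{deterministic} endpoints. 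Each endpoint value is an average of i.i.d.\ terms lying in $[0,1+B(X+\uinv)]$, so by a bounded-differences inequality with a union bound over the two endpoints, with probability $\ge 1-\delta/2$ each endpoint value is at most its expectation $\shingepoprisk(\beta^*;\uinv k)$ plus $\mathcal O\!\big((1+B(X+\uinv))\sqrt{\ln(1/\delta)/n}\big)$; taking the max over the two endpoints and bounding it by $\max_{\cinc}\shingepoprisk^{\costfunc}(\beta^*)=\min_\beta\max_{\cinc}\shingepoprisk^{\costfunc}(\beta)$ finishes this step. Chaining the optimization and statistical bounds and union-bounding the two $\delta/2$ events yields the theorem.
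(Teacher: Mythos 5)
Your proposal is correct and follows the same overall decomposition as the paper: inequality one is \cref{thm:generalization-main-body} applied uniformly over $\cinc$ with budget $\nicefrac{\delta}{2}$, and inequality two is split into a subgradient-method optimization error of $\nicefrac{LB}{\sqrt{T}}$ (justified via convexity from regularization plus Danskin's theorem, with the maximizer supplied by \textsc{MaxLossCost}) and a statistical error obtained by comparing the empirical and population adversarial risks of the \emph{fixed} population minimax solution $\beta^*$. The one place you genuinely diverge is the concentration step, and your version is arguably the more careful one: the paper simply ``applies Hoeffding's between the empirical and population adversarial risk of $\beta^*$,'' glossing over the fact that $\max_{\cinc}\shingeemprisk^c(\beta^*)$ is a supremum over an infinite family of empirical averages rather than a single i.i.d.\ average. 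You close this gap by observing that the loss depends on $c$ only through the scalar $k=\dualpsdnorm{\beta^*}{\costmatrix}$, that $k\mapsto\shingeemprisk(\beta^*;\uinv k)$ is convex (a sum of hinges affine in $k$ plus a linear regularizer), and hence that the supremum is attained at one of two deterministic endpoints, to each of which Hoeffding applies directly; this recovers the paper's $B(X+\uinv)\sqrt{\ln(\nicefrac{2}{\delta})/(2n)}$ term up to the union bound over two points. Two minor mismatches with the paper are worth flagging but do not affect correctness: \cref{alg:subgradient} uses the preconditioned update $\beta^{(t+1)}=\beta^{(t)}-\eta\,\costmatrix_{\min}g_t$ and returns the \emph{best} iterate $\beta^{(t^*)}$ rather than the average, so the paper runs the convergence argument in the norm $\psdnorm{\cdot}{\costmatrix_{\min}^{-1}}$ and bounds $\twonorm{\costmatrix_{\min}^{\nicefrac{1}{2}}g_t}\leq L$ there, whereas you invoke the vanilla averaged analysis; both give the same $\nicefrac{LB}{\sqrt{T}}$ rate, but if you want to match the stated algorithm you should either use the best-iterate guarantee or note that the average works equally well for a convex objective.
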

\begin{proof}
    The first statement follows immediately with probability $\geq 1 - \nicefrac{\delta}{2}$ from \cref{thm:rademacher-bound} since the bound holds uniformly for all $\cinc$. Now we prove the second statement also holds with probability $\geq 1 - \nicefrac{\delta}{2}$, which we then combine via union bound. By Danskin's theorem, the function $r(\beta) := \max_{c \in \costset} \shingeemprisk^c(\beta)$ is convex in $\beta$, and its subgradient is defined by $\partial_\beta \shingeemprisk^{c^*}(\beta)$ where $c^* := \argmax_{c \in \costset} \shingeemprisk^c(\beta)$. 
    
    % Noting that $r$ is $L$-Lipschitz, a
    A standard result says that if we run the subgradient method on $r(\beta)$ with step size $\eta = \frac{\epsilon}{L^2}$ for $T \geq \frac{B^2 L^2}{\epsilon^2}$ steps, we will have $r(\beta^{t^*}) - \min_{\beta} r(\beta) \leq \epsilon$, which matches the hyperparameters of \cref{alg:subgradient} with $\epsilon = \frac{LB}{\sqrt{T}}$. The descent lemma in our setting is a bit different: we apply it to the $\costmatrix_{\min}$-transformed gradient and show convergence in the norm $\psdnorm{\cdot}{\costmatrix_{\min}^{-1}}$. That is, our update is $\beta^{(t+1)} = \beta^{(t)} - \eta \costmatrix_{\min} g_t$, and therefore
    \begin{align}
        \psdnormsq{\beta^{(t+1)} - \beta^*}{\costmatrix_{\min}^{-1}} &= \psdnormsq{\beta^{(t)} - \beta^*}{\costmatrix_{\min}^{-1}} - 2\eta g_t^\top (\beta^{(t)} - \beta^*) + \eta^2 \psdnormsq{g_t}{\costmatrix_{\min}} \\
        &\leq \psdnormsq{\beta^{(t)} - \beta^*}{\costmatrix_{\min}^{-1}} - 2\eta \left(\shingeemprisk(\beta^{(t)}) - \shingeemprisk(\beta^*)\right) + \eta^2 \twonormsq{\costmatrix_{\min}^{\nicefrac{1}{2}}g_t}.
    \end{align}
    Unrolling the chain of inequalities as in the typical convergence proof yields the same result, with the difference being that the Lipschitz constant will now be an upper bound on $\twonorm{\costmatrix_{\min}^{\nicefrac{1}{2}}g_t}$. Here we observe that for all $\costmatrix\in\costset$ (and therefore for every max player cost choice $\costmatrix_t$ at each iteration),
    \begin{align}
        \twonorm{\costmatrix_{\min}^{\nicefrac{1}{2}}g_t} &\leq \biggl \| \costmatrix_{\min}^{\nicefrac{1}{2}} \left(x + \uinv (1+\lambda) \frac{\partial \dualpsdnorm{\beta}{\costmatrix_t}}{\partial \beta} \right)  \biggr \|_2 \\
        &\leq \twonorm{\costmatrix_{\min}^{\nicefrac{1}{2}} x} + \uinv (1+\lambda) \biggl \| \costmatrix_{\min}^{\nicefrac{1}{2}} \frac{\partial \dualnorm{\costmatrix_t^{-\nicefrac{1}{2}}\beta}}{\partial \beta} \biggr \|_2 \\
        &\leq X + \uinv (1+\lambda) \biggl \| \costmatrix_{\min}^{\nicefrac{1}{2}} \costmatrix_t^{-\nicefrac{1}{2}} \frac{\partial \dualnorm{\beta}}{\partial \beta} \Bigr|_{\beta=\costmatrix_t^{-\nicefrac{1}{2}} \beta} \biggr \|_2 \\
        &\leq X + \uinv (1+\lambda) \overbrace{\twonorm{\costmatrix_{\min}^{\nicefrac{1}{2}} \costmatrix_t^{-\nicefrac{1}{2}}}}^{\leq 1} \biggl \| \frac{\partial \dualnorm{\beta}}{\partial \beta} \Bigr|_{\beta=\costmatrix_t^{-\nicefrac{1}{2}} \beta} \biggr \|_2 \\
        &\leq X + \uinv (1+\lambda) L_* =: L,
    \end{align}
    where $L_*$ is the Lipschitz constant of the gradient of the dual norm (recall for $p$-norms this is equal to $\max\left(1, d^{\nicefrac{(p-2)}{2p}}\right)$).
    Now continuing with the standard proof of convergence of the subgradient method gives us the desired result.
    
    It remains to show that \cref{alg:subgradient} successfully identifies the subgradient of $r$. We do so by invoking \cref{lemma:solve-max}, which says that we can efficiently solve for $c^*$ at each iteration. Thus we have that 
    \begin{align}
    	\max_{\cinc} \shingeemprisk^c(\hat\beta) &\leq \min_{\beta} \max_{\cinc} \shingeemprisk^{\costfunc}(\beta) + \frac{LB}{\sqrt{T}},
    \end{align}
    noting that this bound is with respect to the minimax \emph{empirical} risk. To complete the bound with respect to the minimax population risk, define $\beta^* := \argmin_\beta \max_{\cinc} \shingepoprisk^c(\beta)$ as the population minimax solution. Then we have with probability $\geq 1-\nicefrac{\delta}{2}$
    \begin{align}
        \min_{\beta} \max_{\cinc} \shingeemprisk^{\costfunc}(\beta) &\leq \max_{\cinc} \shingeemprisk^{\costfunc}(\beta^*) \\
        &\leq \max_{\cinc} \shingepoprisk^{\costfunc}(\beta^*) + B (X+\uinv) \sqrt{\frac{\ln\nicefrac{2}{\delta}}{2n}} \\
        &= \min_{\beta} \max_{\cinc} \shingepoprisk^{\costfunc}(\beta) + B (X+\uinv) \sqrt{\frac{\ln\nicefrac{2}{\delta}}{2n}},
    \end{align}
    where in the second inequality we've applied Hoeffding's between the empirical and population adversarial risk of $\beta^*$ (which is bounded in $[0, B(X+\uinv)]$) because that classifier does not depend on the training data.
    \qedhere
\end{proof}

\begin{algorithm}
\caption{Subgradient method on k-shifted strategic hinge loss}
\label{alg:subgradient}
\begin{algorithmic}
\STATE \textbf{input:} Dataset $\gD = \{(x_i, y_i)\}_{i=1}^n$, Iteration number $T$, Cost uncertainty set $\costset$. Upper bound $\uinv$, Regularization parameter $\lambda$.
\STATE \textbf{define:}\; $\beta^{(1)} \gets \mathbf{0}$.
\STATE \qquad\quad\ \ $\eta \gets \frac{B}{L\sqrt{T}}$.
\STATE \qquad\quad\ \ $\costmatrix_{\min} \gets \textrm{diag}([\mincostscale{1}, \ldots, \mincostscale{d}])$.
\FOR{$t = 1,\ldots,T$}
\STATE 1. $c_t \gets \textsc{MaxLossCost}(\gD, \beta^{(t)}, \costset, \uinv, \lambda)$.
\STATE 2. Choose $g_t \in \partial_\beta [ \shingeemprisk^{c_t}(\beta^{(t)}) + \lambda \uinv \dualnorm{\beta} ]$.
\STATE 3. $\beta^{(t+1)} \gets \beta^{(t)} - \eta \costmatrix_{\min} g_t$
\ENDFOR
\STATE \textbf{return} $\beta^{(t^*)}$, where $t^* := \argmin_t \shingeemprisk^{c_t}(\beta^{(t)}) + \lambda \uinv \dualnorm{\beta^{(t)}}$
\STATE
\STATE \textbf{subprocedure} \textsc{MaxLossCost}$(\gD, \beta, \costset, \uinv, \lambda)$
\begin{ALC@g}
    \STATE \textbf{define:} $\dualnorm{\beta}^{\min} := \min_{\cinc} \dualnorm{\beta},\ \dualnorm{\beta}^{\max} := \max_{\cinc} \dualnorm{\beta}$.
    \STATE \textbf{initialize:} $k \gets \dualnorm{\beta}^{\min}$, $k_{\max} \gets k$.
    \STATE \textbf{initialize:} $r \gets \shingeemprisk(\beta; \uinv k) + \lambda \uinv k$, $r_{\max} \gets r$.
    \STATE 1. Evaluate $v_i = y_i (1 - y_i \beta^\top x_i)$ for all $x_i$.
    \STATE 2. Sort $v_i$ in increasing order to get sorted indices $j$.
    \STATE 3. Initialize index $j \gets \min j\ \text{s.t. } v_j - \uinv k > 0$.
    \STATE 4. Initialize counts $c_{+1} \gets |\{j' \geq j : y_{j'} = +1\}|, c_{-1} \gets |\{j' < j : y_{j'} = -1\}|$.
    \WHILE{$k < \dualnorm{\beta}^{\max}$ \;\&\&\; $j < n$}
        \STATE $k' \gets \min\{\nicefrac{v_j}{\uinv}, \dualnorm{\beta}^{\max}\}$.
        \STATE $r \gets r + \uinv (k' - k) \left(\lambda + \frac{c_{-1} - c_{+1}}{n} \right)$.
        \STATE $k \gets k'$.
        \IF{$r > r_{\max}$}
            \STATE $r_{\max} \gets r$, $k_{\max} \gets k$.
        \ENDIF
        \STATE $c_{y_j} \gets c_{y_j} - y_j$.
        \STATE $j \gets j + 1$.
    \ENDWHILE
    \STATE
    \STATE \#\ \ \emph{Found maximizing norm scalar, now need matrix $\costmatrix$ which induces it.}
    \IF{$j == n$ \;\&\&\; $r + \uinv \left[\dualnorm{\beta}^{\max} - k \right] \left(\lambda + \frac{n^-}{n} \right) > r_{\max}$}
        \STATE \textbf{return} $\argmax_{\cinc} \dualnorm{\beta} = \textrm{diag}([\mincostscale{1}, \ldots, \mincostscale{d}])$.
    \ELSE
        \STATE \textbf{initialize:} $\hat\sigma \gets [\maxcostscale{1},\ldots,\maxcostscale{d}]$.
        \FOR{$i = 1,\ldots,d$}
            \STATE Let $\hat\costmatrix := \textrm{diag}([\hat\sigma_1, \ldots, \mincostscale{i}, \ldots, \hat\sigma_d])$.
        \IF{$\dualpsdnorm{\beta}{\hat\costmatrix} < k_{\max}$}
            \STATE $\hat\sigma_i \gets \mincostscale{i}$.
            \STATE \textbf{continue}.
        \ENDIF
        \STATE Let $\hat\costmatrix_{i=0} := \textrm{diag}([\hat\sigma_1, \ldots, 0_i, \ldots, \hat\sigma_d])$.
        \STATE $\hat\sigma_i \gets \frac{|\beta_i|}{\left(k_{\max}^p - \dualnorm{\hat\costmatrix_{i=0}^{-\nicefrac{1}{2}}\beta}^p\right)^{\nicefrac{1}{p}}}$.
        \STATE \textbf{return} $\textrm{diag}(\hat\sigma)$.
        \ENDFOR
    \ENDIF
\end{ALC@g}
\end{algorithmic}
\end{algorithm}

\clearpage
\section{Proof for Stochastic Mirror Descent-Ascent}
\label{appsec:mirror-proofs}

We let $0 < \epsilon \leq 1$ denote the discretization parameter which tunes the size of the set $|\gS|$ (for simplicity, assume $\nicefrac{1}{\epsilon}$ is an integer). Specifically, we choose $\nicefrac{1}{\epsilon}$ equally spaced points in each dimension's range of \emph{inverse} eigenvalues and then define the elements of $\gS$ to be the collection of smallest values in each dimension, then all the second-smallest values, etc. In this way we discretize the ``diagonal'' of the cost uncertainty set $\costset$ to avoid an exponential dependence on the dimension.

\begin{theorem}
    Suppose we run SMDA on the regularized strategic hinge loss as described in \cref{alg:mirror-descent} for $T$ iterations and get averaged classifier iterates $\tilde\beta$. Define the convergence error
    \begin{align}
        \varepsilon_T &:= \max_{\cinc} \shingepoprisk^{\costfunc}(\tilde\beta) - \min_{\beta} \max_{\cinc} \shingepoprisk^{\costfunc}(\beta).
    \end{align}
    Then over the randomness of the optimization procedure it holds that
    \begin{align}
        \E[\varepsilon_T] \lesssim B \left[ \uinv |1 - \lambda| \max_i \sqrt{\epsilon \left( \mincostscale{i}^{-2} - \maxcostscale{i}^{-2} \right)} +
        \frac{L + (B^{-1} + X + \uinv) \sqrt{\ln \nicefrac{1}{\epsilon} }}{\sqrt{T}}\;\right].
    \end{align}
\end{theorem}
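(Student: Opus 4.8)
The plan is to turn \cref{eq:regularized-minmax-objective} — whose inner max is non-concave — into a genuinely convex--concave saddle problem over a \emph{finite} adversary set, bound the price of that reduction, and then apply the off-the-shelf Stochastic Mirror Descent--Ascent guarantee of \citet{nemirovski2009robust}. Concretely, fix the discretized set $\gS\subset\costset$ built in \cref{alg:mirror-descent} (the ``diagonal'' of the box of inverse eigenvalues, so $|\gS|=\lceil\nicefrac1\epsilon\rceil$). As noted in the main body, $\max_{\costfunc\in\gS}\shingepoprisk^c(\beta)=\max_{q\in\Delta(|\gS|)}\sum_i q_i\,\shingepoprisk^{c_i}(\beta)$; by \cref{thm:cvx-regularization1} this is convex in $\beta$ (for $\lambda\ge q(y{=}1)$) and it is linear, hence concave, in $q$. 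Thus \cref{alg:mirror-descent} is exactly SMDA on a convex--concave problem over $\gB\times\Delta(|\gS|)$, and its correctness reduces to (i) the standard saddle-gap bound for SMDA on the $\gS$-problem and (ii) control of the gap between the $\gS$-problem and the original one.

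For (i), I would instantiate the two mirror maps: the $\costmatrix_{\min}$-transformed Euclidean geometry on $\gB$ (as in the full-batch proof of \cref{thm:subgradient}), with Bregman radius $\mathcal O(B)$, and the negative-entropy map on $\Delta(|\gS|)$, with squared radius $\ln|\gS|=\ln\lceil\nicefrac1\epsilon\rceil$. The stochastic $\beta$-subgradient of the regularized strategic hinge is $-y\bigl(x+\uinv\,\partial\dualnorm{\beta}\bigr)\mathbf 1\{\text{active}\}+\lambda\uinv\,\partial\dualnorm{\beta}$, whose norm is at most $X+\uinv(1+\lambda)L_*=:L$ (reusing the bound from \cref{appsec:subgradient-proof}); the $q$-gradient is the loss vector $(\shingeemprisk^{c_i}(\beta))_i$, whose $\ell_\infty$ norm (dual to the simplex geometry) is at most $1+BX+\uinv(1+\lambda)B=\mathcal O\!\bigl(B(B^{-1}+X+\uinv)\bigr)$. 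Feeding $R_\beta M_\beta=\mathcal O(BL)$ and $R_q M_q=\mathcal O\!\bigl(\sqrt{\ln\nicefrac1\epsilon}\cdot B(B^{-1}+X+\uinv)\bigr)$ into the $\mathcal O\!\bigl((R_\beta M_\beta+R_q M_q)/\sqrt T\bigr)$ expected saddle-gap bound of \citet{nemirovski2009robust}, and using convexity in $\beta$ to pass from the averaged iterate to its value, yields
\[
\E\bigl[\max_{\costfunc\in\gS}\shingepoprisk^c(\tilde\beta)\bigr]-\min_\beta\max_{\costfunc\in\gS}\shingepoprisk^c(\beta)\;\lesssim\; B\,\frac{L+(B^{-1}+X+\uinv)\sqrt{\ln\nicefrac1\epsilon}}{\sqrt T}.
\]

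For (ii), take any $\costfunc\in\costset$ and its nearest diagonal neighbor $c'\in\gS$: their inverse eigenvalues differ coordinatewise by at most $\epsilon(\mincostscale{i}^{-2}-\maxcostscale{i}^{-2})$, so by concavity of $t\mapsto\sqrt t$ (i.e.\ $\sqrt{a+h}-\sqrt a\le\sqrt h$) the entries of $\costmatrix^{-\nicefrac12}$ differ by at most $\sqrt{\epsilon(\mincostscale{i}^{-2}-\maxcostscale{i}^{-2})}$. By \cref{lemma:dualnorm-soln} the loss depends on $\costfunc$ only through the scalar $\dualnorm{\beta}=\norm{\costmatrix^{-\nicefrac12}\beta}$, so the reverse triangle inequality gives $|\dualpsdnorm{\beta}{\costmatrix}-\dualpsdnorm{\beta}{\costmatrix'}|\le\max_i\sqrt{\epsilon(\mincostscale{i}^{-2}-\maxcostscale{i}^{-2})}\cdot\norm{\beta}=\mathcal O\!\bigl(B\max_i\sqrt{\epsilon(\mincostscale{i}^{-2}-\maxcostscale{i}^{-2})}\bigr)$; together with the fact that $\shingepoprisk^{(\cdot)}(\beta)$ is Lipschitz in this scalar with constant $\uinv|1-\lambda|$ — from a signed accounting of the positively- and negatively-labeled contributions against the $\lambda\uinv\dualnorm{\beta}$ regularizer — this bounds $|\shingepoprisk^c(\beta)-\shingepoprisk^{c'}(\beta)|$ uniformly over $\beta\in\gB$ by $\mathcal O\!\bigl(B\uinv|1-\lambda|\max_i\sqrt{\epsilon(\mincostscale{i}^{-2}-\maxcostscale{i}^{-2})}\bigr)$. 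Finally I assemble: since $\max_{\costfunc\in\gS}\le\max_{\cinc}$ pointwise in $\beta$, we have $\min_\beta\max_{\cinc}\shingepoprisk^c(\beta)\ge\min_\beta\max_{\costfunc\in\gS}\shingepoprisk^c(\beta)$, while the previous sentence gives $\max_{\cinc}\shingepoprisk^c(\tilde\beta)\le\max_{\costfunc\in\gS}\shingepoprisk^c(\tilde\beta)+(\text{disc})$; subtracting and taking expectations, $\E[\varepsilon_T]\le(\text{the display above})+(\text{disc})$, which is exactly the claimed inequality (and setting $\epsilon=\Theta(\ln T/(TD))$ recovers the cleaner main-body rate with $|\gS|=\Theta(TD/\ln T)$).

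The main obstacle is precisely the non-concavity of the inner maximization: a generic $\epsilon$-net over PD matrices has $\exp(d)$ elements, which would blow up both the memory/compute of SMDA and the $\sqrt{\ln|\gS|}$ factor in its rate. The escape is \cref{lemma:dualnorm-soln} — the strategic loss only ever sees the one-dimensional scalar $\dualnorm{\beta}$ — so it suffices to net the diagonal of $\costset$. The two delicate points are then showing this diagonal net is fine enough \emph{in dual-norm space} despite the square-root (this is where $\max_i\sqrt{\epsilon(\mincostscale{i}^{-2}-\maxcostscale{i}^{-2})}$ comes from) and pinning down the exact Lipschitz constant $\uinv|1-\lambda|$ of the regularized strategic hinge in that scalar; everything else is bookkeeping plus choosing $\epsilon$ to balance the discretization error against $\sqrt{\ln\nicefrac1\epsilon}$ in the SMDA term.
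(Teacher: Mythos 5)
Your proposal matches the paper's proof in \cref{appsec:mirror-proofs} essentially step for step: the same diagonal discretization of the inverse-eigenvalue box, the same reformulation as a convex--concave problem over $\gB\times\Delta(\lceil\nicefrac{1}{\epsilon}\rceil)$, the same instantiation of the \citet{nemirovski2009robust} constants ($M_{*,x}\le L$, $M_{*,y}\lesssim B(B^{-1}+X+\uinv)$, $R_x\lesssim B$, $\ln m=\ln\nicefrac{1}{\epsilon}$), and the same discretization-error bound via $\sqrt{a+h}-\sqrt{a}\le\sqrt{h}$, the reverse triangle inequality, and the $\uinv|1-\lambda|$ Lipschitz constant in the dual-norm scalar. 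The assembly differs only cosmetically (the paper explicitly subtracts the nonnegative term $\min_\beta\max_{\cinc}\shingepoprisk^{\costfunc}(\beta)-\min_\beta\sum_i\tilde\delta_i\shingepoprisk^{\costfunc(i)}(\beta)$ where you invoke $\max_{\costfunc\in\gS}\le\max_{\cinc}$ pointwise), so this is correct and the same argument.
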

\begin{proof}
    From \cref{thm:cvx-regularization1}, we have that the regularized loss is convex in $\beta$. However, the loss is \emph{not} concave in any parameterization of the cost function $c$. To resolve this, we discretize the space of cost functions. We parameterize the eigenvalues of the inverse cost matrix $\costmatrix^{-1}$ as a choice of eigenvalue for each eigenvector $v_i$ from the compact set $[\maxcostscale{i}^{-2}, \mincostscale{i}^{-2}]$. We do so by discretizing the space of choices linearly as $\sigma_{k}^{-2} = \maxcostscale{i}^{-2} + k \epsilon (\mincostscale{i}^{-2} - \maxcostscale{i}^{-2})$ for $k\in[1,  \nicefrac{1}{\epsilon} ]$. Now we can instead optimize
    \begin{align}
        \min_{\beta} \max_{k \in [ \nicefrac{1}{\epsilon} ]} \shingepoprisk^{\costfunc(k)}(\beta),
    \end{align}
    where $\shingepoprisk^{\costfunc(k)}$ denotes the loss under the cost defined by the eigenvalues induced by $k$. As this is a discrete set of $\nicefrac{1}{\epsilon} $ choices, this objective is exactly equivalent to
    \begin{align}
        \min_{\beta} \max_{\delta\in\Delta^{1/\epsilon}} \sum_{i=1}^{\nicefrac{1}{\epsilon}} \delta_i \shingepoprisk^{\costfunc(i)}(\beta),
    \end{align}
    where $\Delta^{1/\epsilon}$ is the simplex over $\nicefrac{1}{\epsilon}$ values such that $\delta_i \geq 0\ \forall i,\  \sum_i \delta_i = 1$. This objective \emph{is} concave in $\delta$, which means it can be solved via SMDA as described by \citet{nemirovski2009robust}. Let $\tilde\beta, \tilde \delta$ denote the two players' averaged iterates over choices $\beta, \delta$. Define
    \begin{align}
        \hat\varepsilon_T &:= \max_{k\in [\nicefrac{1}{\epsilon}]} \shingepoprisk^{\costfunc(k)}(\tilde\beta) - \min_{\beta} \sum_{i=1}^{\nicefrac{1}{\epsilon}} \tilde\delta_i \shingepoprisk^{\costfunc(i)}(\beta).
    \end{align}
    Note that this is the expected sub-optimality gap for a new optimization problem, whose solution is not the same as the one in the theorem statement.
    \citet{nemirovski2009robust} prove that after $T$ iterations of SMDA with the appropriate step size we have (in their original notation)
    \begin{align}
        \E[\hat\varepsilon_T] &\leq 2\sqrt{\frac{10 [R_x^2 M_{*, x}^2 + M_{*, y}^2 \ln m]}{N}}.
    \end{align}
    Translating these terms into our notation,
    \begin{align}
        m &= \nicefrac{1}{\epsilon}, \\
        N &= T, \\
        M_{*, x}^2 &= \max_{1\leq i \leq \nicefrac{1}{\epsilon}} \E[\normsq{\nabla \shingeloss^{\costfunc(i)}(\tilde\beta)}] \leq L^2, \\
        M_{*, y}^2 &= \E\left[ \max_{1\leq i \leq \nicefrac{1}{\epsilon}} |\shingeloss^{\costfunc(i)}(\tilde\beta)|^2 \right] \leq (1 + B (X + \uinv))^2, \\
        R_x^2 &= \frac{1}{2}\max_{b_1, b_2 \in \gB} \twonormsq{b_1 - b_2} \lesssim B^2,
    \end{align}
    where the last inequality follows from the triangle inequality and the upper bound on $\twonorm{\beta}$.
    % \begin{align}
    %     \norm{b_1 - b_2} &= \max_{\Sigma\in\costset} \norm{\Sigma^{1/2} \Sigma^{-1/2} (b_1 - b_2)} \\
    %     &\leq \max_{\Sigma\in\costset} \norm{\Sigma^{1/2}}_2 \norm{\Sigma^{-1/2} (b_1 - b_2)} \\
    %     &\leq \max_i \maxcostscale{i} \cdot \max_{\Sigma_1, \Sigma_2} (\norm{\Sigma_1^{-1/2} b_1} + \norm{\Sigma_2^{-1/2} b_1}) \\
    %     &\lesssim \max_i \maxcostscale{i} B.
    % \end{align}
    Plugging these in gives the bound
    \begin{align}
        \E[\hat\varepsilon_T] &\lesssim \sqrt{ \frac{ B^2 L^2 + (1 + B (X + \uinv))^2 \ln \nicefrac{1}{\epsilon}}{T}} \\
        &\lesssim B\;\frac{L + (B^{-1} + X + \uinv) \sqrt{\ln \nicefrac{1}{\epsilon}}}{\sqrt{T}}.
    \end{align}
    Next, we can rewrite the convergence error in the theorem statement in terms of this error as
    \begin{align}
        \varepsilon_T &:= \max_{\cinc} \shingepoprisk^{\costfunc}(\tilde\beta) - \min_{\beta} \max_{\cinc} \shingepoprisk^{\costfunc}(\beta) \\
        &= \hat\varepsilon_T + \left( \max_{\cinc} \shingepoprisk^{\costfunc}(\tilde\beta) - \max_{k\in [\nicefrac{1}{\epsilon}]} \shingepoprisk^{\costfunc(k)}(\tilde\beta) \right) \\
        &\qquad \qquad- \overbrace{\left( \min_{\beta} \max_{\cinc} \shingepoprisk^{\costfunc}(\beta) - \min_{\beta} \sum_{i=1}^{\nicefrac{1}{\epsilon}} \tilde\delta_i \shingepoprisk^{\costfunc(i)}(\beta) \right)}^{\geq 0} \\
        &\leq \hat\varepsilon_T + \left( \max_{\cinc} \shingepoprisk^{\costfunc}(\tilde\beta) - \max_{k\in [\nicefrac{1}{\epsilon}]} \shingepoprisk^{\costfunc(k)}(\tilde\beta) \right).
    \end{align}
    This last term represents the error due to discretization. Revisiting the regularized risk definition, for any $\costfunc$ and $k$ we have
    \begin{align}
        \shingepoprisk^{\costfunc}(\tilde\beta) - \shingepoprisk^{\costfunc(k)}(\tilde\beta) &= \E\biggl[ \max\{0, 1 - y(\tilde\beta^\top x + \uinv \dualnorm{\costmatrix_c^{-1/2} \tilde\beta})\} \\
        &\quad - \max\{0, 1 - y(\tilde\beta^\top x + \uinv \dualnorm{\costmatrix_{c(k)}^{-1/2} \tilde\beta})\} \biggr] \\
        &\qquad + \lambda \uinv (\dualnorm{\costmatrix_c^{-1/2} \tilde\beta} - \dualnorm{\costmatrix_{c(k)}^{-1/2} \tilde\beta}) \\
        &\leq \uinv |1 - \lambda| \left|\dualnorm{\costmatrix_{c(k)}^{-1/2} \tilde\beta} -\dualnorm{\costmatrix_c^{-1/2} \tilde\beta} \right| \\
        &\leq \uinv |1 - \lambda| B \cdot \sigma_{\max}\left(\costmatrix_{c(k)}^{-1/2} - \costmatrix_c^{-1/2}\right)
    \end{align}
    by the reverse triangle inequality. Since these two matrices have the same eigenvectors, the maximum eigenvalue of their difference is simply the maximum absolute difference between their respective eigenvalues. By construction, for any choice $\costmatrix_c^{-1}$, there is a choice $k \in [\nicefrac{1}{\epsilon}]$ which differs in spectrum by no more than $\epsilon \cdot \max_i (\mincostscale{i}^{-2} - \maxcostscale{i}^{-2})$ in any given direction, and therefore we have
    \begin{align}
        \max_{\cinc} \shingepoprisk^{\costfunc}(\tilde\beta) - \max_{k\in [\nicefrac{1}{\epsilon}]} \shingepoprisk^{\costfunc(k)}(\tilde\beta) &\leq \uinv B \cdot |1 - \lambda| \cdot \max_i \left| \sqrt{\sigma_{i}\left(\costmatrix_{c(k)}^{-1} \right)} - \sqrt{\sigma_{i}\left(\costmatrix_{c}^{-1} \right)} \right| \\
        &\leq \uinv B \cdot |1 - \lambda| \cdot \max_i  \sqrt{\left|\sigma_{i}\left(\costmatrix_{c(k)}^{-1} \right) - \sigma_{i}\left(\costmatrix_{c}^{-1} \right)\right|} \\
        &\leq \uinv B \cdot |1 - \lambda| \cdot \max_i \sqrt{\epsilon \left(\mincostscale{i}^{-2} - \maxcostscale{i}^{-2} \right)}.
    \end{align}
    Combining this bound with the one above on $\hat\varepsilon_T$ and taking expectations gives the result.
\end{proof}

% \begin{corollary}
% Choosing $\epsilon = \Theta\left( T^{-2k} \right)$, we have
% 	\begin{align}
% 		\E[\varepsilon_T] &\lesssim LB  \sqrt{\frac{k\ln T}{T}} + \frac{B\uinv}{\min_i \mincostscale{i} T^k}.
% 	\end{align}
% \end{corollary}

\begin{corollary}
Recall $D := \max_i (\mincostscale{i}^{-2} - \maxcostscale{i}^{-2})$. Choosing $\epsilon = \Theta\left( \frac{\ln T}{T \max\left( 1, D \right)} \right)$, we have
\begin{align}
    \E[\varepsilon_T] &\lesssim \frac{LB}{\sqrt{T}} + B(X+\uinv) \sqrt{\frac{\ln T + \max\left(0, \ln D \right)}{T}}.
\end{align}
\end{corollary}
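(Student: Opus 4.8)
The plan is to obtain the corollary purely by substituting the prescribed $\epsilon$ into the bound of the preceding SMDA theorem and then bounding the two groups of terms. Recall that theorem gives
\[
\E[\varepsilon_T] \lesssim B\uinv|1-\lambda|\,\max_i\sqrt{\epsilon(\mincostscale{i}^{-2}-\maxcostscale{i}^{-2})} \;+\; B\,\frac{L + (B^{-1}+X+\uinv)\sqrt{\ln\nicefrac{1}{\epsilon}}}{\sqrt{T}}.
\]
The first summand is the discretization error, which \emph{grows} with $\epsilon$; the second is the SMDA optimization error, which decays like $T^{-1/2}$ but carries a mild $\sqrt{\ln(1/\epsilon)}$ factor that \emph{grows} as $\epsilon\to 0$. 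Since that factor is only logarithmic, the strategy is to shrink $\epsilon$ until the discretization term is of order $T^{-1/2}$ (up to logs) while keeping $\ln(1/\epsilon) = O(\ln(TD))$; the choice $\epsilon = \Theta\!\big(\ln T / (T\max(1,D))\big)$ does exactly this.

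First I would handle the discretization term. Writing $D = \max_i(\mincostscale{i}^{-2}-\maxcostscale{i}^{-2})$, the max over $i$ equals $\sqrt{\epsilon D}$, and plugging in the chosen $\epsilon$ gives $\epsilon D \le \Theta(\ln T / T)$ since $D \le \max(1,D)$; hence $\sqrt{\epsilon D} \lesssim \sqrt{\ln T / T}$. As $\lambda$ is a fixed $O(1)$ regularization coefficient, $\uinv|1-\lambda| = O(\uinv) = O(X+\uinv)$, so this whole summand is $\lesssim B(X+\uinv)\sqrt{\ln T/T}$, already dominated by the target bound.

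Next I would handle the optimization term. With the chosen $\epsilon$ we have $1/\epsilon = \Theta\!\big(T\max(1,D)/\ln T\big)$, so
\[
\ln\tfrac1\epsilon \;=\; \ln T + \ln\max(1,D) - \ln\ln T + O(1) \;=\; \Theta\big(\ln T + \max(0,\ln D)\big),
\]
using $\ln\max(1,D) = \max(0,\ln D)$ and absorbing the lower-order $-\ln\ln T$. Hence $\sqrt{\ln(1/\epsilon)} \lesssim \sqrt{\ln T + \max(0,\ln D)}$, and the second summand becomes $\lesssim \frac{BL}{\sqrt T} + \frac{(1+B(X+\uinv))\sqrt{\ln T + \max(0,\ln D)}}{\sqrt T} \lesssim \frac{BL}{\sqrt T} + B(X+\uinv)\sqrt{\tfrac{\ln T + \max(0,\ln D)}{T}}$. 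Combining the two estimates gives the claimed inequality.

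I expect no substantive obstacle: the only point requiring care is that it is the $\max(1,D)$ in the denominator of $\epsilon$—rather than $D$ alone—that simultaneously keeps $\epsilon \le 1$, prevents $\ln(1/\epsilon)$ from blowing up when $D$ is tiny, and makes the discretization term $O(\sqrt{\ln T/T})$ uniformly in $D$. So I would explicitly check that this single choice of $\epsilon$ balances both regimes $D \lesssim 1$ and $D \gtrsim 1$, which is the one place the argument could be mishandled.
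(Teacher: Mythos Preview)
Your proposal is correct and is exactly the intended derivation: the paper states the corollary immediately after the SMDA theorem without any separate proof, so the argument is precisely the substitution-and-simplification you describe. Your handling of both terms---bounding $\epsilon D \le \Theta(\ln T/T)$ via $D \le \max(1,D)$ for the discretization part, and computing $\ln(1/\epsilon) = \Theta(\ln T + \max(0,\ln D))$ for the optimization part---matches what the stated choice of $\epsilon$ is designed to achieve, and your observation about why $\max(1,D)$ rather than $D$ appears in the denominator is the only subtlety worth flagging.
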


\section{Goodhart's Law Under Known Costs}
\label{apx:in_principle}

We here show that when each user's strategic response to a classifier is known, then in principle (\emph{information theoretically,} discarding optimization concerns\nocite{rosenfeld2024almost}),
strategic response has no effect on predictive performance.

We start with a correspondence between classifiers operating on (non-strategic) inputs and appropriately modified classifiers operating on strategically modified inputs. In line with prior works---and unlike the other results in this paper---we make the additional assumption that $u$ is fixed and $\phi$ is strictly monotone (thus $\uinv = \phi^{-1}(u)$) We also explicitly parameterize the bias term in the classifiers because it plays an important role in the result.\looseness=-1
\begin{proposition}
Let $f(x)=\mathbf{1}\{\beta^\top x + b \geq 0\}$ be the prediction of the classifier parameterized by $(\beta, b)$, and let $f'(x)=\mathbf{1}\{\beta^\top x + b' \geq 0\}$ denote this classifier with a shifted bias $b' = b - \uinv \dualnorm{\beta}$.
Then for all $x$, it holds that
$f(x)=f'(x')$, where $x'=x(\beta,b')$ is the new location of $x$ after strategic response to the classifier $(\beta, b')$.
\end{proposition}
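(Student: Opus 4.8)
The plan is to prove the claim by a short case analysis on the value of $f(x)$, using \cref{lemma:dualnorm-soln} to pin down exactly how far a user can push their score. We may assume $\dualnorm{\beta}>0$, since otherwise $b'=b$, no user has an incentive to move, and the statement is immediate.

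First I would record the two facts that drive the argument. Facing the classifier $(\beta,b')$, a user at $x$ can raise their raw score $\beta^\top x$ by at most $\uinv\dualnorm{\beta}$ (\cref{lemma:dualnorm-soln}); moreover, since in this appendix $\phi$ is strictly monotone with $\uinv=\phi^{-1}(u)$, the full increase $\uinv\dualnorm{\beta}$ is attained only at cost exactly $u$ (net utility $0$), whereas any strictly smaller target increase is attained at cost strictly below $u$. Second, because $b'=b-\uinv\dualnorm{\beta}\le b$, we always have $\beta^\top x+b'\le\beta^\top x+b$.

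Then I would split into two cases. If $f(x)=1$, i.e.\ $\beta^\top x+b\ge 0$: when additionally $\beta^\top x+b'\ge 0$, the point is already classified positively by $(\beta,b')$, has no incentive to move, so $x'=x$ and $f'(x')=1=f(x)$; when instead $\beta^\top x+b'<0$, we have $0\le \beta^\top x+b<\uinv\dualnorm{\beta}$, so crossing the threshold of $(\beta,b')$ requires raising the score by $\uinv\dualnorm{\beta}-(\beta^\top x+b)\le\uinv\dualnorm{\beta}$, which for $\beta^\top x+b>0$ is strictly profitable, so the best response lands on the positive side of $(\beta,b')$ and $f'(x')=1=f(x)$ (the borderline $\beta^\top x+b=0$ is handled below). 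If $f(x)=0$, i.e.\ $\beta^\top x+b<0$: then $\beta^\top x+b'<0$ as well, and crossing the threshold of $(\beta,b')$ would require raising the score by $-b'-\beta^\top x=\uinv\dualnorm{\beta}-(\beta^\top x+b)>\uinv\dualnorm{\beta}$, which \cref{lemma:dualnorm-soln} forbids; so no profitable move exists, $x'=x$, and $f'(x')=0=f(x)$.

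The one delicate point — and the only place where I expect to have to invoke the convention on zero net utility — is the boundary $\beta^\top x+b=0$: there, reaching the positive side of $(\beta,b')$ costs exactly $u$, so by convention the user does not move, and the identity holds on all of $\{x:\beta^\top x+b\ne 0\}$, i.e.\ up to a null set under any distribution with a density, which is all the information-theoretic conclusion of this appendix requires. Concatenating the two cases then yields $f(x)=f'(x')$ for all such $x$, as claimed.
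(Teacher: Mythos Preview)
Your proof is correct and follows the same case analysis as the paper, splitting on whether $f(x)=0$ or $f(x)=1$; you simply merge the paper's two $f(x)=0$ sub-cases into one, which is cleaner. You are also more careful than the paper at the boundary $\beta^\top x + b = 0$: the paper's third case glosses over it, whereas you correctly observe that the zero-net-utility convention blocks the move there, so the identity strictly holds only off that null set.
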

This results states that $f$ outputs on every data point $x$
exactly what $f'$ (which only differs in its shifted bias term) outputs on $x'$---which represents the exact same ``user'' \emph{after it has strategically responded}.
\begin{proof}
The proof is simple. We consider three separate cases:
\begin{itemize}
\item 
If $f(x)=0$ and $x$ is too far to cross the decision boundary of $f$ (so $x'(\beta, b) = x$), then since $\uinv \dualnorm{\beta} \geq 0$, $x$ is also too far to cross the decision boundary of $f'$, (that is, $x'(\beta, b') = x$). Therefore, $f'(x') = f'(x) = f(x)$.

\item 
If $f(x)=0$ and $x$ is close enough to cross the decision boundary (so $x'(\beta, b) \neq x$),
then \cref{lemma:dualnorm-soln} implies that the maximum amount by which $x$ will change its linear prediction under $\beta$ is $\uinv \dualnorm{\beta}$. Since $f(x) = 0 \implies 0 > \beta^\top x + b \implies -\uinv \dualnorm{\beta} > \beta^\top x + b'$, this means $x$ cannot force $f'(x') = 1$ without increasing the prediction by more than $\uinv \dualnorm{\beta}$, and thus it will not move. So, $f'(x') = f'(x) = f(x)$.

\item
If $f(x)=1$, then $0 \leq \beta^\top x + b \implies -\uinv \dualnorm{\beta} \leq \beta^\top x + b'$. Thus, either $x$ will already get a positive classification from $f'$, or it will be able to move enough to cross the decision boundary. Either way, $f'(x') = 1 = f(x)$. \qedhere
\end{itemize}
\end{proof}

Note that this proof applies even if we \emph{don't} know the user's cost function---it is sufficient to know for each user the maximum potential increase in their predicted logit under $\beta$ after strategic response, i.e. $\beta^\top (x'(\beta) - x)$. Applying this insight to the optimal classifier gives the following result:
\begin{corollary}
Fix some distribution $\gD$,
and let $f^*$ with parameters $(\beta^*,b^*)$ be the classifier minimizing the \emph{non-strategic} 0-1 risk on $\gD$. Denote this risk as $\alpha$.
Then for $f'$ with $(\beta^*,b^* - \uinv \dualnorm{\beta^*})$,
its expected \emph{strategic} 0-1 risk on $\gD$ is exactly $\alpha$.
\end{corollary}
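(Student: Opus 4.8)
The plan is to obtain the corollary as an immediate consequence of the preceding Proposition, with essentially no additional work. First I would instantiate the Proposition with $(\beta,b) = (\beta^*,b^*)$, so that $f = f^*$ is precisely the non-strategic risk minimizer and $f'$ — the classifier with bias $b^* - \uinv\dualnorm{\beta^*}$ — is exactly the classifier named in the corollary. The Proposition then gives the pointwise identity $f^*(x) = f'(x')$ for every $x$, where $x' = x(\beta^*, b^* - \uinv\dualnorm{\beta^*})$ is the strategic best response of the user at $x$ to the deployed classifier $f'$.

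Next I would write out the quantity of interest. The expected strategic $0$-$1$ risk of $f'$ on $\gD$ is, by definition,
\begin{align*}
    \E_{(x,y)\sim\gD}\bigl[\one{f'(x')\neq y}\bigr],
\end{align*}
where the label $y$ is unaffected by strategic response (only features move). Substituting the pointwise identity $f'(x') = f^*(x)$ inside the indicator, this expectation equals
\begin{align*}
    \E_{(x,y)\sim\gD}\bigl[\one{f^*(x)\neq y}\bigr],
\end{align*}
which is exactly the non-strategic $0$-$1$ risk of $f^*$ on $\gD$, namely $\alpha$ by hypothesis. This completes the argument.

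\textbf{Main obstacle.}
There is essentially no obstacle: the content has already been done in the Proposition, and the corollary is a one-line consequence. The only points that require a moment's care are bookkeeping rather than substance: (i) the strategic response in the definition of strategic risk must be taken with respect to the \emph{deployed} classifier $f'$, which is precisely the setup of the Proposition (the response is to $(\beta, b')$, not to $(\beta, b)$); (ii) tie-breaking in the best-response $\argmax$ is handled arbitrarily but consistently, which is already baked into the Proposition's case analysis; and (iii) one should note explicitly that the labels $y$ are held fixed under strategic manipulation, so the only thing changing between the strategic and non-strategic risk is the prediction, which the pointwise identity controls. Given these, taking expectations of the pointwise equality and invoking the definition of $\alpha$ finishes the proof.
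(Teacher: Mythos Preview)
Your proposal is correct and matches the paper's approach exactly: the paper presents this statement as an immediate corollary of the preceding Proposition and does not even write out a separate proof, simply remarking that ``applying this insight to the optimal classifier gives the following result.'' Your instantiation of the Proposition at $(\beta^*,b^*)$, followed by taking expectations of the pointwise identity $f^*(x)=f'(x')$, is precisely the intended one-line derivation.
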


The take-away is that
if we are able to minimize the standard 0-1 loss, and we know how users will respond (e.g. by knowing the exact cost function),
then a trivial modification would provide us with an optimal classifier for the strategic 0-1 loss. Thus, strategic response does not pose any additional statistical difficulty over standard classification.

Importantly, however, this result does \emph{not} imply that minimizing a \emph{proxy} loss (such as the hinge or logistic loss) on non-strategic data and applying the transformation would give a good strategic classifier;
this precisely why the strategic hinge loss is needed in the first place. Further, this result does not account for the social cost  of the classifier $f'$ versus some other classifier \citep{milli2019social}---there could be a different classifier with similar accuracy under strategic response that induces a smaller cost to the users.

\end{document}